\let\Ginclude@graphics\@org@Ginclude@graphics 
\title[Value Function Approximations via Kernel Embeddings]{Value Function Approximations via Kernel Embeddings for No-Regret Reinforcement Learning}
 \author{%
 \Name{Sayak Ray Chowdhury} \Email{sayak@bu.edu}\\
 \addr Boston University
 \AND
 \Name{Rafael Oliveira} \Email{rafael.oliveira@sydney.edu.au}\\
 \addr The University of Sydney%
}
\newcommand{\norm}[1]{\left\lVert#1\right\rVert}
\newcommand{\esp}{\mathbb{E}}
\newcommand{\prob}[1]{\mathbb{P}\left[{#1}\right]}
\newcommand{\given}{\; \big\vert \;}
\newcommand{\inner}[2]{\left \langle #1, #2 \right \rangle }
\newtheorem{mytheorem}{Theorem}
\newtheorem{mylemma}{Lemma}
\newtheorem{myassumption}{Assumption}
\newtheorem{myremark}{Remark}
\newcommand{\beq}{\begin{equation}}
\newcommand{\eeq}{\end{equation}}
\newcommand{\beqn}{\begin{equation*}}
\newcommand{\eeqn}{\end{equation*}}
\newcommand{\beqa}{\begin{eqnarray}}
\newcommand{\eeqa}{\end{eqnarray}}
\newcommand{\beqan}{\begin{eqnarray*}}
\newcommand{\eeqan}{\end{eqnarray*}}
\newcommand{\argmax}{\mathop{\mathrm{argmax}}}
\newcommand*{\operator}[1]{\operatorname{#1}}
\newcommand*{\cme}{\vartheta}
\newcommand*{\iid}{i.i.d.\xspace}
\newcommand*{\tr}{{\operatorname{tr}}}
\DeclareMathOperator{\HS}{HS}
\newcommand{\CMERL}{\ensuremath{\mathrm{CME}\text{-}\mathrm{RL}}}
\begin{document}

\setlength{\belowdisplayskip}{3pt} \setlength{\belowdisplayshortskip}{3pt}
\setlength{\abovedisplayskip}{3pt} \setlength{\abovedisplayshortskip}{3pt}

\maketitle

\begin{abstract}
We consider the regret minimization problem in reinforcement learning (RL) in the episodic setting.
In many real-world RL environments, the state and action spaces are continuous or very large. Existing approaches establish regret guarantees by either a low-dimensional representation of the stochastic transition model or an approximation of the $Q$-functions. However, the understanding of function approximation schemes for state-value functions largely remains missing. In this paper, we propose an online model-based RL algorithm, namely the CME-RL, that learns representations of transition distributions as embeddings in a reproducing kernel Hilbert space while carefully
balancing the exploitation-exploration tradeoff. We demonstrate the efficiency of our algorithm
by proving a frequentist (worst-case) regret bound that is of order $\tilde{O}\big(H\gamma_N\sqrt{N}\big)$\footnote{ $\tilde{O}(\cdot)$ hides only absolute constant and poly-logarithmic factors.}, where $H$ is the episode length, $N$ is the total number of time steps and $\gamma_N$ is an information theoretic quantity relating the effective dimension of the state-action feature space. Our method bypasses the need for estimating transition probabilities and applies to any domain on which kernels can be defined. It also brings new insights into the general theory of kernel methods for approximate inference and RL regret minimization.
\end{abstract}

% \begin{keywords}
% Model-based RL; Value function approximation; Kernel mean embeddings.
% \end{keywords}

\vspace*{-3mm}
\section{Introduction}

Reinforcement learning (RL) is concerned with learning
to take actions to maximize rewards, by trial and error, in
environments that can evolve in response to actions. A Markov decision process (MDP) \citep{puterman2014markov} is a popular framework to model decision making in RL environments. In the MDP, starting from
an initial observed state, an agent repeatedly (a)
takes an action, (b) receives a reward, and (c) observes the next
state of the MDP. The traditional RL objective is a \emph{search} goal – find a \emph{policy} (a rule to select an action for each state)
with high total reward using as few interactions with the
environment as possible, also known as the
sample complexity of RL \citep{strehl2009reinforcement}. This is, however, quite different
from the corresponding \emph{optimization} goal, where the learner
seeks to maximize the total reward earned from all its decisions,
or equivalently, minimize the \emph{regret} or shortfall in total
reward compared to that of an optimal policy \citep{jaksch2010near}. This objective
is relevant in many practical sequential decision-making
settings in which every decision that is taken carries utility
or value – recommendation systems, sequential investment
and portfolio allocation, dynamic resource allocation in communication
systems etc. In such \emph{online} optimization settings, there is
no separate budget or time devoted to purely exploring the
unknown environment; rather, exploration and exploitation
must be carefully balanced.

\subsection{Related work}
Several studies have considered the task of regret minimization in \emph{tabular} MDPs, in which the state and action spaces are finite, and the value function is represented by a table \citep{jaksch2010near,osband2013more,azar2017minimax,dann2017unifying,jin2018q,efroni2019tight,zanette2019tighter}. The regret bound achieved by these works essentially is proportional to $\sqrt{SAN}$, where $S$ and $A$ denote the
numbers of states and actions, respectively, and $N$ the total number of steps. 
In many practical applications, however, the number of states and actions is enormous. For example, the game of Go has a state space with size $3^{361}$, and the state and action spaces of certain robotics applications can even be continuous. These continuous state and action spaces make RL a challenging
task, especially in terms of generalizing learnt knowledge across unseen states and actions. In such cases, the tabular model suffers from the ``curse of dimensionality" problem. To tackle this issue, the popular ``optimism in the face of uncertainty" principle from \citet{jaksch2010near} has been extended to handle continuous MDPs,
when assuming some Lipschitz-like smoothness or regularity on the rewards and dynamics \citep{ortner2012online,domingues2020regret}.

Another line of work considers \emph{function approximation}, i.e., they use features to parameterize reward and transition models, with the hope that the features
can capture leading structures of the MDP \citep{osband2014model,chowdhury2019online}. The \emph{model-based} algorithms developed in these works assume oracle access to an optimistic planner to facilitate the learning. The optimistic planning step is quite prohibitive and often becomes computationally intractable for continuous state and action spaces. \citet{yang2019reinforcement} consider a low-rank bilinear transition model bypassing the complicated planning step; however, their algorithm potentially needs to compute the \emph{value function} across all states. This suffers an $\Omega(S)$ computational complexity and as a consequence cannot directly
handle continuous state spaces.
\cite{ayoub2020model} consider linear-mixture transition structure
that includes the bilinear model as a special case. However, their algorithm too suffers the 
$\Omega(S)$ computational complexity. To alleviate the computational burden intrinsic to these model-based approaches, a recent body of work parameterizes the \emph{value functions} directly, using $d$-dimensional state-action
feature maps, and develop \emph{model-free} algorithms bypassing the need for fully
learning the reward and transition models \citep{jin2019provably,wang2019optimism,zanette2020frequentist}. Under the assumption that the (action-)value function can be approximated by a linear or a generalized linear function of the feature vectors, these papers develop algorithms with regret bound proportional to $\text{poly}(d)\sqrt{T}$, which is independent of the size of the state and action spaces. \cite{wang2020provably} generalizes this approach by designing an algorithm that works with general (non-linear) value function approximators and prove a similar regret guarantee that depends on the eluder dimension \citep{russo2013eluder} and log-covering number of the
underlying function class.

A few recent works have proposed kernel-based value function approximation algorithms. \cite{yang2020provably} consider kernel and neural function approximations and designed algorithms with regret characterized by intrinsic complexity of the function classes. More closely related to our work, \citet{Domingues2021KBRLfinite} recently proposed a kernel-based RL algorithm via value function approximation. Their main assumption relies on Lipschitz continuity of the reward functions and the state transition kernels. In contrast to their work, we are able to obtain tighter regret bounds by applying typical assumptions in the kernel embeddings literature, which we show are satisfied for a variety of practical systems. Nevertheless, there is a lack of theoretical understanding in designing provably efficient model-based RL algorithms with (non-linear) value function approximation, which we aim to address.

% The most related work to ours is by \cite{ayoub2020model}, which proposes an algorithm, namely \texttt{UCRL-VTR}, for model-based
% RL without any structural assumptions, and it is based on the upper confidence RL and value-targeted regression principles. 

\subsection{Contributions}
In this work, we revisit function approximation in RL by modeling
the value functions as elements of a reproducing kernel Hilbert space (RKHS) \citep{Scholkopf2002} compatible with a (possibly infinite dimensional) state feature map. 
The main motivation behind this formulation is that the conditional expectations of any function in the RKHS become a linear operation, via the RKHS inner product with an appropriate distribution embedding, known as the \emph{conditional mean embedding} \citep{Muandet2016}.
In recent years, conditional mean embeddings (CMEs) have found extensive applications in many machine learning tasks \citep{song2009hilbert,song2010hilbert,song2010nonparametric,Song2013,fukumizu2008kernel,fukumizu2009kernel, Hsu2019, chowdhury2020active}. The foremost advantage of CMEs in our setup is that one can directly compute conditional expectations of the value functions based only on the observed data, since the alternative approach of estimating the transition probabilities as an intermediate step scales poorly with the dimension of the state space \citep{grunewalder2012modelling}. The convergence of conditional mean estimates to the true embeddings in the RKHS norm has been established by \citet{grunewalder2012modelling} assuming access to \emph{independent and identically distributed} (\iid) transition samples (the ``simulator" setting). However, in the online RL environment  like the one considered in this work, one collects data based on past observations, and hence the existing result fails to remain useful. Against this backdrop, we make the following contributions:
\begin{itemize}
    \item In the online RL environment, we derive a concentration inequality for mean embedding estimates of the transition distribution around the true embeddings as a function of the uncertainties around these estimates (\autoref{lem:concentration}). This bound not only serves as a key tool in designing our model-based RL algorithm but also is of independent interest.
    
    \item Focusing on the value function approximation in the RKHS setting, we present the first model based RL algorithm, namely the \emph{Conditional Mean Embedding RL} (\CMERL), that is provably efficient in  regret performance and does not require any additional oracle access or stronger computational assumptions (\autoref{alg:main}). Concretely, in the general setting of an episodic MDP, we prove that \CMERL\ enjoys a regret bound of $\tilde{O}(H\gamma_N\sqrt{N})$, where $H$ is the length of each episode, $\gamma_N$ is a complexity measure relating the effective dimension of the RKHS compatible with the state-action features (\autoref{thm:cumulative-regret-final}). 
    
    \item Our approach is also robust to the RKHS modelling assumption: when the value functions are not elements of the RKHS, but $\zeta$-close to some RKHS element in the $\ell_\infty$ norm, then (a modified version of) \CMERL\ achieves a $\tilde{O}(H\gamma_N\sqrt{N} +\zeta N)$ regret, where the linear regret term arises due to the function class misspecification (\autoref{thm:cumulative-regret-final-approx}).
\end{itemize}

% Kernel methods are well known
% to be powerful enough to capture nonlinearity and high dimensionality in many machine learning tasks \citep{Scholkopf2002}. 
% These have been extensively used to model non-linear reward functions in the multi-armed bandit (degenerate stateless MDPs) setting \citep{srinivas2009gaussian,chowdhury2017kernelized,durand2018streaming}. For MDPs, \citet{chowdhury2019online} and \citet{yang2019reinforcement} capture non-linearity in transition dynamics using the kernel structure.
% Conditional mean embeddings have been successfully applied to many machine learning tasks such as hidden Markov models \citep{}, non-parametric graphical models \citep{}, modelling transition dynamics in MDPs \citep{}, subspace selection \citep{} and conditional independence testing \citep{}. We refer the interested reader to the monograph by 

% The most commonly used function approximators in this regard are deep neural networks which have achieved remarkable empirical success. In the theory front, recently, there has been a great interest in designing and analyzing algorithms for RL with linear function approximation . 

\vspace*{-3mm}
\section{Preliminaries}
\paragraph{Notations} We begin by introducing some notations. 
% We denote by $\cC_b(\cX)$ the vector space of real-valued continuous and bounded functions $f:\cX \to \Real$. 
We let $\cH$ be a arbitrary Hilbert space with inner product $\inner{\cdot}{\cdot}_{\cH}$ and corresponding norm $\norm{\cdot}_{\cH}$. When $\cG$ is another Hilbert space, we denote by $\cL(\cG,\cH)$ the Banach space of bounded linear operators from $\cG$ to $\cH$, with the operator norm $\norm{\operator A} := \sup_{\norm{g}_{\cG}=1} \norm{\operator A g}_{\cH}$. We let $\HS(\cG, \cH)$ denote the subspace of operators in $\cL(\cG,\cH)$ with bounded Hilbert-Schmidt norm $\norm{\operator A}_{\HS}:=\big(\sum_{i,j=1}^{\infty}\inner{f_i}{\operator A g_j}_{\cH}^2\big)^{1/2}$, where the $f_i$'s form a complete orthonormal system (CONS) for $\cH$ and the $g_j$'s form a CONS for $\cG$. In the case $\cG=\cH$, we set $\cL(\cH):=\cL(\cH,\cH)$. We denote by $\cL_{+}(\cH)$ the set of all bounded, positive-definite linear operators on $\cH$, i.e., $\operator A \in \cL_{+}(\cH)$ if, for any $h \in \cH$, $\inner{h}{\operator A h}_\cH > 0$.

% $\cL(\cH_\phi,\cH_\psi)$ denotes the space of all bounded linear operators from $\cH_\phi$ to $\cH_\psi$, which is, upto an isometry, equal to the product RKHS $\cH_\psi\otimes\cH_\phi$.
% $\norm{\cdot}$ denotes the operator norm: $\norm{\Theta} := \sup_{g \in \cH_\phi:g\neq 0} \frac{\norm{\Theta g}_{\cH_\psi}}{\norm{g}_{\cH_\phi}}$. $\cL_{+}(\cH_\phi)$ denotes the set of all bounded positive definite linear operators. $\norm{\cdot}_{\HS}$ denotes the Hilbert-Schmidt norm: $\norm{\Theta}^2_{\HS}:=\sum_{i,j=1}^{\infty}\inner{f_i}{\Theta g_j}_{\cH_\psi}^2$, where the $f_i$'s form a complete orthonormal system (CONS) for $\cH_\psi$ and the $g_j$'s form a CONS for $\cH_\phi$. The vector space $\HS(\cH_\phi, \cH_\psi)$ denotes the subspace of operators in $\cL(\cH_\phi,\cH_\psi)$ with bounded Hilbert-Schmidt norm.
 
\paragraph{Regret minimization in finite-horizon episodic MDPs}
We consider episodic reinforcement learning in a finite-horizon Markov decision process (MDP) of episode length $H$ with (possibly infinite) state and action spaces $\cS$ and $\cA$, respectively, reward function $R:\cS \times \cA \ra [0,1]$, and transition probability measure $P : \cS \times \cA \ra \Delta(\cS)$, where $\Delta(\cS)$ denotes the probability simplex on $\cS$.
The learning agent interacts with the MDP in episodes and, at each episode $t$, a trajectory $(s_1^t,a_1^t,r_1^t,\ldots,s_H^t,a_H^t,r_H^t,s_{H+1}^t)$ is generated. Here $a_h^t$ denotes the action taken at state $s_h^t$, $r_h^t:=R(s_h^t,a_h^t)$ denotes the immediate reward, and  $s_{h+1}^t\sim P(\cdot | s_h^t,a_h^t)$ denotes the random next state. The initial state $s_1^t$ is assumed to be fixed and history independent, and can even be possibly chosen by an adversary. The episode terminates when $s_{H+1}^t$ is reached, where the
agent cannot take any action and hence receives no reward. The actions are chosen following some policy $\pi=(\pi_1,\ldots,\pi_H)$, where each $\pi_h$ is a mapping from the state space $\cS$ into the action space $\cA$. The agent would like to find a policy $\pi$ that maximizes the long-term expected cumulative reward starting from every state $s \in \cS$ and every step $h \in [H]$, defined as:
\begin{align*}
V_{h}^{\pi}(s):= \mathbb{E}\left[\sum\nolimits_{j=h}^{H}R\left(s_j,\pi_j(s_j)\right)\given s_h=s\right]~. 
\end{align*}
We call $V_{h}^{\pi}:\cS \ra \Real$ the value function of policy $\pi$ at step $h$.
Accordingly, we also define the action-value function, or $Q$-function, $Q_h^{\pi}:\cS \times \cA \ra \Real$ as:
\beqn
Q_h^{\pi}(s,a) := R(s,a) + \mathbb{E}\left[\sum\nolimits_{j=h+1}^{H}R\left(s_j,\pi_j(s_j)\right)\given s_h=s,a_h=a\right]~,
\eeqn
which gives the expected value
of cumulative rewards starting from a state-action pair at the $h$-th step and following the
policy $\pi$ afterwards. Note that $V_h^{\pi}(s) = Q_h^{\pi}(s,\pi_h(s))$ and it satisfies the Bellman equation:
\beq
\label{eq:Bellman}
V_h^{\pi}(s)=R(s,\pi_h(s))+\esp_{X \sim P(\cdot |s,\pi_h(s))}\left[V_{h+1}^{\pi}(X)\right], \quad \forall h \in [H]~,
\eeq
with $V_{H+1}^{\pi}(s) = 0$ for all $s \in \cS$.
We denote by $\pi^\star$ an optimal policy satisfying:
\begin{align*}
V^{\pi^\star}_{h}(s)=\max_{\pi \in \Pi}V^{\pi}_{h}(s), \quad \forall s \in \cS,\; \forall h \in [H],
\end{align*}
where $\Pi$ is the set of all non-stationary policies. Since the episode length is finite, such a policy exists when the action space $\cA$ is large but finite \citep{puterman2014markov}. We denote the optimal value function by $V^{\star}_{h}(s):=V_h^{\pi^\star}(s)$. We also denote the optimal action-value function (or $Q$-function) as $Q_h^\star(s,a) = \max_{\pi} Q_h^\pi(s,a)$. It is easily shown that the optimal action-value function satisfies the Bellman optimality equation:
\beq
\label{eq:opt-Bellman}
Q_h^{\star}(s,a):=R(s,a)+\esp_{X \sim P(\cdot |s,a)}\left[V^\star_{h+1}(X)\right], \quad \forall h \in [H]~,
\eeq
with $V^\star_h(s)=\max_{a \in \cA}Q^\star_h(s,a)$. This implies that the optimal policy is the greedy policy with respect to the optimal action-value functions. Thus, to find the optimal policy $\pi^\star$, it suffices to estimate the optimal action-value functions $(Q_h^\star)_{h \in [H]}$.

The agent aims to learn the optimal policy by interacting
with the environment during a set of episodes. We measure performance of the agent by the cumulative (pseudo) regret accumulated over $T$ episodes, defined as:
\begin{align*}
\cR(N):=\sum\nolimits_{t=1}^{T} \left[V^{\star}_{1}(s_1^t)-V^{\pi^t}_{1}(s_1^t)\right],
\end{align*}
where $\pi^t$ is the policy chosen by the agent at episode $t$ and $N=TH$ is the total number of steps. The regret measures the quantum of
reward that the learner gives up by not knowing the MDP
in advance and applying the optimal policy $\pi^\star$ from the start.
We seek algorithms that attain sublinear regret $\cR(N)=o(N)$ in the number of steps
they face, since, for instance, an algorithm that does not adapt
its policy selection behavior depending on past experience
can easily be seen to achieve linear ($\Omega(N)$) regret \citep{lai1985asymptotically}.

% Intuitively, this is a measure of the cumulative shortfall in values due to not knowing the optimal policy $\pi^\star$ beforehand and instead using some policy $\pi^t$ in episode $t$ starting from some fixed initial state $s_1^t$. We seek algorithms with regret that is sublinear in $N$, which demonstrates the agent's ability to act near optimally.

\paragraph{Value function approximation in episodic MDPs}
A very large or possibly infinite state and action space makes reinforcement learning a challenging task. To obtain sub-linear regret guarantees, it is necessary to posit some regularity assumptions on the underlying function class. In this paper, we use reproducing kernel Hilbert spaces to model the value functions. Let $\cH_\psi$ and $\cH_\phi$ be two RKHSs with continuous positive semi-definite kernel functions $k_\psi:\cS \times \cS \to \Real_{+}$ and $k_\phi:(\cS \times \cA) \times (\cS \times \cA) \to \Real_{+}$, with corresponding inner products $\inner{\cdot}{\cdot}_{\cH_\psi}$ and $\inner{\cdot}{\cdot}_{\cH_\phi}$, respectively. There exist feature maps $\psi:\cS \to \cH_\psi$ and $\phi:\cS \times \cA \to \cH_\phi$ such that $k_\psi(\cdot,\cdot)=\inner{\psi(\cdot)}{\psi(\cdot)}_{\cH_\psi}$ and $k_\phi(\cdot,\cdot)=\inner{\phi(\cdot)}{ \phi(\cdot)}_{\cH_\phi}$, respectively \citep{steinwart2008support}.

The weakest assumption one can pose on the value functions is realizability, which posits that the optimal value functions $(V_h^\star)_{h \in [H]}$ lie in the RKHS $\cH_\psi$, or at least are well-approximated by $\cH_\psi$. For stateless MDPs or multi-armed
bandits where $H=1$, realizability alone suffices for provably efficient algorithms \citep{abbasi2011improved,chowdhury2017kernelized}. But it does not seem to be sufficient when $H > 1$, and in these settings it is common to make stronger assumptions \citep{jin2019provably,wang2019optimism,wang2020provably}. Following these works, our main 
assumption is a closure property
for all value functions in the following class:
\begin{align}\label{eq:function-class}
    \cV := \left\lbrace s \mapsto \min \left\lbrace H, \max_{a\in\cA} \left\lbrace R(s,a) + \inner{\phi(s,a)}{\mu}_{\cH_\phi} + \eta \sqrt{\inner{\phi(s,a)}{\Sigma^{-1}\phi(s,a)}_{\cH_\phi}} \right\rbrace \right\rbrace \right\rbrace,
\end{align}
where $0 < \eta < \infty$, $\mu \in \cH_\phi$ and $\Sigma \in \cL_{+}(\cH_\phi)$ are the parameters of the function class. 

\begin{myassumption}[Optimistic closure] \label{ass:opt-closure}
For any $V \in \cV$ (cf. \autoref{eq:function-class}), we have $V \in \cH_\psi$. Furthermore, for a positive constant $B_V$, we have $\norm{V}_{\cH_\psi} \leq B_V$.
\end{myassumption}
While this property seems quite strong, we note
that related closure-type assumptions are common in the literature. We will relax this assumption later in \autoref{sec:misspec}.
In addition, our results do not require explicit knowledge of $\cH_\psi$ nor its kernel $k_\psi$, as we will only interact with elements of $\cV$ via point evaluations and RKHS norm bounds.

% (In our notation, the existing assumptions essentially translates to the following definition.
% \begin{mydefinition}[Optimistic closure in $\cH_\phi$] for any $V \in \cV$, the function $(s,a)\mapsto \operatorhbb{E}_{X \sim P(\cdot|s,a)}[V(X)]$ lies in the RKHS $\cH_\phi$.
% \end{mydefinition}
% We will shortly discuss in detail the relation between these two properties.)

\vspace*{-3mm}
\section{RKHS embeddings of transition distribution} 
% We assume that the kernels are bounded, i.e., $\sup_s\norm{\psi(s)}_{\cH_\psi} \leq B_\psi$ and $\norm{\phi(s,a)}_{\cH_\phi} \leq B_\phi$, $\forall s\in \cS$, $\forall a \in \cA$.
In order to find an estimate of the optimal value function, it is imperative to estimate the conditional expectations of the form $\mathbb{E}_{X \sim P(\cdot|s,a)}[f(X)]$. In the model-based approach considered in this work, we do so by estimating the mean embedding of the conditional distribution $P(\cdot|s,a)$, which is the focus of this section.
For a bounded kernel\footnote{Boundedness of a kernel holds for any stationary kernel, e.g., the \emph{squared exponential} kernel and the \textit{Mat$\acute{e}$rn} kernel \citep{Rasmussen2006}.} $k_\psi$ on the state space $\cS$, the mean embedding of the conditional distribution $P(\cdot|s,a)$ in $\cH_\psi$ is an element $\cme_P^{(s,a)} \in \cH_\psi$ such that:
\begin{equation}
    \forall f \in \cH_\psi,\quad  \esp_{X \sim P(\cdot | s,a)}[f(X)]= \inner{f}{ \cme_P^{(s,a)}}_{\cH_\psi}~.
    \label{eq:cme}
\end{equation}
% the linear functional $f \mapsto \esp_{X \sim P(\cdot | s,a)}[f(X)]$ is also bounded. Then, for each state-action pair $(s,a)$, there exists an element $\cme_P^{(s,a)} \in \cH_\psi$ such that
% We call $\cme_P^{(s,a)}$ the kernel mean embedding of the conditional distribution $P(\cdot|s,a)$ and 
The mean embedding can be explicitly expressed as a function:
\beqn
\cme_P^{(s,a)}(y)=\esp_{X \sim P(\cdot | s,a)}\left[k_\psi(X,y)\right],
\eeqn
for all $y \in \cS$. 
If the kernel $k_\psi$ is characteristic, such as a stationary kernel, then the mapping $P(\cdot|s,a)\mapsto\cme_P^{(s,a)}$ is injective, defining a one-to-one relationship between transition distributions and elements of $\cH_\psi$ \citep{Sriperumbudur2011}. 
Following existing works \citep{song2009hilbert,grunewalder2012modelling}, we now make a smoothness assumption
on the transition distribution. 
\begin{myassumption}\label{ass:dist-smoothness}
For any $f\in\cH_\psi$,
the function $(s,a)\mapsto\! \esp_{X \sim P(\cdot | s,a)}\left[f(X)\right]$ lies in $\cH_\phi$.
\end{myassumption}
Under \autoref{ass:dist-smoothness}, the mean embeddings admit a linear representation in state-action features via the \emph{conditional embedding operator}
$\Theta_P \in \cL(\cH_\phi,\cH_\psi)$ such that:
\begin{equation}
     \forall (s,a) \in \cS \times \cA, \quad\cme_P^{(s,a)} = \Theta_P \phi(s,a)~.
    \label{eq:CMO}
\end{equation}
\autoref{ass:dist-smoothness} always holds for finite domains with characteristic kernels. Though it is not necessarily true for continuous domains, we note that the CMEs for classical linear \citep{abbasi2011regret} and non-linear \citep{kakade2020information} dynamical systems satisfy this assumption.

\subsection{Sample estimate of conditional mean embedding} At the beginning of each episode $t$, given the observations $\cD_{t}:=(s_h^\tau,a_h^\tau,s_{h+1}^\tau)_{\tau < t,h\leq H}$ until episode $t-1$, we consider a sample based estimate of the conditional embedding operator.
This is achieved by solving the following ridge-regression problem:
\beq
    \min_{\Theta \in \HS(\cH_\phi, \cH_\psi)}\sum\nolimits_{\tau < t,h\leq H}\norm{\psi(s_{h+1}^\tau)-\Theta\phi(s^\tau_h,a^\tau_h)}_{\cH_\psi}^2+\lambda\norm{\Theta}_{\HS}^2~,
    \label{eq:operator-regression}
\eeq
where $\lambda > 0$ is a regularising constant. The solution of \autoref{eq:operator-regression} is given by:
\begin{equation}
    \hat\Theta_t = \sum\nolimits_{\tau < t,h\leq H}\psi(s^\tau_{h+1})\otimes\phi(s^\tau_h,a^\tau_h)\left(\hat{\operator C}_{\phi,t} +\lambda  \operator I\right)^{-1}~,
   \label{eq:cmo-estimate}
\end{equation}
where $\hat{\operator C}_{\phi,t}:=\sum_{\tau < t,h\leq H}\phi(s^\tau_h,a^\tau_h)\otimes \phi(s^\tau_h,a^\tau_h)$.
To simplify notations, we now let $n=(t-1)H$ denote the total number of steps completed at the beginning of episode $t$.
We denote a vector $  k_{\phi,t}(s,a) \in \Real^n$ and a matrix $\operator K_{\phi,t} \in \Real^{n \times n}$ by:
\beqn
  k_{\phi,t}(s,a):=\left[k_\phi\left((s_h^\tau,a_h^\tau),(s,a)\right)\right]_{\tau < t,h\leq H}, \quad \operator K_{\phi,t}:=[k_{\phi}((s_h^\tau,a_h^\tau),(s_{h'}^{\tau'},a_{h'}^{\tau'}))]_{\tau,\tau'<t,h,h'\leq H}~.
\eeqn
Then, via \autoref{eq:cmo-estimate}, the conditional mean embeddings can be estimated as
\begin{equation}
    \hat\cme_t^{(s,a)}=\hat\Theta_t\phi(s,a)=\sum\nolimits_{\tau < t,h \leq H}\left[  \alpha_t(s,a)\right]_{(\tau,h)}\psi(s^\tau_{h+1})~,
    \label{eq:cme-estimate}
\end{equation}
where we define the weight vector $  \alpha_t(s,a):=(\operator K_{\phi,t}+\lambda  \operator I)^{-1}  k_{\phi,t}(s,a)$.

\vspace*{-2mm}
\subsection{Concentration of mean embedding estimates}

% The conditional mean embedding operator $\Theta_P:\Hilbert_\phi\to\Hilbert_\psi$ for the state transition probability distribution $P$ is the main unknown element affecting the performance of the derived policy in \autoref{eq:action-selection}. 

In this section, we show that for any state-action pair $(s,a)$, the CME estimates $\hat\cme_t^{(s,a)}$ lies within a high-probability confidence region around the true embedding $\cme_P^{(s,a)}$. This eventually translates, via \autoref{eq:cme}, to a concentration property of $\inner{f}{\hat\cme_t^{(s,a)}}_{\cH_\psi}$ around $\esp_{X \sim P(\cdot | s,a)}[f(X)]$ for any $f \in \cH_\psi$. The uncertainty of CME estimates can be characterized by the variance estimate
$\sigma^2_{\phi,t}(s,a) := \lambda \inner{\phi(s,a)}{\operator M_t^{-1}\phi(s,a)}_{\cH_\phi}$, where $\operator M_t:= \hat{\operator C}_{\phi,t}+\lambda  \operator I$. To see this, note that an application of Sherman-Morrison formula yields:
\beq
\sigma_{\phi,t}^2(s,a) := k_{\phi}((s,a),(s,a))-  k_{\phi,t}(s,a)^\top (\operator K_{\phi,t}+\lambda  \operator I)^{-1}  k_{\phi,t}(s,a)~,
\label{eq:predictive-variance}
\eeq
which is equivalent to the predictive variance of a Gaussian process (GP) \citep{Rasmussen2006}. Although a sample from a GP is usually not an element of the RKHS defined by its kernel \citep{Lukic2001}, the following result allows us to use $\sigma_{\phi,t}^2(s,a)$ as an error measure. %as defined above should only be seen as a function quantifying the local approximation error on $P(\cdot|s,a)$ for a given state-action pair $(s,a)\in\cS\times\cA$.

\vspace*{-2mm}
\begin{mytheorem}[Concentration of the conditional embedding operator]\label{lem:concentration}
% Suppose $(\cF^t_h)_{h=0}^{H}$ is a filtration such that for each h, (i) $s_{h+1}^t$ is $\cF_h^t$-measurable, (ii) $(s_h^t,a_h^t)$ is $\cF^t_{h-1}$ measurable, and (iii) given $(s_h^t,a_h^t)$, $s_{h+1}^t \sim P(\cdot|s_h^t,a_h^t)$. 
Suppose that $\sup_{s\in \cS}\sqrt{k_\psi(s,s)} \leq B_\psi$. Then, under \autoref{ass:dist-smoothness}, for any $\lambda >0$ and $\delta \in (0,1]$,
% \beqn
% \prob{\forall t \in \Nat,\;\; \forall (s,a)\in \cS\times \cA,\;\;\norm{\cme_{P}^{(s,a)}-\hat\cme_t^{(s,a)}}_{\cH_\psi}  \leq \beta_t(\delta) \sigma_{\phi,t}(s,a)} \geq 1-\delta~,
% \eeqn
\beqn
\prob{\forall t \in \Nat,\;\; \norm{\big(\Theta_P-\hat\Theta_t\big)\operator M_t^{1/2}}  \leq \beta_t(\delta) } \geq 1-\delta~,
\eeqn
where $\beta_t(\delta):= \sqrt{2\lambda B_P^2+ 256(1+\lambda^{-1})\log (\det( \operator I+\lambda^{-1}\operator K_{\phi,t})^{1/2})\log(2t^2H/\delta)}$, $B_P \geq \norm{\Theta_P}_{\HS}$.
% $\beta_t(\delta):=B_P\sqrt{\lambda}+2B_{\psi}\sqrt{2\log(1/\delta)+\log \det( \operator I+\lambda^{-1}\operator K_{\phi,t})}$\,. 
\end{mytheorem}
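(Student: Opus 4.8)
The plan is to split the estimation error into a deterministic ``bias'' term and a martingale ``noise'' term, bound the bias by elementary operator inequalities, and control the noise by a self-normalized concentration argument for a Hilbert--Schmidt-valued martingale. First I would write the data as a single stream, $\phi_i:=\phi(s_h^\tau,a_h^\tau)$ and $\psi_i:=\psi(s_{h+1}^\tau)$ indexed by $i\leftrightarrow(\tau,h)$ with $\tau<t$, $h\le H$. By the reproducing property we have $\psi(X)=k_\psi(X,\cdot)$, so \autoref{eq:cme} and \autoref{eq:CMO} give $\esp[\psi_i\mid\mathcal{F}_{i-1}]=\cme_P^{(s_h^\tau,a_h^\tau)}=\Theta_P\phi_i$, where $\mathcal{F}_{i-1}$ is the history up to the selection of $a_h^\tau$. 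Hence $\varepsilon_i:=\psi_i-\Theta_P\phi_i$ is a $\cH_\psi$-valued martingale difference sequence, bounded by $\norm{\varepsilon_i}_{\cH_\psi}\le 2B_\psi$ since both $\psi_i$ and $\cme_P^{(s_h^\tau,a_h^\tau)}$ have norm at most $B_\psi$ (the latter by Cauchy--Schwarz on $k_\psi$). Substituting $\psi_i=\Theta_P\phi_i+\varepsilon_i$ into \autoref{eq:cmo-estimate} and using $\hat{\operator C}_{\phi,t}=\operator M_t-\lambda\operator I$, a short computation gives the exact identity
\begin{equation*}
(\Theta_P-\hat\Theta_t)\,\operator M_t^{1/2}=\lambda\,\Theta_P\operator M_t^{-1/2}-E_t\operator M_t^{-1/2},\qquad E_t:=\sum_i\varepsilon_i\otimes\phi_i .
\end{equation*}

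The bias term is immediate: since $\operator M_t\succeq\lambda\operator I$ we have $\norm{\operator M_t^{-1/2}}\le\lambda^{-1/2}$, so $\lambda\norm{\Theta_P\operator M_t^{-1/2}}\le\lambda^{1/2}\norm{\Theta_P}\le\lambda^{1/2}B_P$, which after the triangle inequality and $(a+b)^2\le 2a^2+2b^2$ produces the $2\lambda B_P^2$ inside $\beta_t(\delta)^2$. The whole difficulty is therefore to show, uniformly over $t\in\Nat$ and with probability $1-\delta$, that $\norm{E_t\operator M_t^{-1/2}}^2\lesssim(1+\lambda^{-1})\,\log\det(\operator I+\lambda^{-1}\operator K_{\phi,t})^{1/2}\,\log(t^2H/\delta)$, with the constant $B_\psi$ (an absolute constant for the normalized stationary kernels of the footnote) absorbed into the numerical factor $256$. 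Here $E_t\operator M_t^{-1/2}$ is a \emph{self-normalized, operator-valued} martingale: $E_t$ is a martingale in $\HS(\cH_\phi,\cH_\psi)$, but the normalizer $\operator M_t$ is built from the entire trajectory up to episode $t$ and is \emph{not} predictable.

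To control it I would argue in two steps. First, the predictable variance: testing against a fixed unit $f\in\cH_\psi$ reduces the object to the $\cH_\phi$-valued martingale $E_t^\ast f=\sum_i\inner{\varepsilon_i}{f}_{\cH_\psi}\phi_i$ with conditionally centered, $2B_\psi$-bounded scalar increments $\inner{\varepsilon_i}{f}_{\cH_\psi}$. Self-normalizing each increment by the \emph{predictable} $\operator M_{i-1}^{-1}$ and invoking the elliptical-potential (log-determinant) lemma bounds the accumulated quadratic variation by $\sum_i\norm{\phi_i}_{\operator M_{i-1}^{-1}}^2\le 2(1+\lambda^{-1})\log\det(\operator I+\lambda^{-1}\operator K_{\phi,t})$; this is exactly where the information-gain factor $\gamma_t=\log\det(\operator I+\lambda^{-1}\operator K_{\phi,t})$ and the $(1+\lambda^{-1})$ normalization enter. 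Second, a Freedman/Bernstein-type tail inequality converts this bounded quadratic variation into the \emph{multiplicative} tail $\sqrt{\gamma_t\log(1/\delta)}$ (explaining why $\beta_t$ carries the product $\log\det\cdot\log(1/\delta)$ rather than a sum), and a union bound over episodes supplies the $t^2$, with the $H$ and the passage from $\operator M_{i-1}$ back to $\operator M_t$ accounting for the remaining bookkeeping in $\log(2t^2H/\delta)$.

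The hard part will be upgrading the fixed-direction estimate to the operator norm $\sup_{\norm{f}_{\cH_\psi}\le1}$ while keeping the effective dimension at $\gamma_t$ and not at the ambient sample size $n=(t-1)H$. A naive $\epsilon$-net over the unit ball of $\cH_\psi$ (equivalently of $\mathrm{span}\{\varepsilon_i\}$) would inflate the bound by a factor $n$ and is far too crude. Instead I would either (a) run the method of mixtures directly in $\HS(\cH_\phi,\cH_\psi)$, placing an operator-valued Gaussian prior so that integrating out the mixing variable simultaneously removes the supremum over $f$ and regenerates the $\det$-factor intrinsically; or (b) invoke an intrinsic-dimension matrix/operator Bernstein inequality for Hilbert--Schmidt martingales, whose intrinsic dimension is the effective dimension $\tr(\hat{\operator C}_{\phi,t}\operator M_t^{-1})\le\gamma_t$. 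In either route, the genuinely delicate step is reconciling the non-predictable normalizer $\operator M_t$ with the martingale structure --- replacing $\operator M_t$ by the predictable $\operator M_{i-1}$ at the cost of only a constant factor --- so that the variance control of step one and the tail bound of step two fit together and yield $\beta_t(\delta)$.
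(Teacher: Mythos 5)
Your setup is sound and matches the paper's starting point: the same martingale noise $\epsilon_h^t=\psi(s_{h+1}^t)-\Theta_P\phi(s_h^t,a_h^t)$, the same exact bias/noise identity for $(\Theta_P-\hat\Theta_t)\operator{M}_t^{1/2}$ (your bias bound correctly produces the $2\lambda B_P^2$ term), and the same two ingredients for the noise — the elliptical-potential/log-determinant control of the accumulated variance and a union bound over episodes. However, the proposal stops exactly where the proof has to start. The bound $\norm{E_t\operator{M}_t^{-1/2}}^2\lesssim(1+\lambda^{-1})\log\det(\operator I+\lambda^{-1}\operator K_{\phi,t})\log(t^2H/\delta)$ \emph{is} the theorem, and you offer two candidate routes for it — a method of mixtures over $\HS(\cH_\phi,\cH_\psi)$, or an intrinsic-dimension operator Bernstein inequality — without carrying either out. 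Both routes face precisely the obstruction you yourself flag (the normalizer $\operator{M}_t$ is built from the same data as the martingale $E_t$, so it is not predictable), and neither is an off-the-shelf citation in this operator-valued, infinite-dimensional setting. As written, the central concentration step is asserted, not proven: this is a genuine gap.

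For contrast, the paper closes this gap with an elementary mechanism that sidesteps both difficulties you name. It tracks the squared \emph{Hilbert--Schmidt} norm $z_{t,h}=\norm{(\hat\Theta_{t,h}-\Theta_P)\operator{M}_{t,h}^{1/2}}_{\HS}^2$ step by step and applies the Sherman--Morrison formula to peel off one rank-one update at a time, yielding the recursion
\begin{equation*}
z_{t,h}\;\le\; z_{t,h-1}\;+\;2\,\frac{\inner{(\hat\Theta_{t,h-1}-\Theta_P)\phi(s_h^t,a_h^t)}{\epsilon_h^t}_{\cH_\psi}}{1+\lambda^{-1}\sigma^2_{\phi,t,h-1}(s_h^t,a_h^t)}\;+\;\norm{\epsilon_h^t}_{\cH_\psi}^2\,\frac{\lambda^{-1}\sigma^2_{\phi,t,h-1}(s_h^t,a_h^t)}{1+\lambda^{-1}\sigma^2_{\phi,t,h-1}(s_h^t,a_h^t)}~.
\end{equation*}
Since the HS norm dominates the operator norm, no supremum over directions $f$ is ever needed, so your ``upgrade to the operator norm'' problem disappears; and since Sherman--Morrison automatically normalizes each increment by the \emph{predictable} $\operator{M}_{t,h-1}$, the non-predictable-normalizer problem is resolved structurally rather than by a post-hoc comparison. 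The remaining subtlety — the martingale increment involves $\hat\Theta_{t,h-1}-\Theta_P$, whose size is exactly what one is trying to bound — is handled by a truncation trick: each increment is multiplied by the indicator $\cE_{t,h-1}$ that the claimed bound held at all previous steps, which makes the truncated increments bounded by $2\beta_{t,h-1}(\delta)\lambda^{-1}\sigma_{\phi,t,h-1}(s_h^t,a_h^t)$, so plain Azuma--Hoeffding applies; an induction over $(t,h)$ then simultaneously removes the truncation and establishes $z_{t,h}\le\beta_{t,h}^2(\delta)$. If you wish to salvage your own plan, your route (a) can in principle be made rigorous as an operator-valued analogue of self-normalized vector bounds, but you would need to construct the mixture and prove the associated supermartingale property yourself; the recursion-plus-truncation-plus-induction argument is considerably shorter and is what the paper actually does.
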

\autoref{lem:concentration} implies a concentration inequality for the CME estimates, since, for all $t\geq 1$: % $\norm{\cme_{P}^{(s,a)}-\hat\cme_t^{(s,a)}}_{\cH_\psi} \leq \norm{(\Theta_{P}-\hat\Theta_t)\operator M_t^{1/2}}\norm{\phi(s,a)}_{\operator M_t^{-1}}$, which we use to derive our framework.
\beqn
\norm{\cme_{P}^{(s,a)}-\hat\cme_t^{(s,a)}}_{\cH_\psi} \leq \norm{(\Theta_{P}-\hat\Theta_t)\operator M_t^{1/2}}\norm{\phi(s,a)}_{\operator M_t^{-1}} \leq \beta_t(\delta)\lambda^{-1/2} \sigma_{\phi,t}(s,a),\,\forall (s,a) \in \cS\times\cA,
\eeqn
with probability at least $1-\delta$. This forms the core of our value function approximations.
% that with probability at least $1-\delta$, the mean embeddings satisfy
% \beqan
% \norm{\cme_{P}^{(s,a)}-\hat\cme_t^{(s,a)}}_{\cH_\psi} & = &
% \norm{(\Theta_{P}-\hat\Theta_t)\phi(s,a)}_{\cH_\psi} \leq \norm{(\Theta_{P}-\hat\Theta_t)\operator M_t^{1/2}}\norm{\operator M_t^{-1/2}\phi(s,a)}_{\cH_\phi}\\
% &\leq & \beta_t(\delta)\inner{\phi(s,a)}{\operator M_t^{-1}\phi(s,a)}_{\cH_\phi}^{1/2}=\beta_t(\delta)\lambda^{-1/2} \sigma_{\phi,t}(s,a)\,,\, \forall s \in \cS, \forall a\in\cA\,.
% \eeqan

\vspace*{-2mm}
\begin{myremark}
Considering the simulation setting, \cite{grunewalder2012modelling} assume access to a sample $(s_i,a_i,s'_i)_{i=1}^{m}$, drawn i.i.d. from a joint distribution $P_{0}$ such that the conditional probabilities satisfy $P_{0}(s'_i|s_i,a_i)=P(s'_i|s_i,a_i), \forall i$. 
% Assuming that, for all $f \in \cH_\psi$, the mapping $(s,a) \mapsto \inner{f}{\cme_P^{(s,a)}}_{\cH_\psi}$ is an element of $\cH_\phi$, 
Under \autoref{ass:dist-smoothness}, they establish the convergence of CME estimates $\hat\cme_t^{(s,a)}$ to the true CMEs $\cme_P^{(s,a)}$ in $P_0$-probability. This guarantee, however, does not apply to our setting, since we do not assume any simulator access.
% they consider the conditional embedding operator $\Theta_P = C_{\psi\phi}C_{\phi}^{-1}$, where $C_{\psi\phi}=\esp[\psi(Y)\otimes \phi(X)]$ and $C_{\phi,\phi}=\esp[\phi(X)\otimes \phi(X)]$ are (uncentered) covariance operators.
% They represent the mean embeddings using the conditional embedding operator $\Theta_P = C_{\psi\phi}C_{\phi}^{-1}$, where $C_{\psi\phi}=\esp[\psi(Y)\otimes \phi(X)]$ and $C_{\phi}=\esp[\phi(X)\otimes \phi(X)]$ are (uncentered) covariance operators \citep{song2009hilbert}.  
% They consider the discounted reward non-episodic setting and learn an estimate of the optimal policy using the simulated samples. % If the operator $ C_{\psi\phi}C_{\phi}^{-3/2}$ is Hilbert-Schmidt and the state space $\cS$ is finite, then the value function of the learned policy converge to the optimal value function in $P_0$-probability.
\end{myremark}

\vspace*{-2mm}
\paragraph{Proof sketch of \autoref{lem:concentration}} 
To derive this result, we note that the sequence evaluation noise $\epsilon_{h}^t := \psi(s_{h+1}^t) - \Theta_P\phi(s_h^t,a_h^t)$ at each step $h$ of episode $t$ forms a martingale difference sequence, with each element having a bounded RKHS norm. We overload notation to define, for each pair $(t,h)$, the operator $\operator M_{t,h}=\operator M_t+\sum_{j\leq h}\phi(s^t_j,a^t_j)\otimes \phi(s^\tau_j,a^\tau_j)$, and the estimate
\begin{equation*}
    \hat\Theta_{t,h} = \left(\sum\nolimits_{\tau < t,h\leq H}\psi(s^\tau_{h+1})\otimes\phi(s^\tau_h,a^\tau_h)+\sum\nolimits_{j \leq h}\psi(s^t_{j+1})\otimes\phi(s^t_j,a^t_j)\right)\operator M_{t,h}^{-1}~.
\end{equation*}
Now, we consider the random variable
 $z_{t,h} = \norm{(\hat\Theta_{t,h}-\Theta_P)\operator M_{t,h}^{1/2}}^2_{\HS}$, and prove a high-probability upper bound on it using Azuma-Hoeffding's inequality for martingales. In fact, we show that $z_{t,h} \leq \beta^2_{t,h}(\delta)$ uniformly over all pair $(t,h)$ with probability at least $1-\delta$, where $\beta_{t,h}(\delta)$ is defined similarly to $\beta_t(\delta)$ with only $\operator K_{\phi,t}$ being replaced by $\operator K_{\phi,t,h}:=[k_{\phi}((s_j^\tau,a_j^\tau),(s_{j'}^{\tau'},a_{j'}^{\tau'}))]_{(\tau,j),(\tau',j')\leq (t,h)}$ -- the gram-matrix at step $h$ of episode $t$. The proof then follows by noting that
    $\norm{(\hat\Theta_{t}-\Theta_P)\operator M_{t}^{1/2}}_{\HS}=z_{t-1,H}^{1/2}\leq \beta_{t-1,H}(\delta) \leq \beta_t(\delta)$. The complete proof is given in the supplementary material.

% We assume that agent knows the feature functions $\psi$ and $\phi$, but doesn't know the conditional embedding operator $\Theta_P$.  
% We further assume that $\norm{\Theta_P} \leq B$, where $B > 0$ is known.

% As an example of our model, we consider the classical linear dynamical system evolution model \citep{bertsekas2001dynamic}.

% \begin{myexample}[Linear dynamics with Gaussian noise]
% Suppose the transition model $P$ follows the linear dynamics
% \beqn
%  s' = A s +  B  a +  w~,\quad  w \sim \operatorhcal{N}(0, \sigma^2 \operator I)~,	
% \eeqn
% where $s, s'\in\Real^{m}$ are the current and next states, $ a \in \Real^{n}$ is the current action, $A \in \Real^{m\times m}$, $ B\in \Real^{m \times n}$ are (unknown) matrices representing the process. For this model, the state features are $\psi(s') = s' \in \Real^{m}$, state-action features are $\phi(s,a)=[s^\top,a^\top]^\top \in \Real^{m+n}$, the conditional mean embedding operator is $\Theta_P = [A, B] \in \Real^{m \times (m+n)}$ and the covariance operator is $\Sigma_P^{(s,a)}=\sigma^2 \operator I$.
% \label{ex:LDS}
% \end{myexample}

%\paragraph{Proof sketch}
\vspace*{-3mm}
\section{RL exploration using RKHS embeddings}

In this section, we aim to develop an online RL algorithm using the conditional mean embedding estimates that balances exploration and exploitation (near) optimally. We realize this, at a high level, by following the Upper-Confidence Bound (UCB) principle and thus our algorithm falls in a similar framework as in \citet{jaksch2010near,azar2017minimax,yang2019reinforcement}.

\vspace*{-2mm}
\subsection{The Conditional Mean Embedding RL (\CMERL) algorithm}

At a high level, each episode $t$
consists of two passes over all steps. In the first pass, we maintain the $Q$-function estimates via dynamic programming. To balance the exploration-exploitation trade-off, we first define a confidence set $\cC_t$ that contains the set of conditional embedding operators that are deemed to be consistent with all the data that has been collected in the past. Specifically, for any $\delta \in (0,1]$, $\lambda > 0$ and  constants $B_P$ and $B_\psi$, \autoref{lem:concentration} governs us to define the confidence set
% \beqn
% \cC_t = \left\lbrace P' :\cS\times \cA \to \Delta(\cS)\given \forall (s,a) \in \cS\times\cA,\;\;\norm{\cme_{P'}^{(s,a)}-\hat\cme_t^{(s,a)}}_{\cH_\psi} \leq \beta_t(\delta/2)\sigma_{\phi,t}(s,a)\;\right\rbrace~,
% \label{eq:confidence-set}
% \eeqn
\beq
\cC_t := \left\lbrace \Theta \in \cL(\cH_\phi,\cH_\psi):\norm{(\Theta-\hat\Theta_t)\operator M_t^{1/2}} \leq \beta_t(\delta/2) \right\rbrace~,
\label{eq:confidence-set}
\eeq
where $\beta_t(\cdot)$ governs the exploration-exploitation trade-off. This confidence set is then used to compute the optimistic $Q$-estimates, starting with $V_{H+1}^{t}(s)=0$, and setting:
\beqa
    %V_{H+1}^{t}(s)&=&0, \quad \text{and} \nonumber\\ 
    \text{for}\; h =H,H\!-\!1,\ldots,1, \quad V_h^t(s) &=& \min\left\lbrace H,\max_{a \in \cA} Q_h^t(s,a)\right\rbrace, \label{eq:optimistic-values}\\
    Q_h^t(s,a) &=& R(s,a)+ \max_{\Theta_{P'} \in \cC_t} \esp_{X \sim P'(\cdot | s,a)}\left[V_{h+1}^{t}(X)\right]~. 
    \label{eq:optimistic-planning}
\eeqa
We note here that we only
require an optimistic estimate of the optimal $Q$-function. Hence, it is not necessary to solve the maximization problem in \autoref{eq:optimistic-planning} explicitly. In fact, we can use a closed-form expression instead of searching for the optimal embedding operator $\Theta_{P'}$ in the confidence set $\cC_t$. If the value estimate $V_{h+1}^t$ lies in the RKHS $\cH_\psi$, we then have from \autoref{eq:cme} that $\esp_{X \sim P'(\cdot | s,a)}\left[V_{h+1}^{t}(X)\right]=\inner{V_{h+1}^t}{\cme_{P'}^{s,a}}_{\cH_\psi}$, and from \autoref{eq:cme-estimate} that:
\beqn
\inner{V_{h+1}^t}{\hat\cme_t^{(s,a)}}_{\cH_\psi}=  \alpha_t(s,a)^\top  v_{h+1}^t=  k_{\phi,t}(s,a)^\top (\operator K_{\phi,t}+\lambda  \operator I)^{-1}  v_{h+1}^t~,
\eeqn
where we define the vector $  v^t_{h+1}:=[V_{h+1}^{t}(s_{h'+1}^\tau)]_{\tau < t,h' \leq H}$. Now, since the confidence set $\cC_t$ is convex, the $Q$-updates given by \autoref{eq:optimistic-planning} admit the closed-form expression:
\beq
Q_h^t(s,a) = R(s,a) +   k_{\phi,t}(s,a)^\top (\operator K_{\phi,t}+\lambda  \operator I)^{-1}  v^t_{h+1} + \norm{V_{h+1}^t}_{\cH_\psi}\beta_t(\delta/2)\lambda^{-1/2} \sigma_{\phi,t}(s,a)~.
\label{eq:efficient-optimistic-planning}
\eeq
We now note that, by the optimistic closure property (\autoref{ass:opt-closure}), the value estimate $V_h^t$ given by \autoref{eq:optimistic-values} lies in the RKHS $\cH_\psi$, rendering the closed-form expression in \autoref{eq:efficient-optimistic-planning} valid. 

In the second pass, we execute the greedy policy with respect to the $Q$-function estimates obtained in the first pass. Specifically,
at each step $h$, we chose the action:
\beq
 a_h^t=\pi^t_h(s^t_h) \in \argmax\nolimits_{a \in \cA} Q_h^t(s_h^t,a)~.
\label{eq:action-selection}
\eeq
The pseudo-code of \CMERL\ is given in \autoref{alg:main}. Note that, in order to implement \CMERL, we do not need to know the kernel $k_\psi$; only the knowledge of the upper bound $B_V$ over the RKHS norm of $V_{h+1}^t$ suffices our purpose. For
simplicity of representation, we assume that the agent, while not knowing the conditional mean embedding operator $\Theta_P$, knows the
reward function $R$. When $R$ is unknown but
an element of the RKHS $\cH_\phi$,
our algorithm can be extended naturally with an optimistic
reward estimation step at each episode, similar
to the contextual bandit setting \citep{chowdhury2017kernelized}. 
% This would add an additional
% $O(\gamma_N\sqrt{N})$
% term in the regret bound.
\vspace{-5pt}
\begin{algorithm2e}
\caption{Conditional Mean Embedding RL (\CMERL)}
\label{alg:main}
\DontPrintSemicolon
\textbf{Input:} Kernel $k_\phi$, constants $B_P$, $B_V$ and $B_\psi$, parameters $\eta > 0$ and $\delta \in (0,1]$\;
\For{episode $t = 1,\dots,T$}
{
    Receive the initial state $s_1^t$ and set $V_{H+1}^t(\cdot)=0$\;
    \For(\tcp*[h]{Update value function estimates}){step $h=H,\ldots,1$} 
    {
        $Q_h^t(\cdot,\cdot) = R(\cdot,\cdot) +   k_{\phi,t}(\cdot,\cdot)^\top (\operator K_{\phi,t}+\lambda  \operator I)^{-1}  v^t_{h+1} + B_V\beta_t(\delta/2)\lambda^{-1/2} \sigma_{\phi,t}(\cdot,\cdot)$\;
        $V_h^t(\cdot) = \min\left\lbrace H,\max_{a \in \cA} Q_h^t(\cdot,a)\right\rbrace$
    }
    \For(\tcp*[h]{Run episode}){step $h=1,\ldots,H$}
    {
       Take action $a_h^t \in \argmax_{a\in\cA} Q_h^t(s_h^t, a)$ and observe next state $s_{h+1}^t \sim P(\cdot|s_h^t,a_h^t)$
    }
}
\end{algorithm2e}

% \begin{algorithm2e}
% \caption{Conditional Mean Embedding RL (\CMERL)}
% \label{alg:main}
% \DontPrintSemicolon
% $Q_h^1(s, a) := R(s,a) + B_V\beta_1(\delta/2) \sigma_{\phi,1}(s,a)$, $\forall h\in\{1,\dots,H\}$\;
% \For{$t \in \{1,\dots,T\}$}
% {
%     \For{$h\in\{1,\dots,H\}$}
%     {
%         $a_h^t \in \argmax_{a\in\cA} Q_h^t(s_h^t, a)$\;
%         $s_{h+1}^t\sim P(\cdot| s_h^t, a_h^t)$\;
%     }
%     $\{Q_h^{t+1}\}_{h=1}^H \leftarrow \operatorhrm{UpdateQ}(\{s_h^\tau, a_h^\tau, s_{h+1}^\tau\}_{\tau=1, h=1}^{t, H})$ \# (\autoref{eq:efficient-optimistic-planning})\;
% }
% \end{algorithm2e}

% \begin{algorithm2e}
% \caption{UpdateQ}
% \DontPrintSemicolon
% \KwIn{$\{s_h^\tau, a_h^\tau, s_{h+1}^\tau\}_{\tau=1, h=1}^{t-1, H}$}
% $  v_{H+1}^t := 0$\;
% \For{$h\in\{H,\dots,1\}$}
% {
%     \For{$s \in \{s_{i+1}^\tau\}_{\tau<t, i\leq H}$}
%     {
%         $Q_h^t(s, \cdot) := R(s, \cdot) +  \alpha_t(s, \cdot)^\transpose   v_{h+1}^t + v_\psi\beta_t(\delta/2) \sigma_{\phi,t}(s,\cdot)$\;
%         $V_{h}^t(s) := \min\{H, \max_{a\in\cA} Q_h^t(s, a)\}$\;
%     }
% }
% \KwRet{$\{Q_h^t\}_{h=1}^H$}
% \end{algorithm2e}

\paragraph{Computational complexity of \CMERL}
The dominant cost is evaluating the $Q$-function estimates $Q_h^t$ (\autoref{eq:efficient-optimistic-planning}). As typical in kernel methods \citep{Scholkopf2002}, it involves inversion of $tH\times tH$ matrices, which take $O(t^3H^3)$ time. In the policy execution phase (\autoref{eq:action-selection}), 
we do not need to compute the entire $Q$-function as the algorithm
only queries $Q$-values at visited states. Hence, assuming a constant cost of optimizing over the actions,
the per-episode running time is $O(t^3H^4)$. However, using standard sketching techniques like the
Nystr\"{o}m approximation \citep{drineas2005nystrom} or the random Fourier features approximation \citep{RahimiR07random}, and by using the Sherman-Morrison formula to amortize matrix
inversions, per-epsiode running cost can be reduced to $O(m^2H)$, where $m$ is the dimension of feature approximations.

\subsection{Regret bound for \CMERL}
In this section, we present the regret guarantee of our algorithm. We first define 
\beqn
\gamma_N\equiv\gamma_{\phi,\lambda,N}:=\sup_{\cX \subset \cS \times \cA:|\cX|=N}\frac{1}{2}\log \det ( \operator I+\lambda^{-1}\operator K_{\phi,\cX})~,
\eeqn
where $\cX=\lbrace(s_i,a_i)\rbrace_{i \in [N]}$ and $\operator K_{\phi,\cX}=[k_\phi((s_i,a_i),(s_j,a_j))]_{i,j\in [N]}$ is the gram matrix over the data set $\cX$. $\gamma_N$ denotes the \emph{maximum information gain} about a (random) function $f$ sampled from a zero-mean GP with covariance function $k_\phi$ after $N$ noisy observations,
obtained by passing $f$ through an i.i.d. Gaussian
channel $\cN(0,\lambda)$. Consider the case when $k_\phi$ is a squared exponential kernel on $\Real^d$. Then it can be verified that $\gamma_N=O\left((\log N)^{d+1}\right)$ \citep{srinivas2009gaussian}.

% This notation is introduced by \citet{srinivas2009gaussian} for analyzing kernelized contextual bandits. If 

% and it measures the reduction in
% the uncertainty of g after t noisy observations.

% We start with an analysis of the concentration of the conditional mean embedding of the (unknown) true transition probability distribution. We show that it lies within an RKHS region with high probability. This result forms the basis for all other results. In particular, we derive concentration bounds for the state and action value functions which lead us to regret bound for our UCB-based algorithm when the true (state) value function lies in a known RKHS. Finally, we present results for the case where this assumption does not hold, showing the consequent effect on our regret bounds.\footnote{We present proof sketches, but full proofs are deferred to the supplementary material.}

% let  denotes the \emph{maximum information gain} after $T$ episodes (i.e, after $N=TH$ total steps) 

\begin{mytheorem}[Cumulative regret of \CMERL]
\label{thm:cumulative-regret-final}
Under assumptions \ref{ass:opt-closure} and \ref{ass:dist-smoothness}, after interacting with the environment for $N = TH$ steps, with probability at least $1-\delta$, \CMERL\ (\autoref{alg:main}) achieves the regret bound
\beqn
\cR(N) \leq  2B_V\alpha_{N,\delta} \sqrt{2(1+\lambda^{-1}B_\phi^2 H)N\gamma_N} + 2H\sqrt{2N\log(2/\delta)}~,
\eeqn
where $B_\phi \geq \sup_{s,a}\sqrt{k_\phi((s,a),(s,a))}$, and 
$\alpha_{N,\delta}:=\sqrt{2\lambda B_P^2+ 256(1+\lambda^{-1})\gamma_N\log(4N^2/\delta)}$.
% $\alpha_{N,\delta}:=B_P+2B_{\psi}\lambda^{-1/2}\sqrt{2\left(\log(2/\delta)+\gamma_N\right)}$\,.

\end{mytheorem}
\autoref{thm:cumulative-regret-final} yields a $\tilde{O}(H\gamma_N  \sqrt{N})$ regret bound for \CMERL. Comparing to the minimax regret in tabular setting, $\Theta(H\sqrt{SAN})$ \citep{azar2017minimax}, our bound replaces the sublinear dependency on the number of state-action pairs by a linear dependency on the intrinsic complexity measure, $\gamma_N$, of the feature
space $\cH_\phi$, which is crucial in the large state-action
space setting that entails function approximation. Additionally, in the kernelized bandit setting ($H=1$), our bound matches the best known upper bound $O(\gamma_N\sqrt{N})$ \citep{chowdhury2017kernelized}. We note, however, that
while an MDP has state transitions, the bandits do not,
and a naive adaptation of existing kernelized bandit algorithms
to this setting would give a regret exponential in episode length $H$. 
Furthermore, due to the Markov transition structure, the lower bound for kernelized bandits \citep{scarlett2017lower} does not directly apply here. Hence, it remains an interesting
future direction to determine the optimal dependency on $\gamma_N$.

\paragraph{Conversion to PAC guarantee} Similarly to the discussion in \citet{jin2019provably}, our regret bound directly translates
to a sample complexity or probably approximately correct (PAC) guarantee in the following sense. Assuming a fixed initial state $s_1^t=s$ for each episode $t$, with at least a constant probability, we can learn an $\epsilon$-optimal policy $\pi$ that
satisfies $V^\star_1(s)-V^\pi_1(s) \leq \epsilon$ by running \CMERL\ for $T = O(d_{\text{eff}}^2H^2/\epsilon^2)$ episodes, where $d_{\text{eff}}$ is a known upper bound over $\gamma_N$, and then output the greedy policy according to the $Q$-function at $t$-th episode, where $t$ is sampled uniformly from $[T]$. Here $d_{\text{eff}}$ effectively captures the number of significant dimensions of $\cH_\phi$.

% using $O( d_{\text{eff}}^2H^4/\epsilon^2)$ samples, where $d_{\text{eff}}$ is a known upper bound over $\gamma_N$. In order to achieve this is we need to simply run 

\begin{myremark}
\citet{yang2019reinforcement}
assumes the model $P(s'|s,a) = \inner{\psi(s')} {\Theta_P\phi(s,a)}_{\cH_\psi}$, and propose an
algorithm with regret $\tilde{O}(H^2\gamma_N\sqrt{N})$. In comparison, we get an $O(H)$ factor improvement thanks to a tighter control over the sum of predictive variances.
Furthermore, their algorithm can't be implemented exactly as 
they need to apply random sampling to approximate the estimate $\hat \Theta_t$. We overcome this implementational bottleneck by virtue of our novel confidence set construction using the CME estimates (\autoref{lem:concentration}). Moreover, in contrast to \citet{yang2019reinforcement}, our regret guarantee is anytime, i.e., we don't need to know the value of $N$ before
the algorithm runs. 
% In the case when N is unknown, one can use the doubling trick to learn
% they need $\Theta_P$ to be Hilbert-Schmidt, whereas we only need it to be bounded. This is achieved through our novel confidence set construction using the CME estimates (\autoref{lem:concentration}).
% This is achieved through a tighter concentration result ($O(\log N)$ improvement over \citet{yang2019reinforcement}) of the mean embedding operator (Lemma \ref{lem:concentration}).
\end{myremark}

% We remark that in Theorem 1 and Theorem 2, we need to know the value N in βn before
% the algorithm runs. In the case when N is unknown, one can use the doubling trick to learn N
% adaptively: first we run the algorithm by picking N = 2, then for N = 4, 8, . . . , 2
% i until the true N
% is reached. It is standard knowledge that this trick increase the overall regret by only a constant
% factor (e.g. (Besson and Kaufmann, 2018)).

\begin{myremark}  
Considering linear function approximation ($\cH_\phi=\Real^d$), \citet{jin2019provably} assumes that for any $V \in \cV$ (\autoref{eq:function-class}), the map $(s,a)\mapsto \mathbb{E}_{X \sim P(\cdot|s,a)}[V(X)]$ lies in $\cH_\phi$, and propose a model-free
algorithm with regret $\tilde{O}(\sqrt{H^3d^3N})$. For linear kernels, it can be verified that $\gamma_N=O(d\log N)$ and thus our regret (\autoref{thm:cumulative-regret-final}) is of the order $\tilde{O}(Hd\sqrt{N})$. We note that this apparent improvement in our bound is a consequence of slightly stronger assumptions \ref{ass:opt-closure} and \ref{ass:dist-smoothness}. While they obtain the bound by proving a uniform concentration result over the set $\cV$, our result uses a novel concentration property of CME estimates (\autoref{lem:concentration}).
\end{myremark}

\paragraph{Proof sketch of \autoref{thm:cumulative-regret-final}}
We start with a control on the $Q$-function estimates $Q_h^t$, which in turn leads to the regret bound, as our policy is based on $Q_h^t$. We prove that as long as $\Theta_P$ lies in the confidence set $\cC_t$, the $Q$-updates are optimistic estimates of the optimal $Q$-values, i.e., $Q^\star_h(s, a) \leq Q^t_h(s,a)$ for all $(s,a)$, and thus, allow us to pick an optimistic action while sufficiently exploring the state space. 
% \begin{mylemma}[Optimism]
% Let $P \in \cC_t$. Then, $Q^\star_h(s, a) \leq Q^t_h(s,a)$ for all $h$, and $(s,a)$\,.
% \label{lem:optimism}
% \end{mylemma}
This implies $V^{\star}_1(s_1^t) \leq V_1^t(s_1^t)$ and thus, in turn, the regret
$\cR(N)  \leq \sum_{t=1}^{T}\big(V_1^t(s_1^t)-V_1^{\pi^t}(s_1^t)\big)$. At this point, we let $g_1^t(s_1^t):=V_1^t(s_1^t)-V_1^{\pi^t}(s_1^t)$ denote the gap between the most optimistic value and the actual value obtained at episode $t$. We then have 
\begin{align*}
 g_1^t(s_1^t) \leq \sum\nolimits_{h=1}^{H}\big(Q_h^t(s_h^t,a_h^t)-\big(R(s_h^t,a_h^t)+\esp_{X \sim P(\cdot|s_h^t,a_h^t)}\left[V_{h+1}^{t}(X)\right]\big) + m_h^t\big),   
\end{align*}
where $(m_h^t)_{t,h}$ denotes a martingale difference sequence. We control this via the Azuma-Hoeffding inequality as $\sum_{t,h}m_{t,h}= O(H\sqrt{N})$. The rest of the terms inside the summation can be controlled, by \autoref{lem:concentration} and by design of the confidence set $\cC_t$, using the predictive variances $\sigma^2_{\phi,t}(s_h^t,a_h^t)$. In fact, for $\Theta_P \in \cC_t$, it holds that \beqan
Q_h^t(s,a)-\left(R(s,a)+\esp_{X \sim P(\cdot|s,a)}\left[V_{h+1}^{t}(X)\right]\right)
 \leq 2B_V\beta_t(\delta/2)\lambda^{-1/2} \sigma_{\phi,t}(s,a) ~.
\eeqan 
Now, the proof can be completed 
by showing that $\sum_{t,h}\sigma_{\phi,t}(s_h^t,a_h^t) = O\big(\sqrt{HN\gamma_N }\big)$. Complete proof of this result is given in the supplementary material.

\vspace*{-1mm}
\subsection{Robustness to model misspecification} 
\label{sec:misspec}

\autoref{thm:cumulative-regret-final} hinges on the fact that any optimistic estimate of the value function can be specified as an element in $\cH_\psi$. In this section, we study the case when there is a misspecification error. Formally, we consider the
following assumption.

\vspace*{-1mm}
\begin{myassumption}[Approximate optimistic closure]\label{ass:approx-opt-closure} There exists constants $\zeta >0$ and $B_V > 0$, such that for any $V \in \cV$ (\autoref{eq:function-class}), there exists a function $\tilde V \in \cH_\psi$ which satisfies $\|V-\tilde V\|_{\infty} \leq \zeta$ and $\|\tilde V\|_{\cH_\psi} \leq B_V$. We call $\zeta$ the misspecification error.
\end{myassumption}
%  A natural follow-up question arises:
% what would happen if the underlying MDP is not linear, and thus misspecified? We first present a definition
% for an approximate linear model.
% Here the closeness between transition dynamics
% is measured in terms of total variation distance.
%In general, an algorithm designed for a linear MDP could break down entirely if the underlying MDP is
%not linear. 
The quality of this approximation will further depend upon how well any $V \in \cV$ can be approximated by a low-norm function in $\cH_\psi$. One specialization is to the
case when $\cV \in \cC_b(\cS)$, the vector space of continuous and bounded functions on $\cS$, and $k_\psi$ is a $\cC_b(\cS)$-universal kernel \citep{steinwart2008support}. In
this case, we can choose $\tilde V$ such that $\|V-\tilde V\|_\infty$ is
arbitrarily small. For technical reasons, we also make the following assumption.
\begin{myassumption}\label{ass:constant-function-in-RKHS}
The RKHS $\cH_\psi$ contains the constant functions.\footnote{This is a mild assumption, because for any RKHS $\cH_\psi$, the direct sum $\cH_\psi + \Real$,
where $\Real$ denotes the RKHS associated with the positive definite kernel $k(s,s')=1$, is again a RKHS with reproducing kernel $k_{\text{new}}(s,s'):=k_\psi(s,s') + 1$.}
\end{myassumption}
% As discussed above, when the value function estimates $V_{h+1}^{t}$ lye in the RKHS $\cH_\psi$, the maximisation in \autoref{eq:optimistic-planning} has a closed-form solution given by \autoref{eq:efficient-optimistic-planning}. In general, however, the value estimates $V_{h+1}^{t}$ need not be elements of the RKHS $ \cH_{\psi}$. Note that \autoref{lem:cumulative-regret-final} still requires an exact solution for \autoref{eq:optimistic-planning}. In this section, we derive and analyse an approximate solution for this problem.
The following theorem states that our algorithm is in fact robust to a small
model misspecification. To achieve this, we only need to adopt a different exploration term in \autoref{eq:efficient-optimistic-planning} to account for the misspecification error $\zeta$. To this end, define the $Q$-function updates as
\begin{equation}\label{eq:efficient-optimistic-planning-approx}
    Q_h^t(s,a) := R(s,a) +   k_{\phi,t}(s,a)^\top (\operator K_{\phi,t}+\lambda  \operator I)^{-1}  v^t_{h+1} + \left(B_V + \zeta\norm{  1}_{\cH_\psi}\right)\beta_t(\delta/2)\sigma_{\phi,t}(s,a)~,
\end{equation}
where $\norm{  1}_{\cH_\psi}$ denotes the norm of the all-one function $s\mapsto 1$ in $\cH_\psi$.

% We can approximate \autoref{eq:efficient-optimistic-planning} by
% \beq
% \begin{split}
% Q_h^t(s,a) &\approx R(s,a) + \left(  k_{\phi,t}^{(s,a)}\right)^\top (\mat K_{\phi,t}+\lambda  \operator I)^{-1}  v^t_{h+1} + \norm{\tilde V_{h+1}^t}_{k_\psi}b_{\phi,t}(s,a)\\
% &= R(s,a) + \sum_{\tau' < t,h'\leq H}\left[  \alpha_t^{(s,a)}\right]_{(\tau',h')} V_{h+1}^t(s_{h'+1}^{\tau'})+\norm{\tilde V_{h+1}^t}_{k_\psi}b_{\phi,t}(s,a)~,
% \end{split}
% \label{eq:efficient-optimistic-planning-approx}
% \eeq
% where for all $t\leq T$ and $h \leq H$, the functions $\tilde V_{h+1}^t:\cS \to \Real$ are elements of $\cH_\psi$ (with $\tilde V_{H+1}^t:=   0$).\footnote{If, for example, $k_\psi$ is the squared exponential kernel, then the all-zero function $  0\in \cH_\psi$.} The term $b_{\phi,t}(s,a)$ is the ``bonus” that quantifies uncertainty and encourages exploration. It is defined as
% \beq
% b_{\phi,t}(s,a):= \beta_t\lambda^{-1/2} \sigma_{\phi,t}(s,a)~,
% \label{eq:bonus}
% \eeq

% We now further approximate \autoref{eq:efficient-optimistic-planning-approx} by
% \beqa
% Q_h^t(s,a) \approx R(s,a) &+& \sum_{\tau' < t,h'\leq H}\left[\hat{  \alpha}_t^{(s,a)}\right]_{(\tau',h')} V_{h+1}^t(s^{h'+1}_{\tau'})\nonumber\\
% &+&\left(\norm{\tilde V_{h+1}^t}_{k_\psi}+\norm{\tilde V_{h+1}^t}_{\infty}\norm{  1}_{k_\psi}\right)b_{\phi,t}(s,a)~,
% \label{eq:efficient-optimistic-planning-approx-further}
% \eeqa
% where we define $\hat{  \alpha}_t^{(s,a)}=\frac{  \alpha_t^{(s,a)}}{\norm{  \alpha_t^{(s,a)}}_1}$ and 

\vspace*{-1mm}
\begin{mytheorem}[Cumulative regret under misspecification] \label{thm:cumulative-regret-final-approx} 
Under assumptions  \ref{ass:dist-smoothness}, \ref{ass:approx-opt-closure} and \ref{ass:constant-function-in-RKHS}, with probability at least $1-\delta$, \CMERL\ achieves the regret bound
\beqn
\cR(N) \leq  2\left(B_V+\zeta\norm{  1}_{\cH_\psi}\right)\alpha_{N,\delta} \sqrt{2(1+\lambda^{-1}B_\phi^2 H)N\gamma_N} + 4\zeta N  + 2H\sqrt{2N\log(2/\delta)}~,
\eeqn
where $B_\phi$ and $\alpha_{N,\delta}$ are as given in \autoref{thm:cumulative-regret-final}.
\end{mytheorem}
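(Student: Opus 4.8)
The plan is to reuse the architecture of the proof of \autoref{thm:cumulative-regret-final}, inserting the misspecification error wherever the well-specified argument relies on the exact identity $\alpha_t(s,a)^\top v_{h+1}^t=\inner{V_{h+1}^t}{\hat\cme_t^{(s,a)}}_{\cH_\psi}$, which no longer holds because $V_{h+1}^t\notin\cH_\psi$. The whole proof will be driven by a single \emph{misspecified estimation bound}: writing $\esp_P[\cdot]$ for $\esp_{X\sim P(\cdot|s,a)}[\cdot]$, on the event of \autoref{lem:concentration} (taken at level $\delta/2$), for all $t,h,(s,a)$,
\beq
\Big|\alpha_t(s,a)^\top v_{h+1}^t-\esp_P\big[V_{h+1}^t(X)\big]\Big|\le\big(B_V+\zeta\norm{1}_{\cH_\psi}\big)\beta_t(\delta/2)\lambda^{-1/2}\sigma_{\phi,t}(s,a)+2\zeta.\label{eq:mis-est}
\eeq
Once this is established, everything downstream is the same bookkeeping as the exact case, with the two extra $\zeta$'s per step becoming the $\zeta N$ terms.

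To prove \eqref{eq:mis-est}, fix $(t,h,s,a)$. Since $V_{h+1}^t\in\cV$, \autoref{ass:approx-opt-closure} supplies $\tilde V:=\tilde V_{h+1}^t\in\cH_\psi$ with $\norm{\tilde V}_{\cH_\psi}\le B_V$ and $g:=V_{h+1}^t-\tilde V$ obeying $\norm{g}_\infty\le\zeta$. With $\tilde v:=[\tilde V(s_{h'+1}^\tau)]_{\tau<t,h'\le H}$ and $g_{\mathrm{vec}}:=v_{h+1}^t-\tilde v$, I split
\begin{align*}
\alpha_t(s,a)^\top v_{h+1}^t-\esp_P[V_{h+1}^t(X)]=\underbrace{\alpha_t(s,a)^\top g_{\mathrm{vec}}}_{\mathrm{(A)}}+\underbrace{\inner{\tilde V}{\hat\cme_t^{(s,a)}-\cme_P^{(s,a)}}_{\cH_\psi}}_{\mathrm{(B)}}-\underbrace{\esp_P[g(X)]}_{\mathrm{(C)}}.
\end{align*}
Term (B) is a bona fide RKHS inner product (as $\tilde V\in\cH_\psi$), so \autoref{lem:concentration} bounds it by $B_V\beta_t(\delta/2)\lambda^{-1/2}\sigma_{\phi,t}(s,a)$, and $|\mathrm{(C)}|\le\zeta$ by $\norm{g}_\infty\le\zeta$. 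The delicate piece is the \emph{data-perturbation} term (A): the entries of $g_{\mathrm{vec}}$ lie in $[-\zeta,\zeta]$, but the ridge weights $\alpha_t(s,a)$ are not sign-definite, so the crude estimate $|\mathrm{(A)}|\le\zeta\norm{\alpha_t(s,a)}_1$ only yields an $O(\zeta\sqrt{n})$ bound that would ruin the rate. This is exactly where \autoref{ass:constant-function-in-RKHS} enters: applying \autoref{lem:concentration} to the constant function $1\in\cH_\psi$ pins the constant-function estimate near its true value, $\big|\alpha_t(s,a)^\top\mathbf 1_n-1\big|=\big|\inner{1}{\hat\cme_t^{(s,a)}-\cme_P^{(s,a)}}_{\cH_\psi}\big|\le\norm{1}_{\cH_\psi}\beta_t(\delta/2)\lambda^{-1/2}\sigma_{\phi,t}(s,a)$, where $\mathbf 1_n$ is the all-ones vector; the strategy is to leverage this control together with the entrywise bound $|g_{\mathrm{vec}}|\le\zeta\mathbf 1_n$ to absorb (A) into $\zeta+\zeta\norm{1}_{\cH_\psi}\beta_t(\delta/2)\lambda^{-1/2}\sigma_{\phi,t}(s,a)$. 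Summing (A), (B), (C) gives \eqref{eq:mis-est}.

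With \eqref{eq:mis-est} in hand the remainder follows \autoref{thm:cumulative-regret-final}. A backward induction on $h$, using \eqref{eq:mis-est} and the enlarged bonus of \eqref{eq:efficient-optimistic-planning-approx}, yields \emph{approximate optimism} $V_h^t(s)\ge V_h^\star(s)-2(H-h+1)\zeta$, the slack growing by $2\zeta$ per Bellman back-up. Then $\cR(N)\le\sum_t\big(V_1^t(s_1^t)-V_1^{\pi^t}(s_1^t)\big)+2\zeta N$, and telescoping along the executed trajectory via the Bellman equation \eqref{eq:Bellman} expresses each step as twice the exploration bonus plus a bounded martingale difference plus an $O(\zeta)$ term, the per-step $\zeta$'s aggregating to $4\zeta N$. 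The bonus sum is controlled by Cauchy--Schwarz and the maximal-information-gain/elliptical-potential argument, $\sum_{t,h}\sigma_{\phi,t}(s_h^t,a_h^t)\le\sqrt{2(1+\lambda^{-1}B_\phi^2H)N\gamma_N}$, producing the leading $2(B_V+\zeta\norm{1}_{\cH_\psi})\alpha_{N,\delta}\sqrt{2(1+\lambda^{-1}B_\phi^2H)N\gamma_N}$ term (with $\beta_t(\delta/2)$ folded into $\alpha_{N,\delta}$), while the trajectory martingale is bounded by Azuma--Hoeffding as $2H\sqrt{2N\log(2/\delta)}$, the two $\delta/2$ failure budgets combined by a union bound. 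The main obstacle is precisely term (A): controlling $\alpha_t(s,a)^\top g_{\mathrm{vec}}$ down to $O(\zeta)$ plus a bonus-absorbable remainder, rather than the trivial $O(\zeta\sqrt{n})$, is what forces \autoref{ass:constant-function-in-RKHS}, and making this step sign-robust is the crux of the argument.
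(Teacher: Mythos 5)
Your overall architecture coincides with the paper's: the same three-way decomposition of $\alpha_t(s,a)^\top v_{h+1}^t - \esp_{X\sim P(\cdot|s,a)}[V_{h+1}^t(X)]$ into an RKHS concentration term for $\tilde V_{h+1}^t$, a data-perturbation term, and an expectation-perturbation term (this is the paper's \autoref{lem:approx-Q-values}); the same approximate-optimism induction (\autoref{lem:approximate-optimism}); and the same aggregation via Cauchy--Schwarz, \autoref{lem:sum-of-predictive-variances} and Azuma--Hoeffding with a union bound. However, there is a genuine gap at exactly the step you yourself call the crux: the bound on term (A), $\alpha_t(s,a)^\top g_{\mathrm{vec}}$. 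You correctly observe that with sign-indefinite weights only $|\mathrm{(A)}|\le\zeta\norm{\alpha_t(s,a)}_1$ is available, and that this is $O(\zeta\sqrt{n})$ in general; but your proposed remedy --- controlling the \emph{sum} $\alpha_t(s,a)^\top\mathbf{1}_n$ via the concentration of $\inner{1}{\hat\cme_t^{(s,a)}-\cme_P^{(s,a)}}_{\cH_\psi}$ and then ``absorbing'' (A) --- cannot work as stated. Control of the sum of the weights says nothing about their $\ell_1$ norm when signs may cancel: one can have $\alpha_t(s,a)^\top\mathbf{1}_n$ close to $1$ while $\norm{\alpha_t(s,a)}_1$ is arbitrarily large, and an adversarial $g_{\mathrm{vec}}$ with entries $\pm\zeta$ aligned with the signs of the weights then achieves $|\mathrm{(A)}| = \zeta\norm{\alpha_t(s,a)}_1$. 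So the estimate you need does not follow from the two facts you cite, and your proposal never supplies the ``sign-robust'' argument it promises --- your final sentence concedes it is the missing piece.

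The paper closes this hole differently: in the proof of \autoref{lem:approx-Q-values} it \emph{asserts} that for stationary (radial) kernels $k_\phi$ the ridge weights are entrywise nonnegative, $[\alpha_t(s,a)]_{\tau,h}\ge 0$. Under this nonnegativity, $\norm{\alpha_t(s,a)}_1=\alpha_t(s,a)^\top\mathbf{1}_n=\inner{\hat\cme_t^{(s,a)}}{1}_{\cH_\psi}$, and only then do \autoref{ass:constant-function-in-RKHS} and \autoref{lem:concentration} give $\norm{\alpha_t(s,a)}_1\le 1+\norm{1}_{\cH_\psi}\lambda^{-1/2}\beta_t(\delta/2)\sigma_{\phi,t}(s,a)$, which produces the $\left(B_V+\zeta\norm{1}_{\cH_\psi}\right)$ bonus coefficient and the $+2\zeta$ per-step slack. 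In other words, the paper does not make the step sign-robust either; it removes the sign problem by an additional structural claim on the kernel weights. To complete your proof you would need to either import that nonnegativity claim (restricting attention to kernels for which it holds) or supply a genuinely different bound on $\norm{\alpha_t(s,a)}_1$; as written, the central estimate driving your whole argument is unproven.
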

In comparison with \autoref{thm:cumulative-regret-final}, \autoref{thm:cumulative-regret-final-approx} asserts that \CMERL\ will incur at most an
additional $O(\zeta\gamma_N\sqrt{HN}+\zeta N)$ regret when the model is misspecified. This additional term is linear in $N$ due to the intrinsic bias introduced by the approximation. This linear dependency is standard in the literature, e.g., it is present even in the easier setting of linear function approximation \citep{jin2019provably}. When $\zeta$ is sufficiently small (as is typical for universal kernels $k_\psi$), our algorithm will still enjoy good theoretical guarantees. 

\paragraph{Conversion to PAC guarantee}
Similar to \autoref{thm:cumulative-regret-final}, we can also convert \autoref{thm:cumulative-regret-final-approx} to a PAC guarantee. 
 Assuming a fixed initial state $s$, with at least a constant probability, we can learn an $\epsilon$-optimal policy $\pi$ that satisfies $V^\star_1(s)-V^\pi_1(s) \leq \epsilon + \zeta\gamma_N H^{3/2}$ by running \CMERL\ for $T = O(d_{\text{eff}}^2H^2/\epsilon^2)$ episodes.
  \begin{myremark}[Regret under unknown misspecification error]
 When the misspecification error $\zeta$ is unknown to the agent apriori, one can invoke the dynamic regret balancing scheme of \citet{cutkosky2021dynamic} to get essentially a similar bound as \autoref{thm:cumulative-regret-final-approx} (albeit with a polylog factor blow-up). In fact, \citet{cutkosky2021dynamic} gives a bound for the linear MDP model of \citet{jin2019provably}. Similar techniques can be incorporated to derive a regret bound with unknown $\zeta$ in our setting also.
 \end{myremark}
 \paragraph{Proof sketch of \autoref{thm:cumulative-regret-final-approx}}
 Similar to the proof of \autoref{thm:cumulative-regret-final}, we control the $Q$-function estimates $Q_h^t(s,a)$ (cf. \autoref{eq:efficient-optimistic-planning-approx}), but with necessary modifications taking the effect of the misspecification error $\zeta$ into account. Specifically, we show, for $\Theta_P \in \cC_t$, that 
 \beqan
Q^t_h(s,a) - \!\left(R(s,a)+\esp_{X \sim P(\cdot | s,a)}\left[V_{h+1}^{t}(X)\right]\right)\! 
\leq  2\!\left(\!B_V + \zeta\norm{  1}_{\cH_\psi}\!\!  \right) \lambda^{-1/2}\beta_t(\delta/2)\sigma_{\phi,t}(s,a) + 2\zeta~.
\eeqan
With the result above, we can derive an upper bound on the optimal value $Q^\star_h$ as $Q^\star_h(s, a) \leq Q^t_h(s,a) + 2(H-h)\zeta$, which allows us to pick an optimistic action. The proof then follows similar steps of \autoref{thm:cumulative-regret-final} via control of predictive variances and 
Azuma's inequality.
Complete proof is given in the supplementary material.

\section{Conclusion}
In this paper, we have presented a novel model-based RL algorithm with sub-linear regret guarantees under an optimistic RKHS-closure assumption on the value functions, without requiring a “simulator” access. The algorithm essentially performs an optimistic value iteration step, which is derived from a novel concentration inequality for the mean embeddings of the transition distribution. We have also shown robustness of our algorithm to small model misspecifications. 

As future work, it remains an open research direction to relax the strong optimistic closure assumption to a milder one, as in \citet{zanette2020learning} and \citet{Domingues2021KBRLfinite}, without sacrificing on the computational and regret performances. In terms of computational complexity, \citet{Vial2022misspecified} proposed an algorithm for misspecified linear MDPs with bounded per-iteration computational complexity. Although our method has computational complexity growing with the number of data points, we highlight that constant cost per iteration is possible to be achieved with kernel-based approximations by means of low-rank decompositions \citep{Gijsberts2013}, which is another possible research direction for future work.

% \section{Section Title}
% Main contents here.

% \subsection{Subsection Title}
% A figure in Fig.~\ref{fig:spiral}. Please use high quality graphics for your camera-ready submission -- if you can use a vector graphics format such as \texttt{.eps} or \texttt{.pdf}.
% \begin{figure}[htp]
% \begin{center}
% \includegraphics[width=0.5\textwidth]{spiral.eps}
% \caption{A spiral.}\label{fig:spiral}
% \end{center}
% \end{figure}

% An example of citation~\cite{DBLP:conf/acml/2009}.

%\acks{Acknowledgements should go at the end, before appendices and references. You can uncomment this for the camera-ready version on paper acceptance.}

%\bibliographystyle{plain}
% \bibliography{acml22}

% \appendix

% \section{First Appendix}\label{apd:first}

% This is the first appendix.

% \section{Second Appendix}\label{apd:second}

% This is the second appendix.

%\newpage
\bibliography{2018library,Bandit_RL_bib,main}

\begin{thebibliography}{50}
\providecommand{\natexlab}[1]{#1}
\providecommand{\url}[1]{\texttt{#1}}
\expandafter\ifx\csname urlstyle\endcsname\relax
  \providecommand{\doi}[1]{doi: #1}\else
  \providecommand{\doi}{doi: \begingroup \urlstyle{rm}\Url}\fi

\bibitem[Abbasi-Yadkori and Szepesv{\'a}ri(2011)]{abbasi2011regret}
Yasin Abbasi-Yadkori and Csaba Szepesv{\'a}ri.
\newblock Regret bounds for the adaptive control of linear quadratic systems.
\newblock In \emph{Proceedings of the 24th Annual Conference on Learning
  Theory}, pages 1--26, 2011.

\bibitem[Abbasi-Yadkori et~al.(2011)Abbasi-Yadkori, P{\'a}l, and
  Szepesv{\'a}ri]{abbasi2011improved}
Yasin Abbasi-Yadkori, D{\'a}vid P{\'a}l, and Csaba Szepesv{\'a}ri.
\newblock Improved algorithms for linear stochastic bandits.
\newblock In \emph{Advances in Neural Information Processing Systems}, pages
  2312--2320, 2011.

\bibitem[Ayoub et~al.(2020)Ayoub, Jia, Szepesvari, Wang, and
  Yang]{ayoub2020model}
Alex Ayoub, Zeyu Jia, Csaba Szepesvari, Mengdi Wang, and Lin~F Yang.
\newblock Model-based reinforcement learning with value-targeted regression.
\newblock \emph{arXiv preprint arXiv:2006.01107}, 2020.

\bibitem[Chowdhury and Gopalan(2017)]{chowdhury2017kernelized}
Sayak~Ray Chowdhury and Aditya Gopalan.
\newblock On kernelized multi-armed bandits.
\newblock In \emph{Proceedings of the 34th International Conference on Machine
  Learning-Volume 70}, pages 844--853. JMLR. org, 2017.

\bibitem[Chowdhury and Gopalan(2019)]{chowdhury2019online}
Sayak~Ray Chowdhury and Aditya Gopalan.
\newblock Online learning in kernelized markov decision processes.
\newblock In \emph{The 22nd International Conference on Artificial Intelligence
  and Statistics}, pages 3197--3205, 2019.

\bibitem[Chowdhury et~al.(2020)Chowdhury, Oliveira, and
  Ramos]{chowdhury2020active}
Sayak~Ray Chowdhury, Rafael Oliveira, and Fabio Ramos.
\newblock Active learning of conditional mean embeddings via bayesian
  optimisation.
\newblock In \emph{Conference on Uncertainty in Artificial Intelligence}, pages
  1119--1128. PMLR, 2020.

\bibitem[Cutkosky et~al.(2021)Cutkosky, Dann, Das, Gentile, Pacchiano, and
  Purohit]{cutkosky2021dynamic}
Ashok Cutkosky, Christoph Dann, Abhimanyu Das, Claudio Gentile, Aldo Pacchiano,
  and Manish Purohit.
\newblock Dynamic balancing for model selection in bandits and rl.
\newblock In \emph{International Conference on Machine Learning}, pages
  2276--2285. PMLR, 2021.

\bibitem[Dann et~al.(2017)Dann, Lattimore, and Brunskill]{dann2017unifying}
Christoph Dann, Tor Lattimore, and Emma Brunskill.
\newblock Unifying \textsc{PAC} and regret: Uniform \textsc{PAC} bounds for
  episodic reinforcement learning.
\newblock In \emph{Advances in Neural Information Processing Systems 30
  (NIPS)}, pages 5711--5721, 2017.

\bibitem[Domingues et~al.(2020)Domingues, M{\'e}nard, Pirotta, Kaufmann, and
  Valko]{domingues2020regret}
Omar~Darwiche Domingues, Pierre M{\'e}nard, Matteo Pirotta, Emilie Kaufmann,
  and Michal Valko.
\newblock Regret bounds for kernel-based reinforcement learning.
\newblock \emph{arXiv preprint arXiv:2004.05599}, 2020.

\bibitem[Domingues et~al.(2021)Domingues, M{\'e}nard, Pirotta, Kaufmann, and
  Valko]{Domingues2021KBRLfinite}
Omar~Darwiche Domingues, Pierre M{\'e}nard, Matteo Pirotta, Emilie Kaufmann,
  and Michal Valko.
\newblock Kernel-based reinforcement learning: A finite-time analysis.
\newblock In \emph{International Conference on Machine Learning}, pages
  2783--2792. PMLR, 2021.

\bibitem[Drineas and Mahoney(2005)]{drineas2005nystrom}
Petros Drineas and Michael~W Mahoney.
\newblock On the nystr{\"o}m method for approximating a gram matrix for
  improved kernel-based learning.
\newblock \emph{Journal of Machine Learning Research}, 6:\penalty0 2153--2175,
  2005.

\bibitem[Efroni et~al.(2019)Efroni, Merlis, Ghavamzadeh, and
  Mannor]{efroni2019tight}
Yonathan Efroni, Nadav Merlis, Mohammad Ghavamzadeh, and Shie Mannor.
\newblock Tight regret bounds for model-based reinforcement learning with
  greedy policies.
\newblock In \emph{Advances in Neural Information Processing Systems}, pages
  12203--12213, 2019.

\bibitem[Fukumizu et~al.(2008)Fukumizu, Gretton, Sun, and
  Sch{\"o}lkopf]{fukumizu2008kernel}
Kenji Fukumizu, Arthur Gretton, Xiaohai Sun, and Bernhard Sch{\"o}lkopf.
\newblock Kernel measures of conditional dependence.
\newblock In \emph{Advances in neural information processing systems}, pages
  489--496, 2008.

\bibitem[Fukumizu et~al.(2009)Fukumizu, Bach, Jordan,
  et~al.]{fukumizu2009kernel}
Kenji Fukumizu, Francis~R Bach, Michael~I Jordan, et~al.
\newblock Kernel dimension reduction in regression.
\newblock \emph{The Annals of Statistics}, 37\penalty0 (4):\penalty0
  1871--1905, 2009.

\bibitem[Gheshlaghi~Azar et~al.(2017)Gheshlaghi~Azar, Osband, and
  Munos]{azar2017minimax}
Mohammad Gheshlaghi~Azar, Ian Osband, and R{\'e}mi Munos.
\newblock Minimax regret bounds for reinforcement learning.
\newblock In \emph{Proceedings of the 34th International Conference on Machine
  Learning (ICML)}, pages 263--272, 2017.

\bibitem[Gijsberts and Metta(2013)]{Gijsberts2013}
Arjan Gijsberts and Giorgio Metta.
\newblock {Real-time model learning using Incremental Sparse Spectrum Gaussian
  Process Regression}.
\newblock \emph{Neural Networks}, 41:\penalty0 59--69, 2013.

\bibitem[Gr{\"u}new{\"a}lder et~al.(2012)Gr{\"u}new{\"a}lder, Lever,
  Baldassarre, Pontil, and Gretton]{grunewalder2012modelling}
Steffen Gr{\"u}new{\"a}lder, Guy Lever, Luca Baldassarre, Massimilano Pontil,
  and Arthur Gretton.
\newblock Modelling transition dynamics in mdps with rkhs embeddings.
\newblock In \emph{Proceedings of the 29th International Coference on
  International Conference on Machine Learning}, pages 1603--1610, 2012.

\bibitem[Hsu and Ramos(2019)]{Hsu2019}
Kelvin Hsu and Fabio Ramos.
\newblock {Bayesian Learning of Conditional Kernel Mean Embeddings for
  Automatic Likelihood-Free Inference}.
\newblock In \emph{Proceedings of the 22nd International Conference on
  Artificial Intelligence and Statistics (AISTATS)}, Naha, Okinawa, Japan,
  2019.

\bibitem[Jaksch et~al.(2010)Jaksch, Ortner, and Auer]{jaksch2010near}
Thomas Jaksch, Ronald Ortner, and Peter Auer.
\newblock Near-optimal regret bounds for reinforcement learning.
\newblock \emph{Journal of Machine Learning Research}, 11\penalty0
  (Apr):\penalty0 1563--1600, 2010.

\bibitem[Jin et~al.(2018)Jin, Allen-Zhu, Bubeck, and Jordan]{jin2018q}
Chi Jin, Zeyuan Allen-Zhu, Sebastien Bubeck, and Michael~I Jordan.
\newblock Is q-learning provably efficient?
\newblock In \emph{Advances in Neural Information Processing Systems}, pages
  4863--4873, 2018.

\bibitem[Jin et~al.(2019)Jin, Yang, Wang, and Jordan]{jin2019provably}
Chi Jin, Zhuoran Yang, Zhaoran Wang, and Michael~I Jordan.
\newblock Provably efficient reinforcement learning with linear function
  approximation.
\newblock \emph{arXiv preprint arXiv:1907.05388}, 2019.

\bibitem[Kakade et~al.(2020)Kakade, Krishnamurthy, Lowrey, Ohnishi, and
  Sun]{kakade2020information}
Sham Kakade, Akshay Krishnamurthy, Kendall Lowrey, Motoya Ohnishi, and Wen Sun.
\newblock Information theoretic regret bounds for online nonlinear control.
\newblock \emph{arXiv preprint arXiv:2006.12466}, 2020.

\bibitem[Lai and Robbins(1985)]{lai1985asymptotically}
Tze~Leung Lai and Herbert Robbins.
\newblock Asymptotically efficient adaptive allocation rules.
\newblock \emph{Advances in Applied Mathematics}, 6\penalty0 (1):\penalty0
  4--22, 1985.

\bibitem[Lukic and Beder(2001)]{Lukic2001}
M.~N. Lukic and J.~H. Beder.
\newblock {Stochastic processes with sample paths in reproducing kernel Hilbert
  spaces}.
\newblock \emph{Transactions of the American Mathematical Society},
  353\penalty0 (10):\penalty0 3945--3969, 2001.

\bibitem[Muandet et~al.(2016)Muandet, Fukumizu, Sriperumbudur, and
  Sch{\"{o}}lkopf]{Muandet2016}
Krikamol Muandet, Kenji Fukumizu, Bharath Sriperumbudur, and Bernhard
  Sch{\"{o}}lkopf.
\newblock {Kernel Mean Embedding of Distributions: A Review and Beyond}.
\newblock \emph{arXiv}, 2016.

\bibitem[Ortner and Ryabko(2012)]{ortner2012online}
Ronald Ortner and Daniil Ryabko.
\newblock Online regret bounds for undiscounted continuous reinforcement
  learning.
\newblock In \emph{Advances in Neural Information Processing Systems 25
  (NIPS)}, pages 1763--1771, 2012.

\bibitem[Osband and Van~Roy(2014)]{osband2014model}
Ian Osband and Benjamin Van~Roy.
\newblock Model-based reinforcement learning and the eluder dimension.
\newblock In \emph{Advances in Neural Information Processing Systems 27
  (NIPS)}, pages 1466--1474, 2014.

\bibitem[Osband et~al.(2013)Osband, Russo, and Van~Roy]{osband2013more}
Ian Osband, Dan Russo, and Benjamin Van~Roy.
\newblock (more) efficient reinforcement learning via posterior sampling.
\newblock In \emph{Advances in Neural Information Processing Systems 26
  (NIPS)}, pages 3003--3011, 2013.

\bibitem[Puterman(2014)]{puterman2014markov}
Martin~L Puterman.
\newblock \emph{Markov decision processes: discrete stochastic dynamic
  programming}.
\newblock John Wiley \& Sons, 2014.

\bibitem[Rahimi and Recht(2007)]{RahimiR07random}
Ali Rahimi and Benjamin Recht.
\newblock Random features for large-scale kernel machines.
\newblock In John~C. Platt, Daphne Koller, Yoram Singer, and Sam~T. Roweis,
  editors, \emph{Proceedings of the 21st conference on advances in Neural
  Information Processing Systems}, NIPS '07, Vancouver, British Columbia,
  Canada, dec 2007. MIT Press.

\bibitem[Rasmussen and Williams(2006)]{Rasmussen2006}
C.~E. Rasmussen and C.~K.~I. Williams.
\newblock \emph{Gaussian processes for machine learning}.
\newblock MIT Press, 2006.

\bibitem[Russo and Van~Roy(2013)]{russo2013eluder}
Daniel Russo and Benjamin Van~Roy.
\newblock Eluder dimension and the sample complexity of optimistic exploration.
\newblock In \emph{Advances in Neural Information Processing Systems}, pages
  2256--2264, 2013.

\bibitem[Scarlett et~al.(2017)Scarlett, Bogunovic, and
  Cevher]{scarlett2017lower}
Jonathan Scarlett, Ilijia Bogunovic, and Volkan Cevher.
\newblock Lower bounds on regret for noisy gaussian process bandit
  optimization.
\newblock \emph{arXiv preprint arXiv:1706.00090}, 2017.

\bibitem[Sch\"{o}lkopf and Smola(2002)]{Scholkopf2002}
Bernhard Sch\"{o}lkopf and Alexander~J. Smola.
\newblock \emph{Learning with kernels: support vector machines, regularization,
  optimization, and beyond}.
\newblock MIT Press, Cambridge, Mass, 2002.

\bibitem[Song et~al.(2009)Song, Huang, Smola, and Fukumizu]{song2009hilbert}
Le~Song, Jonathan Huang, Alex Smola, and Kenji Fukumizu.
\newblock Hilbert space embeddings of conditional distributions with
  applications to dynamical systems.
\newblock In \emph{Proceedings of the 26th Annual International Conference on
  Machine Learning}, pages 961--968, 2009.

\bibitem[Song et~al.(2010{\natexlab{a}})Song, Boots, Siddiqi, Gordon, and
  Smola]{song2010hilbert}
Le~Song, Byron Boots, Sajid~M Siddiqi, Geoffrey Gordon, and Alex Smola.
\newblock Hilbert space embeddings of hidden markov models.
\newblock In \emph{Proceedings of the 27th International Conference on
  International Conference on Machine Learning}, pages 991--998,
  2010{\natexlab{a}}.

\bibitem[Song et~al.(2010{\natexlab{b}})Song, Gretton, and
  Guestrin]{song2010nonparametric}
Le~Song, Arthur Gretton, and Carlos Guestrin.
\newblock Nonparametric tree graphical models.
\newblock In \emph{Proceedings of the Thirteenth International Conference on
  Artificial Intelligence and Statistics}, pages 765--772, 2010{\natexlab{b}}.

\bibitem[Song et~al.(2013)Song, Fukumizu, and Gretton]{Song2013}
Le~Song, Kenji Fukumizu, and Arthur Gretton.
\newblock {Kernel embeddings of conditional distributions: A unified kernel
  framework for nonparametric inference in graphical models}.
\newblock \emph{IEEE Signal Processing Magazine}, 30\penalty0 (4):\penalty0
  98--111, 2013.
\newblock ISSN 10535888.
\newblock \doi{10.1109/MSP.2013.2252713}.

\bibitem[Srinivas et~al.(2009)Srinivas, Krause, Kakade, and
  Seeger]{srinivas2009gaussian}
Niranjan Srinivas, Andreas Krause, Sham~M Kakade, and Matthias Seeger.
\newblock Gaussian process optimization in the bandit setting: No regret and
  experimental design.
\newblock \emph{arXiv preprint arXiv:0912.3995}, 2009.

\bibitem[Sriperumbudur et~al.(2011)Sriperumbudur, Fukumizu, and
  Lanckriet]{Sriperumbudur2011}
Bharath~K. Sriperumbudur, Kenji Fukumizu, and Gert R.~G. Lanckriet.
\newblock {Universality, Characteristic Kernels and RKHS Embedding of
  Measures}.
\newblock \emph{Journal of Machine Learning Research (JMLR)}, 12:\penalty0
  2389--2410, 2011.

\bibitem[Steinwart and Christmann(2008)]{steinwart2008support}
Ingo Steinwart and Andreas Christmann.
\newblock \emph{Support vector machines}.
\newblock Springer Science \& Business Media, 2008.

\bibitem[Strehl et~al.(2009)Strehl, Li, and Littman]{strehl2009reinforcement}
Alexander~L Strehl, Lihong Li, and Michael~L Littman.
\newblock Reinforcement learning in finite mdps: \textsc{PAC} analysis.
\newblock \emph{Journal of Machine Learning Research}, 10\penalty0
  (Nov):\penalty0 2413--2444, 2009.

\bibitem[Vial et~al.(2022)Vial, Parulekar, Shakkottai, and
  Srikant]{Vial2022misspecified}
Daniel Vial, Advait Parulekar, Sanjay Shakkottai, and R~Srikant.
\newblock Improved algorithms for misspecified linear markov decision
  processes.
\newblock In \emph{International Conference on Artificial Intelligence and
  Statistics}, pages 4723--4746. PMLR, 2022.

\bibitem[Wang et~al.(2020)Wang, Salakhutdinov, and Yang]{wang2020provably}
Ruosong Wang, Ruslan Salakhutdinov, and Lin~F Yang.
\newblock Provably efficient reinforcement learning with general value function
  approximation.
\newblock \emph{arXiv preprint arXiv:2005.10804}, 2020.

\bibitem[Wang et~al.(2019)Wang, Wang, Du, and Krishnamurthy]{wang2019optimism}
Yining Wang, Ruosong Wang, Simon~S Du, and Akshay Krishnamurthy.
\newblock Optimism in reinforcement learning with generalized linear function
  approximation.
\newblock \emph{arXiv preprint arXiv:1912.04136}, 2019.

\bibitem[Yang and Wang(2019)]{yang2019reinforcement}
Lin~F Yang and Mengdi Wang.
\newblock Reinforcement leaning in feature space: Matrix bandit, kernels, and
  regret bound.
\newblock \emph{arXiv preprint arXiv:1905.10389}, 2019.

\bibitem[Yang et~al.(2020)Yang, Jin, Wang, Wang, and Jordan]{yang2020provably}
Zhuoran Yang, Chi Jin, Zhaoran Wang, Mengdi Wang, and Michael Jordan.
\newblock Provably efficient reinforcement learning with kernel and neural
  function approximations.
\newblock \emph{Advances in Neural Information Processing Systems}, 33, 2020.

\bibitem[Zanette and Brunskill(2019)]{zanette2019tighter}
Andrea Zanette and Emma Brunskill.
\newblock Tighter problem-dependent regret bounds in reinforcement learning
  without domain knowledge using value function bounds.
\newblock In \emph{International Conference on Machine Learning}, pages
  7304--7312, 2019.

\bibitem[Zanette et~al.(2020{\natexlab{a}})Zanette, Brandfonbrener, Brunskill,
  Pirotta, and Lazaric]{zanette2020frequentist}
Andrea Zanette, David Brandfonbrener, Emma Brunskill, Matteo Pirotta, and
  Alessandro Lazaric.
\newblock Frequentist regret bounds for randomized least-squares value
  iteration.
\newblock In \emph{International Conference on Artificial Intelligence and
  Statistics}, pages 1954--1964, 2020{\natexlab{a}}.

\bibitem[Zanette et~al.(2020{\natexlab{b}})Zanette, Lazaric, Kochenderfer, and
  Brunskill]{zanette2020learning}
Andrea Zanette, Alessandro Lazaric, Mykel Kochenderfer, and Emma Brunskill.
\newblock Learning near optimal policies with low inherent bellman error.
\newblock \emph{arXiv preprint arXiv:2003.00153}, 2020{\natexlab{b}}.

\end{thebibliography}

\pagebreak

\appendix

\section{Closed-form solution for the operator optimization problem}
In this section, we show that \autoref{eq:efficient-optimistic-planning} is a closed-form solution for the operator optimization problem in \autoref{eq:optimistic-planning}, which is of the form $\max_{\Theta_P \in \cC_t} \Esp_{X\sim P(\cdot|s,a)}[f(X)]$.
Although the term corresponding to $f$ in \autoref{eq:optimistic-planning} is not necessarily in the RKHS $\Hilbert_\psi$, we may for now assume that $f \in \Hilbert_\psi$. In this case, the problem above may be rewritten as a quadratically constrained linear program in $\cL(\cH_\phi,\cH_\psi)$:
\begin{equation}\label{eq:optimization-opnorm}
\max_{\Theta \in \cL(\cH_\phi,\cH_\psi)} \quad \inner{f}{\Theta \phi(s,a)}_{\cH_\psi}
    \quad\mathrm{s.t.} \quad  \norm{(\Theta-\hat\Theta_t)\operator{M}_t^{1/2}} \leq \beta_t(\delta/2) ~,
\end{equation}
where $\operator{M}_t  \in \cL(\cH_\phi, \cH_\phi)$. The problem above admits a closed-form solution by applying the Karush-Kuhn-Tucker (KKT) conditions. Directly oing so would require us to take derivatives of the operator norm. However, observe that for any operator $\operator{M}\in \cL(\cH_\phi,\cH_\psi)$ we have $\norm{\operator M} \leq \sqrt{\tr(\operator M^\top \operator M)}$,
where the latter corresponds to the Hilbert-Schmidt norm. Compared to the operator norm, we can easily take derivatives of the trace to compute the KKT conditions. In addition, due to the upper bound, any solution satisfying the Hilbert-Schmidt norm constraint is also a solution under the operator norm constraint. 
We replace \autoref{eq:optimization-opnorm} with the following problem:
\begin{equation*}
\max_{\Theta \in \cL(\cH_\phi,\cH_\psi)} \quad \inner{f}{\Theta \phi(s,a)}_{\cH_\psi}\quad
    \mathrm{s.t.} \quad  \tr((\Theta-\hat\Theta_t)\operator{M}_t(\Theta-\hat\Theta_t)^\top) \leq \beta_t(\delta/2)^2 ~.
\end{equation*}
Applying the KKT conditions, we solve $\nabla_\Theta \ell(\Theta, \eta) = 0$
% \begin{align}
%     \nabla_\Theta \ell(\Theta, \eta) &= 0\\
%     \tr((\Theta-\hat\Theta_t)\operator{M}_t(\Theta-\hat\Theta_t)^\top) &\leq \beta_t(\delta/2)^2
% \end{align}
with respect to $\Theta \in \cL(\cH_\phi,\cH_\psi)$ and $\eta \in \Real$, $\eta \geq 0$, where $
\ell (\Theta, \eta) := \inner{f}{\Theta \phi(s,a)}_{\cH_\psi} - \eta (\tr((\Theta-\hat\Theta_t)\operator{M}_t(\Theta-\hat\Theta_t)^\top) - \beta_t(\delta/2)^2)$.
First, by $\nabla_\Theta \ell(\Theta, \eta) =0$,  we have
\begin{align}
     f \otimes \phi(s,a) - 2 \eta (\Theta-\hat\Theta_t)\operator{M}_t = 0 
    \implies \Theta = \hat\Theta_t + \frac{1}{2\eta} (f\otimes\phi(s,a))\operator{M}_t^{-1}~.
    \label{eq:kkt-stationarity}
\end{align}
Now, note that, for quadratically constrained linear program, the maximum should lie at the border of the constrained set. Replacing the result above into the constraint, we obtain
\begin{equation*}
    \beta_t(\delta/2)^2 = \frac{1}{4\eta^2}\tr((f\otimes\phi(s,a))\operator{M}_t^{-1}(\phi(s,a)\otimes f)) 
    = \frac{1}{4\eta^2} \inner{\phi(s,a)}{\operator{M}_t^{-1}\phi(s,a)}_{\cH_\phi} \inner{f}{f}_{\cH_\psi}~,
    % = \frac{1}{4\eta^2} \norm{\phi(s,a)}_{\operator{M}_t^{-1}}^2 \norm{f}_{\cH_\psi}^2 ~,
\end{equation*}
so that $\eta = \frac{1}{2\beta_t(\delta/2)}\norm{\phi(s,a)}_{\operator{M}_t^{-1}} \norm{f}_{\cH_\psi}$.
Combining the latter with \autoref{eq:kkt-stationarity}, the solution to \autoref{eq:optimization-opnorm} is then given by
\begin{equation*}
    \Theta_* := \hat\Theta_t + \frac{\beta_t(\delta/2)}{\norm{\phi(s,a)}_{\operator{M}_t^{-1}} \norm{f}_{\cH_\psi}} (f\otimes\phi(s,a))\operator{M}_t^{-1}~,
\end{equation*}
which finally yields
\begin{equation*}
\begin{split}
    \max_{\Theta_P \in \cC_t} \Esp_{X\sim P(\cdot|s,a)}[f(X)] = \inner{f}{\Theta_* \phi(s,a)}_{\cH_\psi}
    % &= \inner{f}{\hat\Theta_t \phi(s,a)}_{\cH_\psi} + \frac{\beta_t(\delta/2) \inner{f}{(f\otimes\phi(s,a))\operator{M}_t^{-1}\phi(s,a)}_{\cH_\psi}}{\norm{f}_{\cH_\psi}\norm{\phi(s,a)}_{\operator{M}_t^{-1}}} \\
    &= \inner{f}{\hat\Theta_t \phi(s,a)}_{\cH_\psi}\!\! + \beta_t(\delta/2)\norm{f}_{\cH_\psi} \norm{\phi(s,a)}_{\operator{M}_t^{-1}}\\
    &= \inner{f}{\hat\Theta_t \phi(s,a)}_{\cH_\psi} \!\!+ \beta_t(\delta/2)\lambda^{-1/2}\norm{f}_{\cH_\psi}\sigma_{\phi,t}(s,a)~.
\end{split}
%\label{eq:optimal-opnorm-solution-rkhs}
\end{equation*}
Replacing $f$ by $V_{h+1}^t$ in the solution above and adding the reward function (cf. \autoref{eq:optimistic-planning}), we recover \autoref{eq:efficient-optimistic-planning}.

\newpage
%\begin{appendix}

\section{ Proof of main results}

\subsection{Proof of \autoref{lem:concentration}}

We first need to define the data-generating process and its properties. Let $\cF_{t,h-1}$ be the filtration induced by the sequence $\cD_{t} \cup \lbrace (s_{j}^t,a_{j}^t)\rbrace_{j\leq h}$, where $\cD_t$ denotes the replay buffer at the beginning of episode $t$. Note that $\cF_{t,0}=\cD_t$. The evaluation noise defined by $\epsilon_{h}^t := \psi(s_{h+1}^t) - \Theta_P\phi(s_h^t,a_h^t)$, for $s_{h+1}^t \sim P(\cdot|s_h^t, a_h^t)$, is such that:
\begin{equation*}
    \esp[\epsilon_h^t | \cF_{t,h-1}]=0 \quad \text{and} \quad \norm{\epsilon_h^t}_{\cH_\psi} \leq 2 \quad \text{a.s.}
\end{equation*}
We overload notation to define, for each pair $(t,h)$, the operator $\operator M_{t,h}=\operator M_t+\sum_{j\leq h}\phi(s^t_j,a^t_j)\otimes \phi(s^\tau_j,a^\tau_j)$, where $\operator M_t=\sum_{\tau < t,h\leq H}\phi(s^\tau_h,a^\tau_h)\otimes \phi(s^\tau_h,a^\tau_h)+\lambda \operator I$.
Note that $\operator M_{t,h} = \operator M_{t,h-1}+\phi(s_h^t,a_h^t)\otimes \phi(s_h^t,a_h^t)$, where $\operator M_{t,0}=\operator M_t=\operator M_{t-1,H}$. Define the estimate
\begin{equation*}
    \hat\Theta_{t,h} = \left(\sum_{\tau < t,h\leq H}\psi(s^\tau_{h+1})\otimes\phi(s^\tau_h,a^\tau_h)+\sum_{j \leq h}\psi(s^t_{j+1})\otimes\phi(s^t_j,a^t_j)\right)\operator M_{t,h}^{-1}~.
\end{equation*}
Note that $\hat \Theta_t=\hat \Theta_{t-1,H}$, where $\hat \Theta_t$ is given by
\eqref{eq:cmo-estimate}.
Consider the random variable
\begin{equation*}
    z_{t,h} = \norm{(\hat\Theta_{t,h}-\Theta_P)\operator M_{t,h}^{1/2}}^2_{\HS}=\tr\left((\hat\Theta_{t,h}-\Theta_P)\operator M_{t,h}(\hat\Theta_{t,h}-\Theta_P)^\top\right)~.
\end{equation*}
Define the operator $\operator X_{t,h}=(\hat\Theta_{t,h}-\Theta_P)\operator M_{t,h}$. Note that
\begin{equation*}
\operator X_{t,h}=\sum_{\tau < t,h\leq H}\epsilon_h^\tau\otimes\phi(s_h^\tau,a_h^\tau)+\sum_{j\leq h}\epsilon_j^t\otimes\phi(s_j^t,a_j^t)-\lambda \Theta_P = \operator X_{t,h-1}+\epsilon_h^t\otimes\phi(s_h^t,a_h^t)~,
\end{equation*}
where $\operator X_{t,0}=\operator X_{t-1,H}$.
We then have
\begin{align*}
    z_{t,h}&=\tr\left(\operator X_{t,h} \operator M_{t,h}^{-1}\operator X_{t,h}^\top\right)\\
    &= \underbrace{\tr\left(\operator X_{t,h-1} \operator M_{t,h}^{-1}\operator X_{t,h-1}^\top\right)}_{\text{A}}+2\underbrace{\tr\left(\epsilon_h^t\otimes\phi(s_h^t,a_h^t) \operator M_{t,h}^{-1}\operator X_{t,h-1}^\top\right)}_{\text{B}} + \underbrace{\tr\left(\epsilon_h^t\otimes\phi(s_h^t,a_h^t) \operator M_{t,h}^{-1}\phi(s_h^t,a_h^t)\otimes\epsilon_h^t\right)}_{\text{C}}~.
\end{align*}
Define the variance estimate
$\sigma^2_{\phi,t,h}(s,a) = \lambda \inner{\phi(s,a)}{\operator M_{t,h}^{-1}\phi(s,a)}_{\cH_\phi}$. Note that $\sigma^2_{\phi,t}(s,a)=\sigma^2_{\phi,t-1,H}(s,a)$. Now, the Sherman-Morrison formula yields
\begin{align*}
    \operator M_{t,h}^{-1}=(\operator M_{t,h-1}+\phi(s_h^t,a_h^t)\otimes\phi(s_h^t,a_h^t))^{-1} &= \operator M_{t,h-1}^{-1}-\frac{\operator M_{t,h-1}^{-1}\phi(s_h^t,a_h^t)\otimes\phi(s_h^t,a_h^t)\operator M_{t,h-1}^{-1}}{1+\inner{\phi(s_h^t,a_h^t)}{\operator M_{t,h-1}^{-1}\phi(s_h^t,a_h^t)}_{\cH_\phi}}\\
    &=\operator M_{t,h-1}^{-1}-\frac{\operator M_{t,h-1}^{-1}\phi(s_h^t,a_h^t)\otimes\phi(s_h^t,a_h^t)\operator M_{t,h-1}^{-1}}{1+\lambda^{-1}\sigma^2_{\phi,t,h-1}(s_h^t,a_h^t)}~.
\end{align*}
We thus have
\begin{align*}
    \text{A}&=\tr\left(\operator X_{t,h-1} \operator M_{t,h-1}^{-1}\operator X_{t,h-1}^\top\right)-\tr\left(\operator X_{t,h-1} \frac{\operator M_{t,h-1}^{-1}\phi(s_h^t,a_h^t)\otimes\phi(s_h^t,a_h^t)\operator M_{t,h-1}^{-1}}{1+\lambda^{-1}\sigma^2_{\phi,t,h-1}(s_h^t,a_h^t)}\operator X_{t,h-1}^\top\right)\nonumber\\ &\leq \tr\left(\operator X_{t,h-1} \operator M_{t,h-1}^{-1}\operator X_{t,h-1}^\top\right) = z_{t,h-1}~,\\
    \begin{split}
        \text{B}&= \tr\Bigg(\epsilon_h^t\otimes\phi(s_h^t,a_h^t) \operator M_{t,h-1}^{-1}\operator X_{t,h-1}^\top\\
        &\quad -\frac{\epsilon_h^t\otimes\phi(s_h^t,a_h^t)\operator M_{t,h-1}^{-1}\phi(s_h^t,a_h^t)\otimes\phi(s_h^t,a_h^t)\operator M_{t,h-1}^{-1}\operator X_{t,h-1}^\top}{1+\lambda^{-1}\sigma^2_{\phi,t,h-1}(s_h^t,a_h^t)}\Bigg)\nonumber\\
        &=\tr\left(\vphantom{\frac{\inner{A}{\hat B^\top}_{\cH_\phi}}{\sigma^2_\phi}} \epsilon_h^t\otimes\phi(s_h^t,a_h^t) (\hat\Theta_{t,h-1}-\Theta_P)^\top\right.\\
        &\quad\left.-\frac{\inner{\phi(s_h^t,a_h^t)}{\operator M_{t,h-1}^{-1}\phi(s_h^t,a_h^t)}_{\cH_\phi}\epsilon_h^t\otimes\phi(s_h^t,a_h^t)(\hat\Theta_{t,h-1}-\Theta_P)^\top}{1+\lambda^{-1}\sigma^2_{\phi,t,h-1}(s_h^t,a_h^t)}\right)\nonumber\\
        &=\left(1-\frac{\lambda^{-1}\sigma^2_{\phi,t,h-1}(s_h^t,a_h^t)}{1+\lambda^{-1}\sigma^2_{\phi,t,h-1}(s_h^t,a_h^t)}\right)\tr\left(\epsilon_h^t\otimes\phi(s_h^t,a_h^t) (\hat\Theta_{t,h-1}-\Theta_P)^\top\right)\nonumber\\
        &=\frac{\inner{(\hat\Theta_{t,h-1}-\Theta_P)\phi(s_h^t,a_h^t)}{\epsilon_h^t}_{\cH_\psi}}{1+\lambda^{-1}\sigma^2_{\phi,t,h-1}(s_h^t,a_h^t)}~,\quad \text{and}
    \end{split}\\
    \begin{split}
        \text{C}&= \norm{\epsilon_h^t}_{\cH_\psi}^2\Bigg(\inner{\phi(s_h^t,a_h^t)}{\operator M_{t,h-1}^{-1}\phi(s_h^t,a_h^t)}_{\cH_\phi}\\
        &\qquad -\inner{\phi(s_h^t,a_h^t)}{\frac{\operator M_{t,h-1}^{-1}\phi(s_h^t,a_h^t)\otimes\phi(s_h^t,a_h^t)\operator M_{t,h-1}^{-1}}{1+\lambda^{-1}\sigma^2_{\phi,t,h-1}(s_h^t,a_h^t)}\phi(s_h^t,a_h^t)}_{\cH_\phi}\Bigg)\nonumber\\
    &=\norm{\epsilon_h^t}_{\cH_\psi}^2\left(\lambda^{-1}\sigma^2_{\phi,t,h-1}(s_h^t,a_h^t)-\frac{\lambda^{-2}\sigma^4_{\phi,t,h-1}(s_h^t,a_h^t)}{1+\lambda^{-1}\sigma^2_{\phi,t,h-1}(s_h^t,a_h^t)}\right)\\
    &=\norm{\epsilon_h^t}_{\cH_\psi}^2\frac{\lambda^{-1}\sigma^2_{\phi,t,h-1}(s_h^t,a_h^t)}{1+\lambda^{-1}\sigma^2_{\phi,t,h-1}(s_h^t,a_h^t)}~.
    \end{split}
\end{align*}
Putting these together, we have
\begin{align*}
    z_{t,h} &\leq z_{t,h-1}+2\frac{\inner{(\hat\Theta_{t,h-1}-\Theta_P)\phi(s_h^t,a_h^t)}{\epsilon_h^t}_{\cH_\psi}}{1+\lambda^{-1}\sigma^2_{\phi,t,h-1}(s_h^t,a_h^t)}+\norm{\epsilon_h^t}_{\cH_\psi}^2\frac{\lambda^{-1}\sigma^2_{\phi,t,h-1}(s_h^t,a_h^t)}{1+\lambda^{-1}\sigma^2_{\phi,t,h-1}(s_h^t,a_h^t)}\\
    &\leq \lambda \norm{\Theta_P}_{\HS}^2+2\sum_{(\tau,j)\leq (t,h)}\frac{\inner{(\hat\Theta_{\tau,j-1}-\Theta_P)\phi(s_j^\tau,a_j^\tau)}{\epsilon_j^\tau}_{\cH_\psi}}{1+\lambda^{-1}\sigma^2_{\phi,\tau,j-1}(s_j^\tau,a_j^\tau)}\\
    &\quad+\sum_{(\tau,j)\leq (t,h)}\norm{\epsilon_j^\tau}_{\cH_\psi}^2\frac{\lambda^{-1}\sigma^2_{\phi,\tau,j-1}(s_j^\tau,a_j^\tau)}{1+\lambda^{-1}\sigma^2_{\phi,\tau,j-1}(s_j^\tau,a_j^\tau)}~.
\end{align*}
% Let $\beta_{t,h}(\delta) = 2\lambda\norm{\Theta_P}_{\HS}^2+ 256(1+\lambda^{-1})\gamma_{\phi,\lambda,t}\log(2t^2H/\delta)$. 
For any $B_P \geq \norm{\Theta_P}_{\HS}$, define
\begin{equation*}
    \beta_{t,h}(\delta):= \sqrt{2\lambda B_P^2+ 256(1+\lambda^{-1})\log (\det( \operator I+\lambda^{-1}\operator K_{\phi,t,h})^{1/2})\log(2t^2H/\delta)}\,,
\end{equation*}
where $\operator K_{\phi,t,h}:=[k_{\phi}((s_j^\tau,a_j^\tau),(s_{j'}^{\tau'},a_{j'}^{\tau'}))]_{(\tau,j),(\tau',j')\leq (t,h)}$ denotes the gram-matrix at step $h$ of episode $t$. Note that $\operator K_{\phi,t}=\operator K_{\phi,t-1,H}$. Now define an event $\cE_{t,h}$ as
\begin{equation*}
    \cE_{t,h} = \mathbb{I}\lbrace z_{\tau,j} \leq \beta^2_{\tau,j}(\delta),\quad \forall (\tau,j) \leq (t,h) \rbrace~.
\end{equation*}
Note that $\cE_{t,h}=1$ implies $\cE_{t,h-1}=1$ for all $(t,h) \geq (1,1)$, where $\cE_{t,0}=\cE_{t-1,H}$.
Now we define a sequence of random variables $\lbrace y_{t,h} \rbrace_{t ,h}$ as
$y_{t,h}= \cE_{t,h-1}\frac{\inner{(\hat\Theta_{t,h-1}-\Theta_P)\phi(s_h^t,a_h^t)}{\epsilon_h^t}_{\cH_\psi}}{1+\lambda^{-1}\sigma^2_{\phi,t,h-1}(s_h^t,a_h^t)}$. Note that $y_{t,h}$ is $\cF_{t,h}$-measurable and $\esp[y_{t,h}|\cF_{t,h-1}]=0$.
% $\esp[y_{t,h}|\cF_{t,h-1}] = \cE_{t,h-1} \frac{\inner{(\hat\Theta_{t,h-1}-\Theta_P)\phi(s_h^t,a_h^t)}{\esp[\epsilon_h^t|\cF_{t,h-1}]}_{\cH_\psi}}{1+\lambda^{-1}\sigma^2_{\phi,t,h-1}(s_h^t,a_h^t)}=0$.
Hence $\lbrace y_{t,h}\rbrace_{t,h}$ is a martingale difference sequence w.r.t. the filtration $\lbrace \cF_{t,h}\rbrace_{t,h}$. Note that
\begin{align*}
    |y_{t,h}| &\leq \cE_{t,h-1} \norm{\epsilon_h^t}_{\cH_\psi}\frac{\norm{(\hat\Theta_{t,h-1}-\Theta_P) \operator M_{t,h-1}^{1/2}}_{\HS}\norm{\operator M_{t,h-1}^{-1/2}\phi(s_h^t,a_h^t)}_{\cH_\phi}}{1+\lambda^{-1}\sigma^2_{\phi,t,h-1}(s_h^t,a_h^t)}\nonumber\\
    & \leq \beta_{t,h-1}(\delta) \norm{\epsilon_h^t}_{\cH_\psi}\frac{\lambda^{-1}\sigma_{\phi,t,h-1}(s_h^t,a_h^t)}{1+\lambda^{-1}\sigma^2_{\phi,t,h-1}(s_h^t,a_h^t)} \leq 2\beta_{t,h-1}(\delta) \lambda^{-1}\sigma_{\phi,t,h-1}(s_h^t,a_h^t)~,
\end{align*}
since $\norm{\epsilon_h^t}_{\cH_\psi} \leq 2$ a.s.
We then have
% \begin{align}
%     \sum_{\tau=1}^{t}\sum_{j=1}^{h}\V[y_{\tau,j}|\cF_{\tau,j-1}]
%     \leq 4\beta_t(\delta)\sum_{\tau=1}^{t}\sum_{j=1}^{h}\lambda^{-1}\sigma^2_{\phi,\tau,j-1}(s_j^\tau,a_j^\tau)\leq 8(1+\lambda^{-1})\beta_t(\delta)\gamma_{\phi,\lambda,t}~,
% \end{align}
\begin{align*}
    \sum_{(\tau,j)\leq (t,h)}|y_{\tau,j}|^2
    &\leq \sum_{(\tau,j)\leq (t,h)}4\beta^2_{\tau,j-1}(\delta)\lambda^{-1}\sigma^2_{\phi,\tau,j-1}(s_j^\tau,a_j^\tau)\\
    &\leq 8(1+\lambda^{-1})\beta^2_{t,h}(\delta)\log (\det( \operator I+\lambda^{-1}\operator K_{\phi,t,h})^{1/2})~,
\end{align*}
where we have used that $\sum_{(\tau,j)\leq (t,h)}\lambda^{-1}\sigma^2_{\phi,\tau,j-1}(s_j^\tau,a_j^\tau)\leq 2(1+\lambda^{-1})\log (\det( \operator I+\lambda^{-1}\operator K_{\phi,t,h})^{1/2})$.
Then, by Azuma-Hoeffding's inequality, with probability at least $1-\frac{\delta}{2t^2H}$, we have
    \begin{equation*}
        \sum_{(\tau,j)\leq (t,h)}y_{\tau,j} \leq \sqrt{16(1+\lambda^{-1})\beta^2_{t,h}(\delta)\log (\det( \operator I+\lambda^{-1}\operator K_{\phi,t,h})^{1/2})\log(2t^2H/\delta)} \leq \beta^2_{t,h}(\delta)/4~,
    \end{equation*}
    as $\beta^2_{t,h}(\delta) \geq 256(1+\lambda^{-1})\log (\det( \operator I+\lambda^{-1}\operator K_{\phi,t,h})^{1/2})\log(2t^2H/\delta)$. Hence, by an union bound
    \begin{equation}\label{eq:conc}
    \prob{\exists\; (t,h) \geq (1,1): \sum\nolimits_{(\tau,j)\leq (t,h)}y_{\tau,j} > \beta^2_{t,h}(\delta)/4}\leq \sum\nolimits_{t=1}^{\infty}\sum\nolimits_{h=1}^{H}\frac{\delta}{2t^2H}\leq \frac{\delta\pi^2}{12}\leq \delta.
\end{equation} 
% \begin{mylemma}\label{lem:conc}
% For any $0< \delta \leq 1$, we have
% \begin{equation*}
%     \prob{\forall (t,h) \geq (1,1), \sum_{(\tau,j)\leq (t,h)}y_{\tau,j} \leq \beta^2_{t,h}(\delta)/4}\geq 1-\delta~.
% \end{equation*} 
% \end{mylemma}
% \begin{proof}
   % This, with the choice of $\beta_{t,h}(\delta)$, completes the proof.
% \end{proof} 
% \begin{mylemma}\label{lem:confidence-ball}
%     For any $0< \delta \leq 1$, we have
% \begin{equation*}
%     \prob{\forall (t,h) \geq (1,0),\;  z_{t,h} \leq \beta^2_{t,h}(\delta)} \geq 1-\delta~.
% \end{equation*} 
% \end{mylemma}
% \begin{proof}
Now, it suffices to show that $ z_{t,h} \leq \beta^2_{t,h}(\delta)$ for all $(t,h) \geq (1,0)$ given $\sum_{(\tau,j)\leq (t,h)}y_{\tau,j} \leq \beta^2_{t,h}(\delta)/4$, for all $(t,h) \geq (1,1)$. We will show this by induction on $(t,h)$. For the base case $(t,h)=(1,0)$, we have $z_{1,0}=\lambda\norm{\Theta_P}_{\HS}^2 \leq \beta^2_{1,0}(\delta)$. Now by inductive hypothesis, let $z_{\tau,j} \leq \beta^2_{\tau,j}(\delta)$ for all $(1,0) \leq (\tau,j) \leq (t,h-1)$. We then have $\cE_{\tau,j}=1$ for all $(1,0) \leq (\tau,j) \leq (t,h-1)$. Therefore, we have
    \begin{align*}
        z_{t,h} &\leq \lambda \norm{\Theta_P}_{\HS}^2+2\!\!\!\!\sum_{(\tau,j)\leq (t,h)}\!\!\!\!\cE_{\tau,j-1}\frac{\inner{(\hat\Theta_{\tau,j-1}-\Theta_P)\phi(s_j^\tau,a_j^\tau)}{\epsilon_j^\tau}_{\cH_\psi}}{1+\lambda^{-1}\sigma^2_{\phi,\tau,j-1}(s_j^\tau,a_j^\tau)}\\
        &\quad +\!\!\!\!\sum_{(\tau,j)\leq (t,h)}\!\!\!\!\norm{\epsilon_j^\tau}_{\cH_\psi}^2\frac{\lambda^{-1}\sigma^2_{\phi,\tau,j-1}(s_j^\tau,a_j^\tau)}{1+\lambda^{-1}\sigma^2_{\phi,\tau,j-1}(s_j^\tau,a_j^\tau)}\\
        &\leq \lambda \norm{\Theta_P}_{\HS}^2+2\sum_{(\tau,j)\leq (t,h)}y_{\tau,j}+4\sum_{(\tau,j)\leq (t,h)}\frac{\lambda^{-1}\sigma^2_{\phi,\tau,j-1}(s_j^\tau,a_j^\tau)}{1+\lambda^{-1}\sigma^2_{\phi,\tau,j-1}(s_j^\tau,a_j^\tau)}\\
        &\leq  \lambda \norm{\Theta_P}_{\HS}^2 + \beta^2_{t,h}(\delta)/2+8(1+\lambda^{-1})\log (\det( \operator I+\lambda^{-1}\operator K_{\phi,t,h})^{1/2}) \leq \beta^2_{t,h}(\delta)~,
    \end{align*}
    as $\beta^2_{t,h}(\delta) \geq 2\lambda\norm{\Theta_P}_{\HS}^2+16(1+\lambda^{-1})\log (\det( \operator I+\lambda^{-1}\operator K_{\phi,t,h})^{1/2})$. Now the proof follows from~\autoref{eq:conc} and noting that
    $\norm{(\hat\Theta_{t}-\Theta_P)\operator M_{t}^{1/2}}_{\HS}=z_{t-1,H}^{1/2}\leq \beta_{t-1,H}(\delta) \leq \beta_t(\delta)$.

\subsection{Regret analysis of \CMERL}
To prove the regret bound in \autoref{thm:cumulative-regret-final}, we establish a sequence of intermediate results to bound the performance gap between the optimal policy and the policy followed by \CMERL. In \autoref{lem:optimism}, we start with a control on the $Q$-function estimates $Q_h^t$, which in turn leads to the regret bound, as our policy is based on $Q_h^t$. The result implies that as long as the true transition distribution lies in the confidence set $C_t$, the $Q$-updates are optimistic estimates of the optimal $Q$-values and thus, allow us to pick an optimistic action while sufficiently exploring the state space. 
\begin{mylemma}[Optimism]\label{lem:optimism}
Let $P \in \cC_t$. Then, $Q^\star_h(s, a) \leq Q^t_h(s,a)$ for all $h$, and $(s,a)$\,.
\end{mylemma}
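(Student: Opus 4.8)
The plan is to prove the statement by backward induction on the step index $h$, carrying along the stronger pointwise claim that $V^\star_{h+1}(s) \le V^t_{h+1}(s)$ for every $s$. Before starting the induction I would record the structural fact on which everything rests: by the optimistic closure property (\autoref{ass:opt-closure}) every estimate $V_{h+1}^t$ belongs to $\cH_\psi$, so the closed-form update in \autoref{eq:efficient-optimistic-planning} is genuinely the optimistic planning value, i.e. $Q_h^t(s,a) = R(s,a) + \max_{\Theta_{P'} \in \cC_t} \esp_{X \sim P'(\cdot|s,a)}[V_{h+1}^t(X)]$ as in \autoref{eq:optimistic-planning}. This is the representation I will exploit, since it exposes the dependence on the confidence set $\cC_t$ directly.

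For the base case, both $V^\star_{H+1}$ and $V^t_{H+1}$ are identically zero, so the claim holds trivially. For the inductive step, I would assume $V^\star_{h+1} \le V^t_{h+1}$ pointwise and bound $Q^\star_h$ in three moves, starting from the Bellman optimality equation \autoref{eq:opt-Bellman}. First, monotonicity of the expectation under the fixed measure $P(\cdot|s,a)$ together with the inductive hypothesis gives $\esp_{X\sim P(\cdot|s,a)}[V^\star_{h+1}(X)] \le \esp_{X\sim P(\cdot|s,a)}[V^t_{h+1}(X)]$. Second, since $V^t_{h+1} \in \cH_\psi$ and $\Theta_P \in \cC_t$ by hypothesis, the reproducing identity \autoref{eq:cme} turns this expectation into $\inner{V^t_{h+1}}{\Theta_P \phi(s,a)}_{\cH_\psi}$, which is dominated by the maximum of $\inner{V^t_{h+1}}{\Theta_{P'}\phi(s,a)}_{\cH_\psi}$ over $\Theta_{P'} \in \cC_t$. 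Third, adding $R(s,a)$ to this maximum returns exactly $Q^t_h(s,a)$ by \autoref{eq:optimistic-planning}. Chaining the three inequalities yields $Q^\star_h(s,a) \le Q^t_h(s,a)$, which is the assertion of the lemma at step $h$.

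To reinstate the induction hypothesis one step earlier, I would lift this $Q$-inequality to the value functions: taking the maximum over actions gives $V^\star_h(s) = \max_a Q^\star_h(s,a) \le \max_a Q^t_h(s,a)$, and since rewards lie in $[0,1]$ over at most $H$ remaining steps we also have $V^\star_h(s) \le H$; combining these with the truncation in \autoref{eq:optimistic-values} yields $V^\star_h(s) \le \min\{H, \max_a Q^t_h(s,a)\} = V^t_h(s)$. The delicate points I expect are bookkeeping rather than analytic: I must ensure the optimistic-planning reformulation is legitimate, which is exactly where \autoref{ass:opt-closure} enters (it is $V^t_{h+1}$, not $V^\star_{h+1}$, that needs to be an RKHS element), and I must not drop the $\min\{H,\cdot\}$ truncation, since it is only the bound $V^\star_h \le H$ that lets the truncated estimate still dominate the optimal value. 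Notably, the argument never requires $V^\star_{h+1}$ itself to lie in $\cH_\psi$; it uses only the pointwise comparison against $V^t_{h+1}$.
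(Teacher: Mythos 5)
Your proof is correct and follows essentially the same route as the paper's: backward induction, using monotonicity of expectation under the true $P$, the hypothesis $\Theta_P \in \cC_t$ to dominate by the maximum over the confidence set, and the fact that $V^\star_h \le H$ so the truncation $\min\{H,\cdot\}$ preserves the inequality. Your version merely makes two points more explicit than the paper does inside the proof — that \autoref{ass:opt-closure} is what legitimizes identifying the closed-form update \autoref{eq:efficient-optimistic-planning} with the optimistic planning value \autoref{eq:optimistic-planning}, and that only $V^t_{h+1}$ (not $V^\star_{h+1}$) needs to lie in $\cH_\psi$ — both of which are consistent with the paper's argument.
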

\begin{proof}
We prove the lemma by induction on $h$. When $h = H$, the inequality holds by definition. Now we assume that the lemma holds for some $h'=h+1$, where $1 \leq h < H$. This implies that for all $s \in \cS$,
\beqn
V_{h+1}^t(s) = \min \left\lbrace H, \max_{a \in \cA}Q_{h+1}^t(s,a)]\right \rbrace \geq \min \left\lbrace H, \max_{a \in \cA}Q_{h+1}^\star(s,a)]\right \rbrace = V^{\star}_{h+1}(s)~.
\eeqn
We then have, for all $(s,a) \in \cS \times \cA$, that
\beqan
Q^\star_{h}(s,a) &=& R(s,a) + \esp_{X \sim P(\cdot|s,a)}\left[V_{h+1}^{\star}(X)\right]\\
& \leq &  R(s,a) + \esp_{X \sim P(\cdot|s,a)}\left[V_{h+1}^{t}(X)\right]\\
&\leq &  R(s,a) + \max_{\Theta_{P'}\in \cC_t}\esp_{X \sim P'(\cdot|s,a)}\left[V_{h+1}^{\star}(X)\right] \leq Q_{h}^t(s,a)~,
\eeqan
where the third step follows from $\Theta_P \in \cC_t$.
\end{proof}
% Let $g_h^t(s):=V_h^t(s)-V_h^{\pi^t}(s)$ denote the gap between the most optimistic value and the actual value obtained starting from state $s$ at the step $h$ of episode $t$.
\begin{mylemma}[Gap between optimistic and actual values]\label{lem:gap-optimisitc-actual-value}
Let $g_h^t(s)=V_h^t(s)-V_h^{\pi^t}(s)$, and $m_h^t=\esp_{X \sim P(\cdot|s_h^t,a_h^t)}\left[g_{h+1}^t(X)\right]-g_{h+1}^{t}(s_{h+1}^t)$. Then
\begin{align*}
 g_1^t(s_1^t) \leq \sum\nolimits_{h=1}^{H}Q_h^t(s_h^t,a_h^t)-\left(R(s_h^t,a_h^t)+\esp_{X \sim P(\cdot|s_h^t,a_h^t)}\left[V_{h+1}^{t}(X)\right]\right) + \sum\nolimits_{h=1}^{H}m_h^t~.  
\end{align*}
\end{mylemma}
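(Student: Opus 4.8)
The plan is to establish a one-step recursion for $g_h^t$ evaluated along the trajectory $(s_h^t)_{h}$ actually visited in episode $t$, and then telescope it over $h=1,\dots,H$. The whole argument is an identity-plus-inequality manipulation of the Bellman equations, so optimism (\autoref{lem:optimism}) is \emph{not} needed: this lemma compares the optimistic estimate $V_h^t$ to the value $V_h^{\pi^t}$ of the followed policy, irrespective of whether $V_h^t$ dominates $V_h^\star$.

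First I would upper bound $V_h^t(s_h^t)$ by $Q_h^t(s_h^t,a_h^t)$. By the definition \eqref{eq:optimistic-values} and the greedy action rule \eqref{eq:action-selection}, $V_h^t(s_h^t)=\min\{H,\max_a Q_h^t(s_h^t,a)\}=\min\{H,Q_h^t(s_h^t,a_h^t)\}\le Q_h^t(s_h^t,a_h^t)$. For the subtracted term, the Bellman equation \eqref{eq:Bellman} applied to $\pi^t$ (with $a_h^t=\pi_h^t(s_h^t)$) gives $V_h^{\pi^t}(s_h^t)=R(s_h^t,a_h^t)+\esp_{X\sim P(\cdot|s_h^t,a_h^t)}[V_{h+1}^{\pi^t}(X)]$. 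Subtracting these two displays yields
\[
g_h^t(s_h^t)\le Q_h^t(s_h^t,a_h^t)-R(s_h^t,a_h^t)-\esp_{X\sim P(\cdot|s_h^t,a_h^t)}\!\left[V_{h+1}^{\pi^t}(X)\right].
\]
Adding and subtracting $\esp_{X\sim P(\cdot|s_h^t,a_h^t)}[V_{h+1}^t(X)]$ and recognising $\esp_X[V_{h+1}^t(X)-V_{h+1}^{\pi^t}(X)]=\esp_X[g_{h+1}^t(X)]$, I would then use the definition of $m_h^t$ to write $\esp_X[g_{h+1}^t(X)]=m_h^t+g_{h+1}^t(s_{h+1}^t)$, producing the per-step recursion
\[
g_h^t(s_h^t)\le\Big(Q_h^t(s_h^t,a_h^t)-R(s_h^t,a_h^t)-\esp_{X\sim P(\cdot|s_h^t,a_h^t)}[V_{h+1}^t(X)]\Big)+m_h^t+g_{h+1}^t(s_{h+1}^t).
\]

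Finally I would telescope this inequality from $h=1$ down to $h=H$, applying the recursion at the next visited state $s_{h+1}^t$ at each step, and close the sum using the terminal condition $V_{H+1}^t\equiv V_{H+1}^{\pi^t}\equiv 0$, so that $g_{H+1}^t(s_{H+1}^t)=0$. This recovers exactly the claimed bound. The only point needing care — rather than a genuine obstacle — is the truncation $\min\{H,\cdot\}$ in $V_h^t$: one must observe that it can only \emph{decrease} $V_h^t$, so the inequality $V_h^t(s_h^t)\le Q_h^t(s_h^t,a_h^t)$ is preserved, while it does not affect the greedy-action identity $\max_a Q_h^t(s_h^t,a)=Q_h^t(s_h^t,a_h^t)$. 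Everything else is a routine telescoping computation.
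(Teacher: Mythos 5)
Your proposal is correct and matches the paper's own proof essentially step for step: the same use of the greedy-action identity $V_h^t(s_h^t)=\min\{H,Q_h^t(s_h^t,a_h^t)\}\le Q_h^t(s_h^t,a_h^t)$, the Bellman equation for $\pi^t$, the decomposition of $\esp_X[g_{h+1}^t(X)]$ via $m_h^t$, and the telescoping recursion closed by $g_{H+1}^t\equiv 0$. Your side observation that optimism is not needed here is also accurate — the paper's proof of this lemma likewise makes no use of \autoref{lem:optimism}.
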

\begin{proof}
Note that $a_h^t=\pi_h^t(s_h^t)=\argmax_{a \in \cA}Q_h^t(s_h^t,a)$. Therefore
\beqan
V_h^{\pi^t}(s_h^t) &=&  R(s_h^t,a_h^t)+\esp_{X \sim P(\cdot|s_h^t,a_h^t)}\left[V_{h+1}^{\pi^t}(X)\right]\\
& = & R(s_h^t,a_h^t)+\esp_{X \sim P(\cdot|s_h^t,a_h^t)}\left[V_{h+1}^t(X)\right] - \esp_{X \sim P(\cdot|s_h^t,a_h^t)}\left[g_{h+1}^t(X)\right]\\
& = & R(s_h^t,a_h^t)+\esp_{X \sim P(\cdot|s_h^t,a_h^t)}\left[V_{h+1}^t(X)\right] - g_{h+1}^{t}(s_{h+1}^t)-m_h^t~.
\eeqan
We also have $
V_h^t(s_h^t) = \min \left \lbrace H, \max_{a \in \cA}Q_h^t(s_h^t,a)\right\rbrace= \min \left \lbrace H, Q_h^t(s_h^t,a_h^t)\right\rbrace \leq Q_h^t(s_h^t,a_h^t)$.
Therefore, $
g_h^t(s_h^t) \leq Q_h^t(s_h^t,a_h^t)-\left(R(s_h^t,a_h^t)+\esp_{X \sim P(\cdot|s_h^t,a_h^t)}\left[V_{h+1}^{t}(X)\right]\right) + g_{h+1}^{t}(s_{h+1}^t)+m_h^t$.
Since $g_{H+1}^t(s)=0$ for all $s \in \cS$, a simple recursion over all $h \in [H]$ completes the proof.
\end{proof}
% yields
% \beqan
% g_1^{t}(s_1^t) &\leq & \sum_{h=1}^{H}Q_h^t(s_h^t,a_h^t)-\left(R(s_h^t,a_h^t)+\esp_{X \sim P(\cdot|s_h^t,a_h^t)}\left[V_{h+1}^{t}(X)\right]\right) + \sum_{h=1}^{H}m_h^t~,
% & = & \sum_{h=1}^{H} \max_{\Theta_{P'}\in \cC_t} \int_{\cS}V_{h+1}^{t}(s')\left(P'(s'|s_h^t,a_h^t)- P(s'|s_h^t,a_h^t)\right)\diff s' + \sum_{h=1}^{H}m_h^t~.
%\eeqan

\begin{mylemma}[Cumulative regret expressed through $Q$-estimates] \label{lem:cumulative-regret}
Let $\Theta_P \in \cC_t$ for all $t \geq 1$. Then for any $\delta \in (0,1]$, the following holds with probability at least $1-\delta/2 :$
\beqn
\cR(N) \leq  \sum\nolimits_{t \leq T,h\leq H}Q_h^t(s_h^t,a_h^t)-\left(R(s_h^t,a_h^t)+\esp_{X \sim P(\cdot|s_h^t,a_h^t)}\left[V_{h+1}^{t}(X)\right]\right) + 2H\sqrt{2N\log(2/\delta)}~.
\eeqn
\end{mylemma}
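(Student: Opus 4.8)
The plan is to convert the cumulative regret into a telescoping sum of per-step Bellman residuals plus a zero-mean fluctuation term, and then to control the fluctuation by a Hoeffding-type concentration inequality. First I would invoke the optimism guarantee of \autoref{lem:optimism}: under the standing hypothesis $\Theta_P \in \cC_t$ for all $t$, the pointwise bound $Q_h^\star \leq Q_h^t$ holds. Since $V_1^\star(s) = \max_{a}Q_1^\star(s,a) \leq \max_{a}Q_1^t(s,a)$ and also $V_1^\star(s) \leq H$, taking the minimum with $H$ as in \autoref{eq:optimistic-values} yields $V_1^\star(s_1^t) \leq V_1^t(s_1^t)$ at every episode. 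Consequently $\cR(N) = \sum_{t=1}^T [V_1^\star(s_1^t) - V_1^{\pi^t}(s_1^t)] \leq \sum_{t=1}^T g_1^t(s_1^t)$, where $g_1^t(s) = V_1^t(s) - V_1^{\pi^t}(s)$ is the gap between the optimistic value and the value actually realized by the executed policy.

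Next I would apply \autoref{lem:gap-optimisitc-actual-value} to each term $g_1^t(s_1^t)$, which unrolls the gap along the trajectory of episode $t$ into a sum of one-step residuals $Q_h^t(s_h^t,a_h^t) - (R(s_h^t,a_h^t) + \esp_{X \sim P(\cdot|s_h^t,a_h^t)}[V_{h+1}^t(X)])$ plus the sum $\sum_{h=1}^H m_h^t$ of fluctuation terms. Summing over $t=1,\dots,T$ then produces exactly the claimed leading double sum over $(t,h)$, together with the aggregate fluctuation $\sum_{t\leq T,\,h\leq H} m_h^t$, which is all that remains to be bounded.

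The remaining and main step is to show $\sum_{t\leq T,\,h\leq H} m_h^t \leq 2H\sqrt{2N\log(2/\delta)}$ with probability at least $1-\delta/2$. I would order the pairs $(t,h)$ lexicographically and work with the filtration $\cF_{t,h}$ generated by the trajectory up to and including the action $a_h^t$. Because the policy $\pi^t$, and hence the functions $V_{h+1}^t$ and $V_{h+1}^{\pi^t}$, are fixed at the start of episode $t$, the gap function $g_{h+1}^t$ is $\cF_{t,h}$-measurable, and since $s_{h+1}^t \sim P(\cdot|s_h^t,a_h^t)$ we get $\esp[g_{h+1}^t(s_{h+1}^t)\mid \cF_{t,h}] = \esp_{X\sim P(\cdot|s_h^t,a_h^t)}[g_{h+1}^t(X)]$; thus $\esp[m_h^t\mid \cF_{t,h}]=0$ and $\{m_h^t\}$ is a martingale difference sequence. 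For the magnitude, both $V_{h+1}^t$ and $V_{h+1}^{\pi^t}$ take values in $[0,H]$ (the former by the truncation at $H$, the latter because rewards lie in $[0,1]$ over at most $H$ remaining steps), so $g_{h+1}^t \in [-H,H]$ and hence $|m_h^t| \leq 2H$. Applying the Azuma--Hoeffding inequality to the $N=TH$ terms, each bounded by $2H$, gives a tail of $\exp(-u^2/(8NH^2))$; setting this equal to $\delta/2$ produces the deviation $u = 2H\sqrt{2N\log(2/\delta)}$, and combining with the two deterministic inequalities above completes the proof. The only genuine obstacle is the careful measurability and filtration bookkeeping that makes $\{m_h^t\}$ a martingale difference sequence with the stated uniform bound; once that is in place the concentration step is standard.
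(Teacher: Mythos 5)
Your proposal is correct and takes essentially the same route as the paper's own proof: optimism (\autoref{lem:optimism}) to get $\cR(N) \leq \sum_{t}g_1^t(s_1^t)$, the per-episode unrolling of \autoref{lem:gap-optimisitc-actual-value}, and Azuma--Hoeffding applied to the martingale difference sequence $(m_h^t)_{t,h}$ with $|m_h^t|\leq 2H$, yielding the deviation $2H\sqrt{2N\log(2/\delta)}$. The only difference is that you make explicit the filtration/measurability bookkeeping and the boundedness of $g_{h+1}^t$ that the paper asserts without detail.
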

\begin{proof}
If $\Theta_P \in \cC_t$ for all $t$, we have from Lemma \ref{lem:optimism} that
\beqn
\forall t \geq 1, \quad V_1^t(s_1^t) = \min \left\lbrace H, \max_{a \in \cA}Q_1^t(s_1^t,a)\right \rbrace \geq \min \left\lbrace H, \max_{a \in \cA}Q_1^\star(s_1^t,a)\right \rbrace = V^{\star}_1(s_1^t)~.
\eeqn
Therefore the cumulative regret after $N=TH$ steps is given by
\beqn
\cR(N) =\sum\nolimits_{t=1}^{T} \left[V^{\star}_{1}(s_1^t)-V^{\pi^t}_{1}(s_1^t)\right] \leq \sum\nolimits_{t=1}^{T}\left(V_1^t(s_1^t)-V_1^{\pi^t}(s_1^t)\right)=\sum\nolimits_{t=1}^{T}g_1^t(s_1^t)~.
\eeqn
We then have from Lemma \ref{lem:gap-optimisitc-actual-value} that
\beqn
\cR(N) \leq \sum\nolimits_{t \leq T,h\leq H}Q_h^t(s_h^t,a_h^t)-\left(R(s_h^t,a_h^t)+\esp_{X \sim P(\cdot|s_h^t,a_h^t)}\left[V_{h+1}^{t}(X)\right]\right) + \sum\nolimits_{t \leq T,h \leq H}m_h^t~.
\eeqn
% We now define $\cF_h^t$ as the filtration induced by the sequence $\cD_{t} \cup \lbrace (s_{h'}^t,a_{h'}^t)\rbrace_{h'\leq h}$, where $\cD_t$ denotes the replay buffer at the beginning of episode $t$.
Note that $(m_h^t)_{t,h}$ is a martingale difference sequence adapted to the filtration $\cF_{t,h}$ with $|m_h^t|\leq 2H$. Hence, by Azuma-Hoeffding inequality, with probability at least $1-\delta/2$, $
\sum_{t\leq T,h \leq H}m_h^t \leq 2H\sqrt{2TH\log(2/\delta)}=2H\sqrt{2N\log(2/\delta)}$,
which proves the result.
\end{proof}

\begin{mylemma}[Error in $Q$-estimates]\label{lem:error-in-value}
Let $\Theta_P \in \cC_t$.
% and $Q_h^t(s,a)$ be as defined in \autoref{eq:optimistic-planning}.
Then, for all $h \leq H$ and $(s,a) \in \cS \times \cA$, 
\beqan
Q_h^t(s,a)-\left(R(s,a)+\esp_{X \sim P(\cdot|s,a)}\left[V_{h+1}^{t}(X)\right]\right)
 \leq 2B_V\beta_t(\delta/2)\lambda^{-1/2} \sigma_{\phi,t}(s,a) ~.
\eeqan
\end{mylemma}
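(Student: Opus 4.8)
The plan is to combine the closed-form $Q$-update in \autoref{eq:efficient-optimistic-planning} (equivalently, the bonus form used in \autoref{alg:main} with the constant $B_V$) with the reproducing property \autoref{eq:cme} of the conditional mean embedding. First I would record that, by the optimistic closure property (\autoref{ass:opt-closure}), the value estimate $V_{h+1}^t$ lies in $\cH_\psi$ with $\norm{V_{h+1}^t}_{\cH_\psi}\leq B_V$; this is exactly what makes both the inner-product representation of the expected next-step value and the closed form valid. Using \autoref{eq:cme} I would write the target as $\esp_{X\sim P(\cdot|s,a)}[V_{h+1}^t(X)]=\inner{V_{h+1}^t}{\cme_P^{(s,a)}}_{\cH_\psi}$ and the first two terms of the $Q$-update as $R(s,a)+\inner{V_{h+1}^t}{\hat\cme_t^{(s,a)}}_{\cH_\psi}$.

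Subtracting, the reward cancels and the difference splits into an \emph{estimation error} term $\inner{V_{h+1}^t}{\hat\cme_t^{(s,a)}-\cme_P^{(s,a)}}_{\cH_\psi}$ plus the explicit \emph{exploration bonus} $B_V\beta_t(\delta/2)\lambda^{-1/2}\sigma_{\phi,t}(s,a)$. For the estimation error term I would apply Cauchy--Schwarz in $\cH_\psi$ to bound it by $\norm{V_{h+1}^t}_{\cH_\psi}\,\norm{\hat\cme_t^{(s,a)}-\cme_P^{(s,a)}}_{\cH_\psi}\leq B_V\,\norm{\hat\cme_t^{(s,a)}-\cme_P^{(s,a)}}_{\cH_\psi}$. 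The key step is to convert the hypothesis $\Theta_P\in\cC_t$ into a pointwise embedding bound: since $\cme_P^{(s,a)}-\hat\cme_t^{(s,a)}=(\Theta_P-\hat\Theta_t)\phi(s,a)$, factoring through $\operator M_t^{1/2}$ gives $\norm{\hat\cme_t^{(s,a)}-\cme_P^{(s,a)}}_{\cH_\psi}\leq \norm{(\Theta_P-\hat\Theta_t)\operator M_t^{1/2}}\,\norm{\operator M_t^{-1/2}\phi(s,a)}_{\cH_\phi}\leq \beta_t(\delta/2)\lambda^{-1/2}\sigma_{\phi,t}(s,a)$, where I use the definition of $\cC_t$ (\autoref{eq:confidence-set}) and the identity $\norm{\operator M_t^{-1/2}\phi(s,a)}_{\cH_\phi}=\lambda^{-1/2}\sigma_{\phi,t}(s,a)$ coming from $\sigma_{\phi,t}^2(s,a)=\lambda\inner{\phi(s,a)}{\operator M_t^{-1}\phi(s,a)}_{\cH_\phi}$.

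Finally I would add the two contributions: the estimation error term is at most $B_V\beta_t(\delta/2)\lambda^{-1/2}\sigma_{\phi,t}(s,a)$ and the explicit bonus matches it exactly, so their sum is $2B_V\beta_t(\delta/2)\lambda^{-1/2}\sigma_{\phi,t}(s,a)$, which is the claim. Note that this lemma is deterministic once we condition on $\Theta_P\in\cC_t$: no fresh probabilistic argument is needed here, since all randomness is absorbed into that event (whose probability is controlled by \autoref{lem:concentration}). The only genuine obstacle is the conversion step in the middle paragraph—correctly routing the operator-norm confidence bound through $\operator M_t^{1/2}$ so that it lands exactly on the predictive-variance proxy $\sigma_{\phi,t}$—together with ensuring the $\cH_\psi$-membership of $V_{h+1}^t$ needed to invoke \autoref{eq:cme}; everything else is cancellation and Cauchy--Schwarz.
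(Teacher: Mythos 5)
Your proposal is correct and follows essentially the same route as the paper's proof: decompose the gap into the embedding-estimation error $\inner{V_{h+1}^t}{\hat\cme_t^{(s,a)}-\cme_P^{(s,a)}}_{\cH_\psi}$ plus the explicit bonus, bound the former via Cauchy--Schwarz together with the factorization $\norm{(\Theta_P-\hat\Theta_t)\phi(s,a)}_{\cH_\psi}\leq\norm{(\Theta_P-\hat\Theta_t)\operator M_t^{1/2}}\,\norm{\operator M_t^{-1/2}\phi(s,a)}_{\cH_\phi}\leq\beta_t(\delta/2)\lambda^{-1/2}\sigma_{\phi,t}(s,a)$, and invoke \autoref{ass:opt-closure} to justify both $V_{h+1}^t\in\cH_\psi$ and the norm bound $B_V$. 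The only difference is cosmetic ordering (the paper establishes the embedding bound first, then the decomposition), so there is nothing to flag.
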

\begin{proof}
% Now note that for all state-action pair $(s,a)$,
% \begin{align}
% Q^t_h(s,a) & - \left(R(s,a)+\esp_{X \sim P(\cdot | s,a)}\left[V_{h+1}^{t}(X)\right]\right)\nonumber\\ &\leq \max_{\Theta_{P'}\in \cC_t}\int_{\cS}V_{h+1}^{t}(s')\left(P'(s'|s,a)- P(s'|s,a)\right)\diff s'~.
% \label{eq:value-error-as-distribution-distance}  k_{\phi,t}(s,a)^\top (\operator K_{\phi,t}+\lambda  \operator I)^{-1}  v^t_{h+1}
% \end{align}
For $\Theta_P\in\cC_t$, uniformly over all $t \in \Nat$, the mean embeddings satisfy
\beqan
\norm{\cme_{P}^{(s,a)}-\hat\cme_t^{(s,a)}}_{\cH_\psi} & = &
\norm{(\Theta_{P}-\hat\Theta_t)\phi(s,a)}_{\cH_\psi}\\ &\leq& \norm{(\Theta_{P}-\hat\Theta_t)(\hat{\operator C}_{\phi,t}+\lambda  \operator I)^{1/2}}\norm{(\hat{\operator C}_{\phi,t}+\lambda  \operator I)^{-1/2}\phi(s,a)}_{\cH_\phi}\\
&\leq & \beta_t(\delta/2)\inner{\phi(s,a)}{(\hat{\operator C}_{\phi,t}+\lambda  \operator I)^{-1}\phi(s,a)}_{\cH_\phi}^{1/2}=\beta_t(\delta/2)\lambda^{-1/2} \sigma_{\phi,t}(s,a)~.
\eeqan
Now, by~\autoref{ass:opt-closure}, $V_{h+1}^t \in \cH_\psi$. Hence, the $Q$-estimates (\autoref{eq:efficient-optimistic-planning}) can written as $
Q_h^t(s,a) = R(s,a) + \inner{V_{h+1}^t}{\hat\cme_t^{(s,a)}}_{\cH_\psi} + B_V\beta_t(\delta/2)\lambda^{-1/2} \sigma_{\phi,t}(s,a)$.
Therefore, we have 
\beqan
&&Q_h^t(s,a)-\left(R(s,a)+\esp_{X \sim P(\cdot|s,a)}\left[V_{h+1}^{t}(X)\right]\right)\\
&=& \inner{V_{h+1}^t}{\hat\cme_t^{(s,a)}-\cme_P^{(s,a)}}_{\cH_\psi} + B_V\beta_t(\delta/2)\lambda^{-1/2} \sigma_{\phi,t}(s,a)
% & \leq & \norm{V_{h+1}^t}_{\cH_\psi}\norm{\cme_{P}^{(s,a)}-\hat\cme_t^{(s,a)}}_{\cH_\psi}+ B_V\beta_t(\delta/2)\lambda^{-1/2} \sigma_{\phi,t}(s,a)
\leq   2B_V\beta_t(\delta/2)\lambda^{-1/2} \sigma_{\phi,t}(s,a)~,
\eeqan
where the last step holds since $\norm{V_{h+1}^t}_{\cH_\psi} \leq B_V$ and $\Theta_P \in \cC_t$.
% First note from our $Q$-value estimates in \autoref{eq:optimistic-planning} that
% \beqan
% &&Q_h^t(s,a)-\left(R(s,a)+\esp_{X \sim P(\cdot|s,a)}\left[V_{h+1}^{t}(X)\right]\right)\\
% &=&\max_{\Theta_{P'}\in \cC_t} \int_{\cS}V_{h+1}^{t}(s')\left(P'(s'|s,a)- P(s'|s,a)\right)\diff s'
% \eeqan
%  therefore
% we have for any distribution $P':\cS \times \cA \to \Delta(\cS)$ such that the embedding operator $\Theta_{P'}\in  \cC_t$, that 
% \beqan
% &&\int_{\cS}V_{h+1}^{t}(s')\left(P'(s'|s,a)- P(s'|s,a)\right)\diff s'\\
% & = &  \esp_{X \sim P'(\cdot | s,a)}\left[V_{h+1}^{t}(X)\right] - \esp_{X \sim P(\cdot | s,a)}\left[V_{h+1}^{t}(X)\right]\\
% & = & \inner{V_{h+1}^t}{\cme_{P'}^{(s,a)}-\cme_{P}^{(s,a)}}_{\cH_\psi}\\
% & \leq & \norm{V_{h+1}^t}_{\cH_\psi}\left(\norm{\cme_{P'}^{(s,a)}-\hat\cme_t^{(s,a)}}_{\cH_\psi}+\norm{\cme_{P}^{(s,a)}-\hat\cme_t^{(s,a)}}_{\cH_\psi}\right)\\
% &\leq & 2 \norm{V_{h+1}^t}_{\cH_\psi} \beta_t(\delta/2) \lambda^{-1/2} \sigma_{\phi,t}(s,a)~,
% \eeqan
% where the last step holds since $\Theta_P \in \cC_t$. The proof now follows from taking a maximum over all $\Theta \in \cC_t$.
\end{proof}

\begin{mylemma}[Sum of predictive variances]\label{lem:sum-of-predictive-variances}
Let $\sup_{s,a}\sqrt{k\left((s,a),(s,a)\right)} \leq B_{\phi}$. Then
\beqn
\sum\nolimits_{t\leq T,h\leq H} \lambda^{-1} \sigma^2_{\phi,t}(s_h^t,a_h^t) \leq (1+\lambda^{-1}B_\phi^2 H)\log \det ( \operator I+\lambda^{-1}\operator K_{\phi,T+1})~.
\eeqn
\end{mylemma}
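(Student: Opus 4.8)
The plan is to exploit a telescoping of the log-determinant of $\operator M_t$ over episodes, treating each episode as a single rank-$H$ update to the covariance operator. The starting observation is that, within episode $t$, the predictive variance $\sigma^2_{\phi,t}$ is evaluated against the \emph{fixed} operator $\operator M_t$, so that $\sum_{h\leq H}\lambda^{-1}\sigma^2_{\phi,t}(s_h^t,a_h^t) = \sum_{h\leq H}\inner{\phi(s_h^t,a_h^t)}{\operator M_t^{-1}\phi(s_h^t,a_h^t)}_{\cH_\phi}$, which I would recognise as the trace of the $H\times H$ positive semi-definite matrix $\operator A_t$ with entries $[\operator A_t]_{h,h'} = \inner{\phi(s_h^t,a_h^t)}{\operator M_t^{-1}\phi(s_{h'}^t,a_{h'}^t)}_{\cH_\phi}$.

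First I would establish the per-episode determinant identity $\det\operator M_{t+1}/\det\operator M_t = \det(\operator I + \operator A_t)$, using that $\operator M_{t+1} = \operator M_t + \sum_{h\leq H}\phi(s_h^t,a_h^t)\otimes\phi(s_h^t,a_h^t)$ together with the Weinstein--Aronszajn / Sylvester identity that swaps the infinite-dimensional update $\operator M_t^{-1/2}(\cdot)\operator M_t^{-1/2}$ for its finite $H\times H$ Gram counterpart. Second, writing $\mu_1,\ldots,\mu_H \geq 0$ for the eigenvalues of $\operator A_t$, I would bound each $\mu_i \leq \tr(\operator A_t) \leq \lambda^{-1}\sum_{h\leq H}\norm{\phi(s_h^t,a_h^t)}_{\cH_\phi}^2 \leq \lambda^{-1}B_\phi^2 H =: c$, using $\operator M_t^{-1} \preceq \lambda^{-1}\operator I$ and $\norm{\phi(s,a)}_{\cH_\phi}^2 = k_\phi((s,a),(s,a)) \leq B_\phi^2$.

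The crux is then the scalar inequality: the map $x\mapsto x/\log(1+x)$ is increasing on $(0,\infty)$, so for every $x\in[0,c]$ one has $x \leq \frac{c}{\log(1+c)}\log(1+x) \leq (1+c)\log(1+x)$, where the last step uses $\log(1+c)\geq c/(1+c)$. Applying this to each $\mu_i$ and summing yields $\tr(\operator A_t) = \sum_i \mu_i \leq (1+\lambda^{-1}B_\phi^2 H)\sum_i\log(1+\mu_i) = (1+\lambda^{-1}B_\phi^2 H)\log\det(\operator I+\operator A_t)$, that is, $\sum_{h\leq H}\lambda^{-1}\sigma^2_{\phi,t}(s_h^t,a_h^t) \leq (1+\lambda^{-1}B_\phi^2 H)\log(\det\operator M_{t+1}/\det\operator M_t)$.

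Finally I would telescope over $t=1,\ldots,T$: since $\operator M_1 = \lambda\operator I$, the sum collapses to $(1+\lambda^{-1}B_\phi^2 H)\log\det(\lambda^{-1}\operator M_{T+1}) = (1+\lambda^{-1}B_\phi^2 H)\log\det(\operator I+\lambda^{-1}\hat{\operator C}_{\phi,T+1})$, and the same Gram-determinant identity (nonzero spectra of $\Phi^*\Phi$ and $\Phi\Phi^*$ coincide) converts the operator determinant into $\log\det(\operator I+\lambda^{-1}\operator K_{\phi,T+1})$, giving the claim. I expect the main obstacle to be the rigorous handling of the infinite-dimensional determinants in the first step---verifying that $\det\operator M_{t+1}/\det\operator M_t$ reduces cleanly to the finite $H\times H$ determinant $\det(\operator I+\operator A_t)$---since the individual operator determinants need not be finite and only their ratio (a Fredholm-type object) is well defined; the $\mu_i\leq\tr(\operator A_t)$ bound and the scalar lemma are routine by comparison.
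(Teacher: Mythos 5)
Your proof is correct, and it shares the paper's overall skeleton: both arguments bound the per-episode variance sum by $(1+\lambda^{-1}B_\phi^2 H)$ times the log-determinant increment $\log\big(\det \operator M_{t+1}/\det \operator M_t\big)$, telescope over episodes, and finish with the Sylvester/Weinstein--Aronszajn identity to pass to the $TH\times TH$ Gram matrix $\operator K_{\phi,T+1}$. Where you genuinely differ is in the per-episode step. The paper stays at operator level: it shifts $\operator M_t^{-1}\preceq (1+\lambda^{-1}B_\phi^2 H)\,\operator M_{t+1}^{-1}$ (using $\operator M_{t+1}\preceq(1+\lambda^{-1}B_\phi^2 H)\operator M_t$) and then applies the trace--log-det inequality $\tr\big(\operator A^{-1}(\operator A-\operator B)\big)\leq\log\big(\det\operator A/\det\operator B\big)$ for $\operator A\succeq\operator B\succ 0$. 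You instead reduce immediately to the explicit $H\times H$ episode Gram matrix $\operator A_t$ and work eigenvalue-by-eigenvalue with the scalar inequality $x\leq(1+c)\log(1+x)$ on $[0,c]$. At bottom these are the same scalar facts organized differently: writing $\mu_i$ for the nonzero eigenvalues of $\operator M_t^{-1/2}(\operator M_{t+1}-\operator M_t)\operator M_t^{-1/2}$ (equivalently, of your $\operator A_t$), the paper combines $\mu_i\leq c$ with $\tfrac{\mu}{1+\mu}\leq\log(1+\mu)$, while you combine $\mu_i\leq c$ with monotonicity of $x/\log(1+x)$; both yield $\mu_i\leq(1+c)\log(1+\mu_i)$. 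Your route buys a concrete finite-dimensional computation; the paper's avoids introducing any determinant until the telescoping stage. Finally, the infinite-dimensional determinant issue you flag as the main obstacle is real but benign, and it is equally present in the paper's proof, which also writes ratios $\det\operator M_{t+1}/\det\operator M_t$ of operators whose individual determinants are undefined: one either interprets each ratio as the Fredholm determinant $\det\big(\operator I+\operator M_t^{-1/2}(\operator M_{t+1}-\operator M_t)\operator M_t^{-1/2}\big)$ of identity plus finite rank, or, more simply, restricts all operators to the (at most $TH$-dimensional) span of the observed features, on whose orthogonal complement every $\operator M_t$ acts as $\lambda\operator I$; all determinants then become finite-dimensional and your telescoping is elementary.
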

\begin{proof}
% Let us first denote $\operator M_t=\hat{\operator C}_{\phi,t}+\lambda  \operator I$. 
We have from \autoref{eq:predictive-variance} that $\lambda^{-1} \sigma^2_{\phi,t}(s,a) =\tr \left(\operator M_t^{-1}\phi(s,a)\otimes \phi(s,a)\right)$.
% \beqn
%  \lambda^{-1} \sigma^2_{\phi,t}(s,a) = \inner{\phi(s,a)}{\operator M_t^{-1}\phi(s,a)}_{\cH_\phi}
% = \tr \left(\operator M_t^{-1}\phi(s,a)\otimes \phi(s,a)\right)~.
% \eeqn
We also note that $\operator M_t \geq \lambda  \operator I$ and $\phi(s,a)\otimes \phi(s,a) \leq B_\phi^2  \operator I$. Therefore $\operator M_t^{-1}\phi(s,a)\otimes \phi(s,a) \leq \lambda^{-1} B_\phi^2  \operator I$. Now, since $\operator M_{t+1}=\operator M_t+\sum_{h \leq H} \phi(s_h^t,a_h^t)\otimes \phi(s_h^t,a_h^t)$, we have
\beqn
\operator M_t^{-1} = \big( \operator I +\operator M_t^{-1}\sum\nolimits_{h \leq H} \phi(s_h^t,a_h^t)\otimes \phi(s_h^t,a_h^t)\big)\operator M_{t+1}^{-1} \preceq (1+\lambda^{-1}B_\phi^2 H)\operator M_{t+1}^{-1}~.
\eeqn
We then have
\beqan
\sum\nolimits_{t\leq T,h\leq H} \lambda^{-1} \sigma^2_{\phi,t}(s_h^t,a_h^t) &\leq & (1+\lambda^{-1}B_\phi^2 H)\sum\nolimits_{t \leq T}\tr \left(\operator M_{t+1}^{-1}(\operator M_{t+1}-\operator M_t)\right)\\
& \leq & (1+\lambda^{-1}B_\phi^2 H)\sum\nolimits_{t \leq T}  \log \frac{\det(\operator M_{t+1})}{\det(\operator M_t)}\\
% & = & (1+\lambda^{-1}B_\phi^2 H)  \log \frac{\det(\operator M_{t+1})}{\det(\lambda  \operator I)}\\
& = & (1+\lambda^{-1}B_\phi^2 H)\log \det ( \operator I+\lambda^{-1}\operator M_{T+1})\\
& = & (1+\lambda^{-1}B_\phi^2 H)\log \det ( \operator I+\lambda^{-1}\operator K_{\phi,T+1})~.
\eeqan
Here, in the second step we have used that for two positive definite operators $\operator A$ and $\operator B$ such that $\operator A-\operator B$ is positive
semi-definite, $\tr(\operator A^{-1}(\operator A-\operator B)) \leq  \log \frac{\det(\operator A)}{\det(\operator B)}$. The last step follows from Sylvester's determinant identity.
\end{proof}
Based on the results in this section, we can finally derive a proof for \autoref{thm:cumulative-regret-final}.
\subsubsection{Proof of \autoref{thm:cumulative-regret-final}}
We have from Lemma \ref{lem:cumulative-regret} and \ref{lem:error-in-value} that if $\Theta_P \in \cC_t$ for all $t$, then with probability at least $1-\delta/2$, the cumulative regret
\beqan
\cR(N)
&\leq & 2B_V\beta_T(\delta/2) \sum\nolimits_{t\leq T,h\leq H}\lambda^{-1/2} \sigma_{\phi,t}(s_h^t,a_h^t) + 2H\sqrt{2N\log(2/\delta)}\\
& \leq & 2B_V\alpha_{N,\delta} \sqrt{TH\sum\nolimits_{t\leq T,h\leq H}\lambda^{-1} \sigma^2_{\phi,t}(s_h^t,a_h^t)} + 2H\sqrt{2N\log(2/\delta)}\\
&\leq & 2B_V\alpha_{N,\delta} \sqrt{2(1+\lambda^{-1}B_\phi^2 H)N\gamma_N} + 2H\sqrt{2N\log(2/\delta)}~.
\eeqan
The first step follows since $\beta_t(\delta)$ increases with $t$, the second step is due to Cauchy-Schwartz's inequality and the fact that $\beta_T(\delta/2) \leq \alpha_{N,\delta}$, and the final step follows from \autoref{lem:sum-of-predictive-variances}. The proof now can be completed using \autoref{lem:concentration} and taking a union bound.

% For any $f:\cS\to\Real$ whose conditional expectations $\Esp_{X\sim P(\cdot|s,a)}[f]$ are defined over the whole $\cS\times\cA$, consider the set:
% \begin{equation}
%     \cE_\Theta(f) := \{g \in \Hilbert_\psi : (\Theta^\top g)(s,a)  = \Esp_{X\sim P(\cdot|s,a)}[f], ~\forall (s,a)\in\cS\times\cA\}~.
% \end{equation}
% For a reasonable choice of kernels, such as universal ones, the set above should be non empty. We then conjecture that the result in \autoref{eq:optimal-opnorm-solution-rkhs} is still valid for a $f$ not lying in the RKHS if we replace $f$ with the minimum-norm element of $\cE_\Theta(f)$. %TODO: Try to prove it.
\subsection{Regret analysis of \CMERL\ under model misspecification}

To prove the regret bound in \autoref{thm:cumulative-regret-final-approx}, we follow the similar arguments used in proving \autoref{thm:cumulative-regret-final}, but with necessary modifications taking the effect of the misspecification error $\zeta$ into account. We first derive the following result.

% If $V_{h+1}^t$ does not lie in the RKHS $\Hilbert_\psi$, the results above will not hold, as $\norm{V_{h+1}^t}_{\cH_\psi} \to \infty$, but we can make a construction based on an estimator for $f$ within $\Hilbert_\psi$. Assuming constants $  1:\StateSpace\to\Real$ are elements of $\Hilbert_\psi$ and $\Theta_P \in \cC_t$, we have that:
% \begin{equation*}
% \begin{split}
%     \norm{ {\alpha}_t^{(s,a)}}_1 &= \inner{\hat\cme_t(s,a)}{  1}_{\cH_\psi}\\
%     &\leq \Esp_{X\sim P(\cdot|s,a)}[  1(X)] + \norm{  1}_{\cH_\psi}\lambda^{-1/2}\beta_t(\delta/2)\sigma_{\phi,t}(s,a)\\
%     &= 1 + \norm{  1}_{\cH_\psi}\lambda^{-1/2}\beta_t(\delta/2)\sigma_{\phi,t}(s,a)
% \end{split}
% \end{equation*}
% assuming $\Theta_P \in \cC_t$ and that $[ \alpha_t(s,a)^{(s,a)}]_{\tau, h} \geq 0$, $\forall \tau\leq t, h\leq H$. Let then we redefine $Q_h^t$ as:
% \begin{equation}
%     Q_h^t(s,a) := R(s,a) +  \alpha_t(s,a)^\top  v_{h+1}^t + \left(\norm{\tilde V_{h+1}^t}_{\cH_\psi} + \norm{  1}_{\cH_\psi}\norm{V_{h+1}^t-\tilde V_{h+1}^t}_{\infty}\right)\lambda^{-1/2}\beta_t(\delta/2)\sigma_{\phi,t}(s,a)~,
%     \label{eq:q-unnormalised-alpha}
% \end{equation}
% where $ {v}_{h+1}^t := [V_{h+1}^t(s_{h'+1}^{\tau})]_{\tau<t, h'\leq H}$, and $\tilde{V}_{h+1}^t$ is a given function in $\cH_\psi$. Now we can derive the following result.

\begin{mylemma}[Error in approximate Q-values]
\label{lem:approx-Q-values}
Let $\Theta_P \in \cC_t$. Then, for all $h \in [H]$ and $(s,a) \in \cS \times \cA$, we have that:
\beqan
Q^t_h(s,a) - \left(R(s,a)+\esp_{X \sim P(\cdot | s,a)}\left[V_{h+1}^{t}(X)\right]\right) 
\leq  2\left(B_V + \zeta\norm{  1}_{\cH_\psi}  \right) \lambda^{-1/2}\beta_t(\delta/2)\sigma_{\phi,t}(s,a) + 2\zeta~.
\eeqan
% for any choice of $\tilde V_{h+1}^t \in \Hilbert_{\psi}$, $h\in [H]$.
\end{mylemma}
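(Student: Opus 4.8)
The plan is to mirror the proof of \autoref{lem:error-in-value}, the only essential difference being that under \autoref{ass:approx-opt-closure} the value estimate $V_{h+1}^t$ need no longer lie in $\cH_\psi$, so the closed-form identity $k_{\phi,t}(s,a)^\top(\operator K_{\phi,t}+\lambda\operator I)^{-1}v^t_{h+1}=\inner{V_{h+1}^t}{\hat\cme_t^{(s,a)}}_{\cH_\psi}$ used there is no longer available. First I would manufacture a well-behaved RKHS surrogate for $V_{h+1}^t$: by \autoref{ass:approx-opt-closure} pick $\tilde V\in\cH_\psi$ with $\norm{V_{h+1}^t-\tilde V}_\infty\le\zeta$ and $\norm{\tilde V}_{\cH_\psi}\le B_V$, and, using \autoref{ass:constant-function-in-RKHS} (constants belong to $\cH_\psi$), set $\hat V:=\tilde V+\zeta\cdot 1$. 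Then $\hat V\in\cH_\psi$ with $\norm{\hat V}_{\cH_\psi}\le B_V+\zeta\norm{1}_{\cH_\psi}$ --- exactly the norm appearing in the bonus of \autoref{eq:efficient-optimistic-planning-approx} --- and, crucially, $V_{h+1}^t\le\hat V\le V_{h+1}^t+2\zeta$ pointwise on $\cS$.

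Next I would reduce the claim to a bound on the prediction error. Substituting \autoref{eq:efficient-optimistic-planning-approx} into $Q_h^t(s,a)-\big(R(s,a)+\esp_{X\sim P(\cdot|s,a)}[V_{h+1}^t(X)]\big)$, the reward cancels and it remains to control $k_{\phi,t}(s,a)^\top(\operator K_{\phi,t}+\lambda\operator I)^{-1}v^t_{h+1}-\esp_{X\sim P(\cdot|s,a)}[V_{h+1}^t(X)]$, to which the positive bonus $(B_V+\zeta\norm{1}_{\cH_\psi})\lambda^{-1/2}\beta_t(\delta/2)\sigma_{\phi,t}(s,a)$ is added. I would split this prediction error through $\hat V$ into three pieces: (i) the embedding error of the surrogate, $\inner{\hat V}{\hat\cme_t^{(s,a)}-\cme_P^{(s,a)}}_{\cH_\psi}$; (ii) the expectation gap $\esp_{X\sim P(\cdot|s,a)}[\hat V(X)-V_{h+1}^t(X)]$; and (iii) the residual empirical-prediction term $k_{\phi,t}(s,a)^\top(\operator K_{\phi,t}+\lambda\operator I)^{-1}(v^t_{h+1}-\hat v^t_{h+1})$, where $\hat v^t_{h+1}$ collects the values of $\hat V$ at the recorded next states. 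Piece (i) is handled exactly as in \autoref{lem:error-in-value}: since $\Theta_P\in\cC_t$, \autoref{lem:concentration} gives $\norm{\cme_P^{(s,a)}-\hat\cme_t^{(s,a)}}_{\cH_\psi}\le\lambda^{-1/2}\beta_t(\delta/2)\sigma_{\phi,t}(s,a)$, so by Cauchy--Schwarz piece (i) is at most $(B_V+\zeta\norm{1}_{\cH_\psi})\lambda^{-1/2}\beta_t(\delta/2)\sigma_{\phi,t}(s,a)$; adding the bonus produces the factor-$2$ leading term. Piece (ii) is at most $2\zeta$ since $0\le\hat V-V_{h+1}^t\le2\zeta$ pointwise.

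The main obstacle is piece (iii): the empirical kernel-ridge prediction built from the point values of the genuinely non-RKHS function $V_{h+1}^t$ must be compared against that of its surrogate, and the naive estimate $|k_{\phi,t}(s,a)^\top(\operator K_{\phi,t}+\lambda\operator I)^{-1}(v^t_{h+1}-\hat v^t_{h+1})|\le\norm{v^t_{h+1}-\hat v^t_{h+1}}_\infty\,\norm{\alpha_t(s,a)}_1$ only yields an undesirable $O(\zeta\sqrt{n})$ bound. The delicate point is that the ridge weights $\alpha_t(s,a)$ need not be sign-definite, so monotone domination of the prediction is not automatic; controlling this term is the crux, and it is precisely here that \autoref{ass:constant-function-in-RKHS} is used. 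The idea is to dominate the entrywise gap $\hat v^t_{h+1}-v^t_{h+1}\in[0,2\zeta]$ by the all-ones vector scaled by $2\zeta$ and compare its empirical prediction against that of the constant function $1\in\cH_\psi$, whose prediction $\inner{1}{\hat\cme_t^{(s,a)}}_{\cH_\psi}$ concentrates around its true mean $\esp_{X\sim P(\cdot|s,a)}[1]=1$ by \autoref{lem:concentration}, with deviation at most $\norm{1}_{\cH_\psi}\lambda^{-1/2}\beta_t(\delta/2)\sigma_{\phi,t}(s,a)$. This charges piece (iii) to an $O(\zeta)$ contribution, which merges into the additive $2\zeta$, plus a lower-order $O\big(\zeta\norm{1}_{\cH_\psi}\lambda^{-1/2}\beta_t(\delta/2)\sigma_{\phi,t}(s,a)\big)$ term that is absorbed into the confidence width of the leading term. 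Collecting pieces (i)--(iii) and the bonus then yields the stated bound $2(B_V+\zeta\norm{1}_{\cH_\psi})\lambda^{-1/2}\beta_t(\delta/2)\sigma_{\phi,t}(s,a)+2\zeta$.
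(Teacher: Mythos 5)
Your overall route is the same as the paper's: approximate $V_{h+1}^t$ by the RKHS surrogate $\tilde V$ of \autoref{ass:approx-opt-closure}, control the surrogate's prediction error via $\Theta_P\in\cC_t$ exactly as in \autoref{lem:error-in-value}, and use the constant function of \autoref{ass:constant-function-in-RKHS} together with the CME concentration to handle the residual term. The genuine gap sits exactly where you place the crux. You correctly note that the ridge weights $\alpha_t(s,a)$ need not be sign-definite and that this blocks monotone domination --- but your proposed resolution \emph{is} a monotone-domination step: the inequality $\alpha_t(s,a)^\top\big(\hat v^t_{h+1}-v^t_{h+1}\big)\leq 2\zeta\,\alpha_t(s,a)^\top\mathbf{1}$ holds only if every entry of $\alpha_t(s,a)$ is nonnegative, so the argument assumes precisely what you flagged as unavailable. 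The paper closes this hole with an explicit kernel-dependent claim: for stationary (radial) kernels $k_\phi$ one has $[\alpha_t(s,a)]_{\tau,h}\geq 0$ for all $(\tau,h)$; only then does $\norm{\alpha_t(s,a)}_1=\alpha_t(s,a)^\top\mathbf{1}=\inner{1}{\hat\cme_t^{(s,a)}}_{\cH_\psi}\leq 1+\norm{1}_{\cH_\psi}\lambda^{-1/2}\beta_t(\delta/2)\sigma_{\phi,t}(s,a)$ follow (the paper's \autoref{eq:alpha-norm-bound}), after which H\"older applied to $\alpha_t(s,a)^\top(v^t_{h+1}-\tilde v^t_{h+1})$ yields the $\zeta(1+\norm{\alpha_t(s,a)}_1)$ term. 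Without invoking this nonnegativity, or supplying some substitute control of $\norm{\alpha_t(s,a)}_1$, your piece (iii) remains unbounded.

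There is also a bookkeeping problem in your constants. Even granting nonnegative weights, your two-sided treatment of piece (iii) gives $2\zeta+2\zeta\norm{1}_{\cH_\psi}\lambda^{-1/2}\beta_t(\delta/2)\sigma_{\phi,t}(s,a)$, which added to piece (ii) (at most $2\zeta$), piece (i) (at most $(B_V+\zeta\norm{1}_{\cH_\psi})\lambda^{-1/2}\beta_t(\delta/2)\sigma_{\phi,t}(s,a)$, since your shifted surrogate has norm at most $B_V+\zeta\norm{1}_{\cH_\psi}$) and the exploration bonus totals $2(B_V+\zeta\norm{1}_{\cH_\psi})\lambda^{-1/2}\beta_t(\delta/2)\sigma_{\phi,t}(s,a)+4\zeta+2\zeta\norm{1}_{\cH_\psi}\lambda^{-1/2}\beta_t(\delta/2)\sigma_{\phi,t}(s,a)$; the excess cannot be ``absorbed'' into the leading term, whose constant is already fully spent by piece (i) plus the bonus. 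The clean repair inside your construction is the one-sided bound: with $\hat V=\tilde V+\zeta\cdot 1\geq V^t_{h+1}$ pointwise and nonnegative weights, piece (iii) equals $\alpha_t(s,a)^\top(v^t_{h+1}-\hat v^t_{h+1})\leq 0$, and bonus plus piece (i) plus piece (ii) gives exactly the stated $2(B_V+\zeta\norm{1}_{\cH_\psi})\lambda^{-1/2}\beta_t(\delta/2)\sigma_{\phi,t}(s,a)+2\zeta$. (The paper instead keeps the unshifted $\tilde V$, pays $\zeta(1+\norm{\alpha_t(s,a)}_1)$ by H\"older, and recombines the $\zeta\norm{1}_{\cH_\psi}\lambda^{-1/2}\beta_t(\delta/2)\sigma_{\phi,t}(s,a)$ piece into the leading constant; both accountings match once nonnegativity of the weights is granted.)
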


\begin{proof}
Note that the $Q$-estimates (\autoref{eq:efficient-optimistic-planning-approx}) can rewritten as
\begin{equation*}
    Q_h^t(s,a) := R(s,a) +  \alpha_t(s,a)^\top  v_{h+1}^t + \left(B_V + \zeta\norm{  1}_{\cH_\psi}\right)\lambda^{-1/2}\beta_t(\delta/2)\sigma_{\phi,t}(s,a)~.
\end{equation*}
By \autoref{ass:approx-opt-closure}, there exists a function $\tilde V_{h+1}^t \in \cH_\psi$ such that $\|V_{h+1}^t-\tilde V_{h+1}^t\|_{\infty} \leq \zeta$. We now define the vector $ {\tilde{v}}_{h+1}^t := [\tilde V_{h+1}^t(s_{h'+1}^{\tau})]_{\tau<t, h'\leq H}$ and introduce the shorthand notation $\esp_{P(\cdot | s,a)}[f] := \esp_{X\sim P(\cdot | s,a)}[f(X)]$.
We then have
% Now, for any bounded function $\tilde V_{h+1}^t:\StateSpace\to\Real$, we have:
\begin{align}
        &\left| \alpha_t(s,a)^\top  {v}_{h+1}^t - \esp_{X \sim P(\cdot | s,a)}[V_{h+1}^t(X)]\right|\nonumber\\
        &= \left| \alpha_t(s,a)^\top  {\tilde{v}}_{h+1}^t +  \alpha_t(s,a)^\top( {v}_{h+1}^t -  {\tilde{v}}_{h+1}^t) - \esp_{P(\cdot | s,a)}[\tilde{V}_{h+1}^t] + \esp_{P(\cdot | s,a)}[\tilde{V}_{h+1}^t - V_{h+1}^t]\right|\nonumber\\
        &\leq \left| \alpha_t(s,a)^\top  {\tilde{v}}_{h+1}^t - \esp_{P(\cdot | s,a)}[\tilde{V}_{h+1}^t]\right| + \norm{ \alpha_t(s,a)}_1\norm{V_{h+1}^t - \tilde{V}_{h+1}^t}_{\infty}  + \norm{\tilde{V}_{h+1}^t - V_{h+1}^t}_{\infty}\nonumber\\
        &\leq \left| \alpha_t(s,a)^\top  {\tilde{v}}_{h+1}^t  - \esp_{P(\cdot | s,a)}[\tilde{V}_{h+1}^t]\right| +  \zeta\left(1+\norm{ \alpha_t(s,a)}_1\right)~,
    \label{eq:approx-Q-diff}
\end{align}
which follows by an application of H\"{o}lder's inequality. Now, as $\Theta_P \in \cC_t$ and $\norm{\tilde V_{h+1}^t}_{\Hilbert_\psi}\leq B_V$, the following also holds:
\begin{align}
        \left| \alpha_t(s,a)^\top  {\tilde{v}}_{h+1}^t - \esp_{P(\cdot | s,a)}[\tilde{V}_{h+1}^t]\right| &= \left|\inner{\tilde{V}_{h+1}^t}{\hat\cme_t(s,a) - \cme_P(s,a)}_{\cH_\psi}\right|\nonumber\\
        &\leq B_V \lambda^{-1/2}\beta_t(\delta/2)\sigma_{\phi,t}(s,a)~.
    \label{eq:alpha-value-bound}
\end{align}
By \autoref{ass:constant-function-in-RKHS}, the constant function $  1:\StateSpace\to\Real$ is an element of $\Hilbert_\psi$. For stationary (radial) kernels $k_\phi$, we have $[ \alpha_t(s,a)]_{\tau, h} \geq 0$, $\forall \tau\leq t, h\leq H$. Now, as $\Theta_P \in \cC_t$, we have:
\begin{align}
    \norm{ {\alpha}_t(s,a)}_1 = \inner{\hat\cme_t^{
    (s,a)}}{  1}_{\cH_\psi}
    & = \inner{\cme_P^{
    (s,a)}}{  1}_{\cH_\psi}+ \inner{\hat\cme_t^{
    (s,a)}-\cme_P^{(s,a)}}{  1}_{\cH_\psi}\nonumber\nonumber\\
    &\leq \Esp_{X\sim P(\cdot|s,a)}[  1(X)] + \norm{  1}_{\cH_\psi}\norm{\hat\cme_t^{(s,a)}-\cme_P^{(s,a)}}_{\cH_\psi}\nonumber\\
    &= 1 + \norm{  1}_{\cH_\psi}\lambda^{-1/2}\beta_t(\delta/2)\sigma_{\phi,t}(s,a)~.
\label{eq:alpha-norm-bound}
\end{align}
Combining \autoref{eq:alpha-value-bound} and \autoref{eq:alpha-norm-bound} with \autoref{eq:approx-Q-diff} yields:
\begin{align}
  & \left| \alpha_t(s,a)^\top  {v}_{h+1}^t - \esp_{X \sim P(\cdot | s,a)(X)}[V_{h+1}^t]\right|\nonumber\\ &\leq B_V\lambda^{-1/2}\beta_t(\delta/2)\sigma_{\phi,t}(s,a) + \zeta \left(2 + \norm{  1}_{\cH_\psi}\lambda^{-1/2}\beta_t(\delta/2)\sigma_{\phi,t}(s,a)\right)\nonumber\\
        &\leq \left( B_V + \zeta\norm{  1}_{\cH_\psi}  \right) \lambda^{-1/2}\beta_t(\delta/2)\sigma_{\phi,t}(s,a) + 2\zeta~.
\label{eq:expectation-approximation}  
\end{align}
Finally, the result follows by noting that
\begin{equation*}
\begin{split}
    & Q^t_h(s,a) - \left(R(s,a)+\esp_{X \sim P(\cdot | s,a)}\left[V_{h+1}^{t}(X)\right]\right) \\
    &\leq   \alpha_t(s,a)^\top  {v}_{h+1}^t - \esp_{P(\cdot | s,a)}[V_{h+1}^t] + \left(B_V + \zeta\norm{  1}_{\cH_\psi}  \right) \lambda^{-1/2}\beta_t(\delta/2)\sigma_{\phi,t}(s,a)\\
    &\leq  2\left(B_V + \zeta\norm{  1}_{\cH_\psi}  \right) \lambda^{-1/2}\beta_t(\delta/2)\sigma_{\phi,t}(s,a) + 2\zeta~,
\end{split}
\end{equation*}
which concludes the proof.
\end{proof}
With the result above, we can derive an upper bound on the optimal value $Q^\star_h$ as follows.
\begin{mylemma}
\label{lem:approximate-optimism}
Let $\Theta_P \in \cC_t$. Then $
\forall h \in [H],\; \forall (s,a) \in \cS \times \cA,\quad Q^\star_h(s, a) \leq Q^t_h(s,a) + 2(H-h)\zeta$.
% where $\zeta = \max_{t\leq T,h\leq H}\norm{V_{h+1}^t-\tilde V_{h+1}^t}_{\infty}$.
\end{mylemma}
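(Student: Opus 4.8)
The plan is to mirror the backward induction on $h$ used for the realizable optimism result (\autoref{lem:optimism}), running from $h=H$ down to $h=1$, while now carrying the additive slack $2(H-h)\zeta$ that accumulates because optimism holds only approximately under misspecification. For the base case $h=H$ we have $V_{H+1}^t\equiv 0\equiv V_{H+1}^\star$, so $Q_H^\star(s,a)=R(s,a)$, whereas the misspecified update (\autoref{eq:efficient-optimistic-planning-approx}) gives $Q_H^t(s,a)=R(s,a)+\alpha_t(s,a)^\top v_{H+1}^t+(\text{bonus})=R(s,a)+(\text{bonus})\geq R(s,a)$ since the bonus is nonnegative; as $2(H-H)\zeta=0$ the claim holds.

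For the inductive step I would assume $Q_{h+1}^\star(s,a)\leq Q_{h+1}^t(s,a)+2(H-h-1)\zeta$ for all $(s,a)$. Maximizing over $a$ and using the elementary truncation inequality $\min\{H,x+c\}\leq\min\{H,x\}+c$ for $c\geq 0$ propagates this to the value functions as $V_{h+1}^\star(s)\leq V_{h+1}^t(s)+2(H-h-1)\zeta$, and substituting into the Bellman optimality equation (\autoref{eq:opt-Bellman}) yields $Q_h^\star(s,a)\leq R(s,a)+\esp_{X\sim P(\cdot|s,a)}[V_{h+1}^t(X)]+2(H-h-1)\zeta$. It then remains to lower-bound the Bellman backup by $Q_h^t$: invoking the lower-bound half of the two-sided estimate \autoref{eq:expectation-approximation} established inside the proof of \autoref{lem:approx-Q-values}, namely $\esp_{X\sim P(\cdot|s,a)}[V_{h+1}^t(X)]-\alpha_t(s,a)^\top v_{h+1}^t\leq (B_V+\zeta\norm{1}_{\cH_\psi})\lambda^{-1/2}\beta_t(\delta/2)\sigma_{\phi,t}(s,a)+2\zeta$, and noting that the bonus in $Q_h^t$ equals exactly $(B_V+\zeta\norm{1}_{\cH_\psi})\lambda^{-1/2}\beta_t(\delta/2)\sigma_{\phi,t}(s,a)$, one obtains $R(s,a)+\esp_{X\sim P(\cdot|s,a)}[V_{h+1}^t(X)]\leq Q_h^t(s,a)+2\zeta$. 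Chaining the two bounds gives $Q_h^\star(s,a)\leq Q_h^t(s,a)+2(H-h)\zeta$, closing the induction.

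The main subtlety, and where I would take most care, is directional: \autoref{lem:approx-Q-values} is stated as a one-sided upper bound on $Q_h^t$ minus the Bellman backup, whereas optimism requires the opposite inequality. The resolution is that the proof of that lemma actually delivers the symmetric bound \autoref{eq:expectation-approximation}, so the needed lower direction is available for free, with the exploration bonus precisely calibrated to cancel the $B_V+\zeta\norm{1}_{\cH_\psi}$ term and leave only the irreducible per-step misspecification slack $2\zeta$. The remaining technical point is the truncation inequality $\min\{H,x+c\}\leq\min\{H,x\}+c$, which is elementary but essential for carrying the accumulated error cleanly through the $\min\{H,\cdot\}$ clipping in the value-function recursion.
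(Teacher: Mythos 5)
Your proof is correct and follows essentially the same route as the paper's: backward induction on $h$, propagating the inductive bound through the $\min\{H,\cdot\}$ truncation to get $V_{h+1}^\star \leq V_{h+1}^t + 2(H-h-1)\zeta$, and then invoking the two-sided bound \autoref{eq:expectation-approximation} from the proof of \autoref{lem:approx-Q-values} so that the exploration bonus in \autoref{eq:efficient-optimistic-planning-approx} absorbs the estimation error, leaving the extra $2\zeta$ per step. The ``directional subtlety'' you flag is resolved exactly as in the paper, which likewise cites \autoref{eq:expectation-approximation} rather than the one-sided statement of \autoref{lem:approx-Q-values}.
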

\begin{proof}
We prove the lemma by induction on $h$. When $h = H$, the result holds by definition. Now assume that it holds for some $h'=h+1$, where $1\leq h < H$. This implies that
\begin{align*}
\forall s\in\cS,\quad V_{h+1}^t(s) &= \min \left\lbrace H, \max_{a \in \cA}Q_{h+1}^t(s,a)]\right \rbrace\\
&\geq  \min \left\lbrace H, \max_{a \in \cA}Q_{h+1}^\star(s,a)]\right \rbrace -2(H-h-1)\zeta= V^{\star}_{h+1}(s)-2(H-h-1)\zeta~.    
\end{align*}
We then have, for all $(s,a) \in \cS \times \cA$, that
\beqa
Q^\star_{h}(s,a) &=& R(s,a) + \esp_{X \sim P(\cdot|s,a)}\left[V_{h+1}^{\star}(X)\right]\nonumber\\
& \leq &  R(s,a) + \esp_{X \sim P(\cdot|s,a)}\left[V_{h+1}^{t}(X)\right]+2(H-h-1)\zeta~.
\label{eq:induction-unnormalised}
\eeqa
Using \autoref{eq:expectation-approximation} in the proof of \autoref{lem:approx-Q-values}, we now see that
\begin{equation*}
    \esp_{X \sim P(\cdot | s,a)}[V_{h+1}^t(X)] \leq  \alpha_t(s,a)^\top  {v}_{h+1}^t + 
    \left( B_V + \zeta\norm{  1}_{\cH_\psi}  \right) \lambda^{-1/2}\beta_t(\delta/2)\sigma_{\phi,t}(s,a) + 2\zeta
\end{equation*}
which holds as $\Theta_P \in \cC_t$. We then have from \autoref{eq:induction-unnormalised} that
\begin{equation*}
    \begin{split}
        Q^\star_{h}(s,a) &\leq R(s,a) +  \alpha_t(s,a)^\top  {v}_{h+1}^t + 
        \left( B_V + \zeta\norm{  1}_{\cH_\psi}  \right) \lambda^{-1/2}\beta_t(\delta/2)\sigma_{\phi,t}(s,a) + 2(H-h)\zeta\\
        &= Q^t_{h}(s,a) + 2(H-h)\zeta~,
    \end{split}
\end{equation*}
which follows from the definition of $Q$-estimates.
\end{proof}
Given \autoref{lem:approx-Q-values} and \autoref{lem:approximate-optimism}, we can finally  prove \autoref{thm:cumulative-regret-final-approx}.
\subsubsection{Proof of \autoref{thm:cumulative-regret-final-approx}}

If $\Theta_P \in \cC_t$ for all $t \geq 1$, we have from \autoref{lem:approximate-optimism} that $V_1^\star(s_1^t) \leq V_1^t(s_1^t) + 2(H-1)\zeta$.
Then following similar steps as in the proof of \autoref{thm:cumulative-regret-final} and \autoref{lem:cumulative-regret}, with probability at least $1-\delta/2$, we have the following:
\begin{align*}
 \cR(N) &\leq \sum_{t \leq T,h\leq H}Q_h^t(s_h^t,a_h^t)-\left(R(s_h^t,a_h^t)+\esp_{X \sim P(\cdot|s_h^t,a_h^t)}\left[V_{h+1}^{t}(X)\right]\right)\\
 &\quad + 2H\sqrt{2N\log(2/\delta)}+2\zeta T(H-1)\\
&\leq 2\left(B_V+\zeta\norm{  1}_{\cH_\psi}\right)\sum_{t \leq T,h\leq H}\lambda^{-1/2}\beta_t(\delta/2)\sigma_{\phi,t}(s_h^t,a_h^t)+2\zeta TH+2H\sqrt{2N\log(2/\delta)}\\
&\quad +2\zeta T(H-1)\\
&\leq  2\left(B_V+\zeta\norm{  1}_{\cH_\psi}\right)\beta_T(\delta/2) \sum_{t\leq T,h\leq H}\lambda^{-1/2} \sigma_{\phi,t}(s_h^t,a_h^t) + 4\zeta TH  + 2H\sqrt{2N\log(2/\delta)}\\
& \leq  2\left(B_V+\zeta\norm{  1}_{\cH_\psi}\right)\alpha_{N,\delta} \sqrt{TH\sum_{t\leq T,h\leq H}\lambda^{-1} \sigma^2_{\phi,t}(s_h^t,a_h^t)} + 4\zeta TH  + 2H\sqrt{2N\log(2/\delta)}\\
&\leq  2\left(B_V+\zeta\norm{  1}_{\cH_\psi}\right)\alpha_{N,\delta} \sqrt{2(1+\lambda^{-1}B_\phi^2 H)N\gamma_N} + 4\zeta N  + 2H\sqrt{2N\log(2/\delta)}~.   
\end{align*}
The second step follows from \autoref{lem:approx-Q-values}, the third step from monotonicity of $\beta_t(\delta)$ with $t$, the fourth step is due to Cauchy-Schwartz's inequality and the fact that $\beta_T(\delta/2) \leq \alpha_{N,\delta}$, and the final step follows from \autoref{lem:sum-of-predictive-variances}. The proof now can be completed using \autoref{lem:concentration} and taking a union bound.

\end{document}